\theoremstyle{definition}
\newcommand{\inner}[2]{\langle{#1}, {#2}\rangle_{N, M}}
\def\E{\mathbb{E}}
\def\R{\mathcal{R}}
\def\Normal{\mathcal{N}}
\newcommand{\vsa}{\vspace*{-0.28cm}}
\newcommand{\vsb}{\vspace*{-0.19cm}}
\newcommand{\mourmeth}{\text{LRMDS}}
\newcommand{\ourmeth}{$\mourmeth$\xspace}
\begin{document}

\title{Low Rank Multi-Dictionary Selection at Scale}


\author{Boya Ma}
\affiliation{%
  \institution{University at Albany, State University of New York}
  \department{Department of Computer Science}
  \streetaddress{1400 Washington Ave}
  \city{Albany}
  \state{New York}
  \country{USA}
  \postcode{12203}}
\email{bma@albany.edu}

\author{Maxwell McNeil}
\affiliation{%
  \institution{University at Albany, State University of New York}
  \department{Department of Computer Science}
  \streetaddress{1400 Washington Ave}
  \city{Albany}
  \state{New York}
  \country{USA}
  \postcode{12203}}
\email{mmcneil2@albany.edu}

\author{Abram Magner}
\affiliation{%
  \institution{University at Albany, State University of New York}
  \department{Department of Computer Science}
  \streetaddress{1400 Washington Ave}
  \city{Albany}
  \state{New York}
  \country{USA}
  \postcode{12203}}
  \email{amagner@albany.edu}

\author{Petko Bogdanov}
\affiliation{%
  \institution{University at Albany, State University of New York}
  \department{Department of Computer Science}
  \streetaddress{1400 Washington Ave}
  \city{Albany}
  \state{New York}
  \country{USA}
  \postcode{12203}}
\email{pbogdanov@albany.edu}

\renewcommand{\shortauthors}{Ma et al.}

\begin{abstract}


The sparse dictionary coding framework represents signals as a linear combination of a few predefined dictionary atoms. It has been employed for images, time series, graph signals and recently for 2-way (or 2D) spatio-temporal data employing jointly temporal and spatial dictionaries. Large and over-complete dictionaries enable high-quality models, but also pose scalability challenges which are exacerbated in multi-dictionary settings. Hence, an important problem that we address in this paper is: \emph{How to scale multi-dictionary coding for large dictionaries and datasets?} 

We propose a multi-dictionary atom selection technique for low-rank sparse coding named \ourmeth. To enable scalability to large dictionaries and datasets, it progressively selects groups of row-column atom pairs based on their alignment with the data and performs convex relaxation coding via the corresponding sub-dictionaries. We demonstrate both theoretically and experimentally that when the data has a low-rank encoding with a sparse subset of the atoms, \ourmeth is able to select them with strong guarantees under mild assumptions. Furthermore, we demonstrate the scalability and quality of \ourmeth in both synthetic and real-world datasets and for a range of coding dictionaries. It achieves $3\times$ to $10\times$ speed-up compared to baselines, while obtaining up to two orders of magnitude improvement in representation quality on some of the real world datasets given a fixed target number of atoms. \vsa
\end{abstract}

\begin{CCSXML}
<ccs2012>
   <concept>
       <concept_id>10002951.10003227.10003351</concept_id>
       <concept_desc>Information systems~Data mining</concept_desc>
       <concept_significance>500</concept_significance>
       </concept>
 </ccs2012>
\end{CCSXML}

\ccsdesc[500]{Information systems~Data mining}

\keywords{sparse coding, dictionary selection, low rank methods}


\maketitle

\section{Introduction}


Sparse coding methods represent data as a linear combination of a predefined basis (called atoms) arranged in a dictionary~\cite{rubinstein2010dictionaries}. Dictionaries are either derived analytically, for example, discrete Fourier transform, Wavelets, Ramanujan periodic basis~\cite{tennetiTSP2015} or learned from data~\cite{tovsic2011dictionary}. A key assumption in sparse coding is that real-world signals are sparse (or compressive) and can be represented via a small subset of dictionary atoms. 
Sparse coding has been widely adopted in signal processing~\cite{
rubinstein2010dictionaries}, 
machine learning~\cite{maurer2013sparse},
time series analysis~\cite{zhang2021aurora}, image processing~\cite{elad2006image}, and computer vision~\cite{wright2008robust} among others. 

\begin{figure}[t]
    \vsa
   \footnotesize
    \centering
    \subfigure [2D low rank coding for user-product data]
    {
        \includegraphics[width=0.9\linewidth]{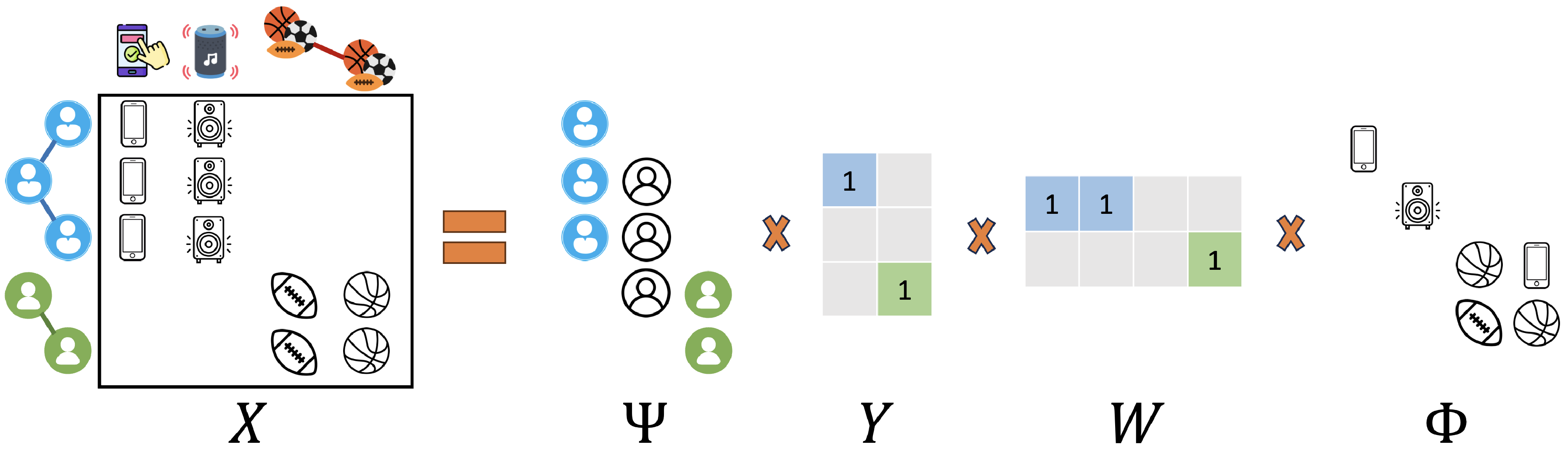}
        \label{fig:2d-coding}
    }
    \subfigure [RMSE v.s. time in road traffic data]
    {
        \includegraphics[width=0.9\linewidth]{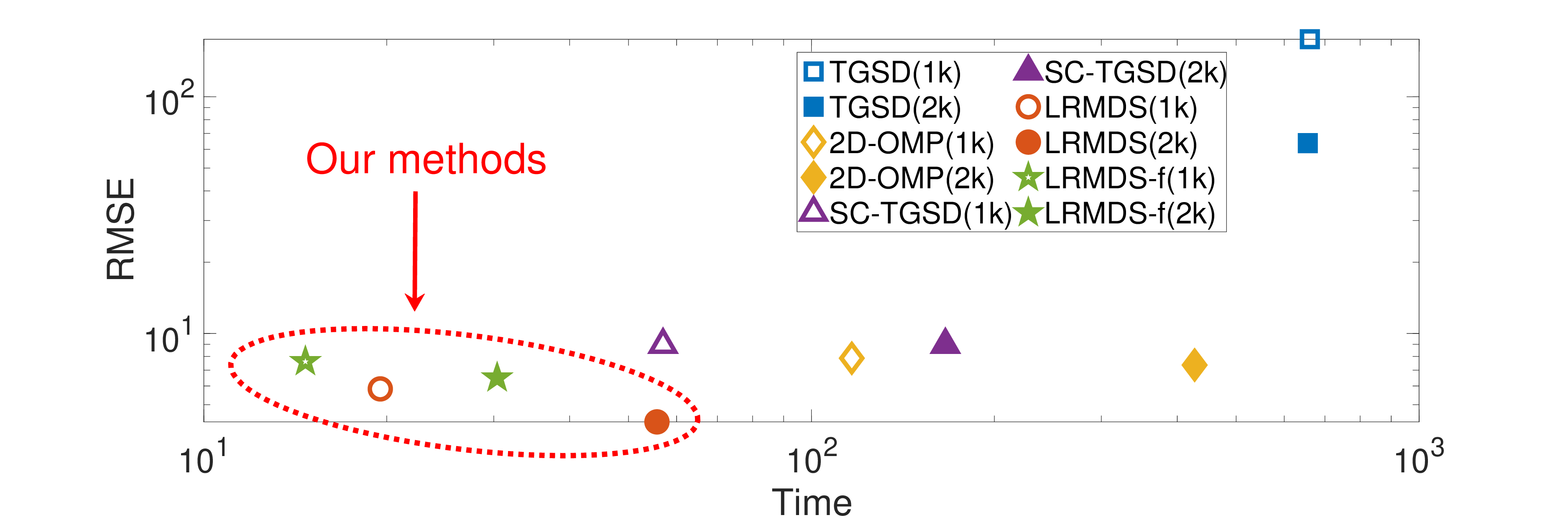}
        \label{fig:intro}
    }
    \caption{\footnotesize \subref{fig:2d-coding} 2D low rank coding example for user-product preference data. The left $\Psi$ and right $\Phi$ dictionaries are derived from user and product association graphs and the goal is to encode the data sparsely via sparse and low-rank coefficient matrices $Y,W$. Our method \ourmeth sub-selects the dictionary atoms on both sides to speed up the coding process. \subref{fig:intro} Comparison of competing techniques on a Road traffic dataset. Variants of \ourmeth outperform all baselines in both representation quality (RMSE) and running time (best regime in the lower-left corner).  
    \vsa}
    \label{fig:intro2}
\end{figure}


Existing approaches, depending on their sparsity-promoting functions, fall in three main categories~\cite{marques2018review}: convex relaxation~\cite{chen2001atomic}, 
non-convex algorithms 
~\cite{ji2008bayesian}, 
and greedy strategies based on matching pursuit~\cite{pati1993orthogonal}. 
Most existing work focuses on 1D (vector) signals such as time series~\cite{tennetiTSP2015} and graph signals~\cite{dong2020graph}. More recent approaches employ sparse coding for multi-way datasets such as images~\cite{fang20122d,grad_proj_2D,robust_2D,old_2D}, spatio-temporal data~\cite{TGSD} and higher order tensors~\cite{MDTD_full}. These 2D and higher-D methods employ separate dictionaries for the different modes (dimensions) of the data. Convex relaxation 2D approaches are typically efficient in practice, but their runtime significantly increases with the size of dictionaries and datasets and they also require careful tuning of hyper-parameters to precisely control the density of encoding coefficients~\cite{TGSD}. Greedy approaches recover a desired number of coefficients (model size), but re-estimate all coding coefficients as new ones are added, and thus do not scale to large model sizes~\cite{fang20122d}. 
Fig.~\ref{fig:2d-coding} demonstrates the 2D setting in the context of user-product purchase data by employing a separate user and product graph dictionaries (e.g., graph Fourier dictionaries~\cite{shuman2013emerging}) based on the corresponding friendship/association graphs.  

Some multi-way techniques further model the coding matrix as low-rank~\cite{2D_low_rank_completion,MDTD_full,TGSD} which enables
improved performance due to sharing of atom-specific patterns within factors. This modeling assumption is demonstrated in the toy example from Fig.~\ref{fig:2d-coding}. Given the purchase history of users, represented as one-hot encoded icons, and association graphs (e.g., user-user friendship and product similarity graphs), one can define graph-based user $\Psi$ and product $\Phi$ dictionaries which represent natural communities for each of the two data dimensions. If purchase behaviors ``conform'' to user and product communities (e.g., first three users purchase electronics while the remaining two purchase sports products), then the purchase data can be described efficiently via a few sparse factors as demonstrated in the $Y,W$ coding matrices.     
For example, the user factors in $Y$ require only two coefficients to represent the corresponding user groups, and similarly $W$ represent groups of products via three coefficients. 

While (low-rank) 2D sparse coding is advantageous and widely applicable, existing methods do not scale to large dictionaries and data. The 2D-OMP approach~\cite{fang20122d} 
composes atoms as outer products of left and right atoms and greedily selects the best aligning pairs for encoding one at a time. Low rank approaches~\cite{TGSD,MDTD_full} adopt convex relaxation based on alternating directions methods of multipliers (ADMM). Both groups suffer from poor scalability with the size of the employed dictionaries. 2D-OMP considers a quadratic number of atom combinations while low-rank approaches rely on inversion of matrices whose size depend on that of the employed dictionaries.
While this challenge of limited scalability has been addressed in the 1D sparse coding scenario via dictionary screening~\cite{xiang2016screening} and greedy dictionary selection~\cite{fujii2018fast}, these methods are not readily applicable to the 2D scenario. Furthermore, as we demonstrate experimentally, naive generalizations of dictionary screening algorithms for the 2D setting result in limited representation accuracy and scalability. 

We propose a low-rank multi-dictionary selection and coding approach for 2D data called \ourmeth. Our approach is general, scalable and theoretically justified. To scale to large dictionary sizes, it iteratively performs adaptive joint dictionary sub-selection and efficient low-rank coding based on convex optimization. \ourmeth iteratively improves the encoding by adding dictionary atoms as needed in rounds. 
We prove theoretically and demonstrate empirically that if the input data conforms to a low-rank coding model via a sparse subset of atoms, \ourmeth is guaranteed to select these atoms in noisy regimes.
An experimental snapshot showcasing the advantages of \ourmeth's variants is presented in Fig.~\ref{fig:intro} for a real-world sensor network dataset. In terms of representation error (vertical axis) and running time (horizontal axis), variants of our method (\ourmeth and \ourmeth-f) occupy the lower left corner which is the optimal regime. 
We experimentally demonstrate similar advantageous behavior on 3 other datasets detailed in the evaluation section. While in this work we focus on evaluating dictionary selection in 2-way (matrix) data, we believe that our framework can be extended to multi-dictionary settings (tensor data) , though we leave such evaluation for future investigation. Our contributions in this paper are as follows:

\noindent{\bf $\bullet$ Novelty:} To the best of our knowledge, \ourmeth is the first dictionary selection method for multi-dictionary sparse coding.

\noindent{\bf $\bullet$ Scalability:} Our approach scales better than alternatives on large real-world datasets and when employing large dictionaries. 

\noindent{\bf $\bullet$ Accuracy:} \ourmeth consistently produces solutions of lower representational error compared to the closest baselines from the literature for a fixed number of coding coefficients (models size).

\noindent{\bf $\bullet$ Theoretical guarantees:} We prove that \ourmeth's dictionary selection optimally identifies the necessary atoms for low-rank components in the input data in the presence of noise.

\section{Related work}

{\noindent \bf Sparse coding}
is widely employed in signal processing~\cite{zhang2015survey,rubinstein2010dictionaries}, image analysis~\cite{elad2006image} and computer vision~\cite{wright2008robust}. Existing methods can be grouped into three main categories: convex optimization solutions, non-convex techniques, and greedy algorithms~\cite{marques2018review}. Relaxation techniques impose sparsity on the coding coefficients via L1 regularizers~\cite{TGSD,MDTD_full}, while 
greedy algorithms select one atom at a time~\cite{wang2012generalized,de2017sparsity,lee2016sparse}. Most existing methods focus on 1D signals while our focus in this paper is on 2D signals.

{\noindent \bf 2D and multi-way coding} methods generalize the one dimensional setting by employing separate dictionaries for each dimension of the data~\cite{fang20122d,old_2D,zhang2017joint,TGSD,MDTD_full}. Some methods in this group place no assumptions on the rank of the encoding matrix~\cite{old_2D,zhang2017joint,fang20122d,2D_low_rank_completion}, while others employ a low-rank model for the encoding matrix~\cite{TGSD, MDTD_full}. Most related to \ourmeth among above the above are 2D-OMP~\cite{fang20122d} which also utilizes a greedy projection to select atoms, and  
TGSD~\cite{TGSD} as it also enforces that learned coding coefficients are low rank for 2D data. 
We elaborate further on these similarities in the following section and demonstrate a superior performance of \ourmeth over these baselines in the experimental section. 



\noindent{\bf Dictionary screening and selection.} 
Dictionary screening~\cite{xiang2016screening} is a suite of methods/bounds for ``discarding'' dictionary atoms of 0 encoding weights at a given sparsity level with the goal of 
reducing the running time of the encoding process. These techniques are limited to 1D data (i.e., only one dictionary), and are not immediately extendable to the two-dictionary setting. We include naive extensions to 2D by creating composite (pairwise) atoms as baselines and demonstrate that they do not scale well to large-dictionaries due to the quadratic space of possible composite atoms. 
There is also work on greedy atom selection for the 1D case~\cite{fujii2018fast} and our method can be viewed as a generalization of such techniques to the 2D sparse coding setting.
\section{Preliminaries} \label{sec:prelim}

Before we define our problem of low rank sub-dictionary selection, we introduce necessary preliminaries and notation. 
The goal of 1D sparse coding is to represent a signal via a single (column) dictionary $\Psi \in \mathcal{R}^{N \times I}$ optimizing the following objective:  
$$\min_y f(y)~~\text{s.t.}~~ x=\Psi y,\vsb$$
where $x \in \mathcal{R}^{N}$ represents the given signal,  $y \in \mathcal{R}^{I}$ is the learned encoding and $f(y)$ is a sparsity promoting function (often the $L_1$ norm). A popular greedy strategy to solve the problem, particularly when the dictionary forms an over-complete basis, is the orthogonal matching pursuit (OMP)~\cite{pati1993orthogonal}. The OMP algorithm maintains a residual of the signal $r$ that is not yet represented, and proceeds in greedy steps to identify the dictionary atom best aligned to the residual:
$\psi_t = {\mathrm{argmax}}_{\psi_i} 
 (r^T \psi_i)$, where $i \in [1, I]$, and $\psi_i$ is the $i$-th atom in $\Psi$. The selected atom $\psi_t$ at step $t$ is appended to the result set, the signal is re-encoded and the residual re-computed. The process continues until a desired number of atoms are employed, while satisfying the sparsity function $f(y)$.




In this paper we consider the 2D setting involving two dictionaries. The input to our problem is a real valued data matrix $X \in \mathcal{R}^{N \times M}$ which can be represented via two dictionaries: a left (column) dictionary $\Psi \in \mathcal{R}^{N \times I}$ and a right (row) dictionary $\Phi^T \in \mathcal{R}^{J \times M}$, where $I$ is the number of atoms in $\Psi$ and $J$ is the number of atoms in $\Phi^T$. It is important to note that both analytical and data-driven dictionaries can be employed ~\cite{rubinstein2010dictionaries}. The 2D problem generalizes that from the 1D case as follows:
\begin{equation}
\min_Z  f(Z)~~\text{s.t.}~~ X=\Psi Z \Phi^T, \label{eq:2dform}\vsb \end{equation}
where $Z\in \mathcal{R}^{I \times J}$ is an encoding matrix, and $f(Z)$ is the corresponding sparsity promoting function. Intuitively this decomposition facilitates a representation which aligns to dictionaries across both modes (dimensions) instead of just one. An early solution for the problem in Eq.~\ref{eq:2dform} was motivated by decomposing a 2D image via copies of the same dictionary, i.e. $\Psi = \Phi$~\cite{fang20122d}. 
It generalizes OMP to obtain a 2D-OMP algorithm by forming 2D atoms $B_{i,j}=\psi_i^T \phi_j$ as outer products of individual left $\psi_i$ and right $\phi_j$ atoms, and by selecting 2D atoms based on their alignment 
with the residual $R$ at every iteration. 
Importantly, while sparse, this solution might in general result in high-rank encoding matrix $Z$ and as we demonstrate experimentally it does not scale to large spatio-temporal datasets and large dictionaries.

A recent method called TGSD~\cite{TGSD} employs 2D sparse coding for general spatio-temporal datasets, and specifically temporal graph signals, where graph and temporal dictionaries are employed as $\Psi$ and $\Phi$ respectively. Another major difference from the 2D-OMP solution is that in order to enforce a low-rank solution, TGSD considers a model with two ``slim'' dictionary-specific encoding matrices $Y \in \mathcal{R}^{I \times r}$ and $W \in \mathcal{R}^{r \times J}$, where the middle dimension $r$ restricts the rank of the encoding $X = \Psi Y W \Phi^T$. The resulting objective is:
\begin{equation}
    \begin{aligned}
        \underset{Y,W} {\mathrm{argmin}} || X - \Psi Y W \Phi^T||_F^2 + \lambda_1 || Y ||_1 + \lambda_2 || W ||_1,
         \label{eq:TGSD}
    \end{aligned}
\end{equation}
where sparsity via an $L_1$ norm is enforced for both $Y$ and $W$.  Here, $\|M \|_{F}$ denotes the Frobenius norm of the matrix $M$.
The solution adopts an ADMM convex relaxation approach (unlike the greedy solution of 2D-OMP) producing an explainable decomposition model relating non-zero coefficients to periodic behavior and active spatial/graph domains in the data~\cite{TGSD}. However, this solution also does not scale to large coding dictionaries and datasets---a challenge we address in our solution. 

\section{Problem formulation and solutions}

\subsection{Problem formulation}
The existing methods for multi-dictionary sparse coding, TGSD and 2D-OMP, do not scale to large datasets and dictionaries for different reasons. 2D-OMP selects one atom pair from each dictionary at a time, re-encodes the data and proceeds with the data residual. While each step is initially fast, the number of atom pairs grows quadratically with the size of the dictionaries. In addition, when the data has a low-rank representation through a subset of atoms, 2D-OMP is not guaranteed to uncover it due to its formulation employing an unconstrained coding matrix $Z$ of quadratic size in the dictionary atoms. Different from that, TGSD is a low rank model, however, its optimization relies on inverting matrices whose sizes are determined by the dictionaries, hence it also does not scale with the size of the dictionaries. The scalability limitations are further exacerbated by the use of over-complete dictionaries which has been shown to produce accurate and succinct models in various signal processing and machine learning applications.  

Our goal is to enable a (i) scalable, (ii) low-rank, (iii) multi-dictionary sparse coding, accommodating large over-complete dictionaries without compromising the quality of the learned model by subs-electing the dictionary atoms. An additional goal is applicability to any 2D signals, including spatio-temporal data, graph signals evolving over time, images and others, by employing appropriate dictionaries for the corresponding data dimensions. Based on the above intuition we formalize our problem as follows:

\noindent \emph{\bf Problem definition:}
    \emph{Given a 2D signal $X$, large potentially over-complete dictionaries $\Psi$ and $\Phi$, and a desired rank $r$, fit a sparse low-rank model $X\approx \Psi_s Y W \Phi_s^T$, employing a subset of the dictionary atoms $\Psi_s,\Phi_s$ and coding matrices of $Y,W$ with inner dimension $r$.}


\subsection{\ourmeth: iterative atom selection and coding}

Both TGSD and 2D-OMP with minor modifications can be adopted for our problem formulation, however, as we discussed earlier they have limited scalability. The key idea behind our approach is to sub-select both dictionaries jointly and fit a low-rank encoding model through the reduced dictionaries. Hence, \emph{a key assumption in \ourmeth is that the data can be represented well by a low-rank encoding matrix and by employing a subset of atoms from the left and the right dictionaries}. This setting is illustrated in Fig.~\ref{fig:2d-coding} where only a subset of atoms from $\Psi$ and $\Phi$ are necessary to represent the data. There are two main steps to obtain \ourmeth's representation: (i) identify an appropriate subset of dictionary atoms which align well with the data and (ii) employ them to perform a low rank dictionary decomposition. We repeatedly perform these steps against the ``unexplained'' residual of the data after each iteration. As a result, our approach can be considered a combination of a greedy sub-dictionary identification followed by a convex encoding step. Importantly, we demonstrate that the greedy atom selection step recovers the optimal atoms to best encode a dataset with low-rank and sparse encoding in noisy regimes under mild assumptions.





To jointly sub-select atoms from both dictionaries, we consider all pairwise 2D atoms of the form $B_{i,j} = \psi_i \phi_j^T, \forall i \in [1, I], \forall j \in [1,J]$ and the magnitude of the projection of the data on them. Specifically we maintain a residual matrix $R\in \mathcal{R}^{N \times M}$ initialized as the input data $X$ and subsequently capturing the signal not yet represented by \ourmeth. The alignment scores of atom pairs are computed as:
\begin{equation}
    \label{eq:align}
    P_{i,j} = \frac{\langle R, B_{i,j} \rangle}{|| B_{i,j} ||_F}, \implies  P=\hat{\Psi}^TR\hat{\Phi},
\end{equation}
where on the left-hand-side $\langle R, B_{i,j} \rangle \triangleq \psi_i^T R \phi_j$ is the alignment of the $i$-th left atom and the $j-th$ right atom with the residual,  and $|| B_{i,j}||_F\triangleq \psi_i \phi_j^T$ is a normalization factor based on the Frobenius norm of the atoms' outer product. The right-hand-side is the equivalent to the left for all $P_{i,j}$ when using per-atom normalized dictionaries $\hat{\Psi}$ and $\hat{\Phi}$ (details in the supplement). At each iteration, our method selects the top $k$ total atoms from a combination of left or right dictionary atoms with respect to this alignment $P$.







Once atoms are selected we calculate a low-rank decomposition of the data via encoding matrices $Y \in \mathcal{R}^{I_s \times r} $ and $W  \in \mathcal{R}^{r \times J_s}$ where $r$ represents the rank of the model, while $I_s$ and $J_s$ are the number of atoms selected from $\Psi$ and $\Phi$ respectively. It is important to note, that the sparsity of the representation is ensured thanks to the atom sub-selection and the low-rank (via two encoding matrices) model, hence in this step we do not further enforce sparsity on the encoding coefficients as it is typical to convex relaxation approaches. This modeling decision enables scalable direct solutions as opposed to more complicated ADMM optimizers. The encoding problems has the following form: 
\begin{equation}
    \begin{aligned}
        \underset{Y,W} {\mathrm{argmin}} \hspace{0.1cm} & || R - \Psi_s Y W \Phi_s^T ||_F^2,
    \end{aligned} \label{eq:denseTGSD}
\end{equation}
where $\Psi_s$ and $\Phi_s$ are the subselected dictionaries and $R$ is the data residual originally initialized as $X$. We propose two alternating optimization schemes for the two variables $Y, W$ leading to two variants of \ourmeth (\ourmeth and \ourmeth-f). Both variants iteratively updated $Y$ and $W$ until convergence, however, \ourmeth-f does so faster but at the potential price of accuracy. Detailed explanation and derivations are available in the supplement.

\begin{algorithm} [!t]
\footnotesize
    \caption{Low Rank Multi-Dictionary Selection (\ourmeth)} 
	\begin{algorithmic}[1]
	    \State Input: Data $X$; dictionaries $\Psi$ and $\Phi$; atoms per iteration $k$, decomposition rank $r$
	    \State Output: Encoding matrices $Y, W$
		\State Initialize residual $R = X$
        \State \emph{ \emph{// Compute normalized dictionaries}}
        \State Compute $\hat{\Psi}:\{||\hat{\psi}_i||_2=1,\forall i\leq I\}$,  $\hat{\Phi}:\{||\hat{\phi}_j||_2=1,\forall j\leq J\}$ 
		\State Initialize sets of selected atoms: $I_s = \emptyset, J_s = \emptyset$ 
		\Repeat
            \State {\emph{// Dictionary sub-selection}}
            \State Let $P=\hat{\Psi}^TR\hat{\Phi}$ \Comment{Eq.~\ref{eq:align}}
            \State $cnt=0$
            \For {$P_{i,j}$ in descending order and while $cnt<k$}
                \State {\bf if }{$i\notin I_s$} {\bf then } {$I_s = I_s \cup i$; $cnt=cnt+1$} 
                \State {\bf if }{$j\notin J_s$} {\bf then } $J_s = J_s \cup j$; $cnt = cnt+1$
            \EndFor
			\State Sub-select dictionaries: $\Psi_s = \Psi (I_s)$, $\Phi_s = \Phi (J_s)$  
            \State{\emph{// Encoding based on $\Psi_s,\Phi_s$}}
			\State Initialize randomly $Y_{|I_s| \times r}$ and $W_{r \times |J_s|}$
			\If {\ourmeth}
                \State Pre-compute $\Psi_s^{(inv)} = \Psi_s^\dagger$ and $\Phi_s^{(inv)} = \Phi_s^\dagger$
                \Repeat
                    \State $Y = \Psi_s^{(inv)}  X (W \Phi_s)^\dagger$ 
                    \State $W = (\Psi_s Y )^\dagger   X \Phi_s^{(inv)}$ 
                \Until{$Y, W$ converge}
            \ElsIf {\ourmeth-f}
                \State Pre-compute $C = \Psi^{\dagger}  X \Phi^{\dagger}  $
                \Repeat
                    \State $Y =C  W^\dagger $ 
                    \State $W = Y^\dagger C $ 
                \Until{$Y, W$ converge}
            \EndIf
			\State $R = X - \Psi_s Y W \Phi_s^T$
		\Until{$||R||_F$ converges to $0$ or after a fixed number of iterations}
	\end{algorithmic} 
        \label{alg:LRMDS}
\end{algorithm}

\noindent{\bf The overall  \ourmeth algorithm}. The steps of the complete algorithm (corresponding to both versions of our method) are listed in Alg.~\ref{alg:LRMDS}. The inputs include the data $X$, left $\Psi$ and right $\Phi$ dictionaries and parameters for the number of atoms to select per iteration $k$ and the rank $r$ of the encoding, i.e., the inner dimension of the two output encoding matrices $Y, W$. In the initialization steps, we first compute per-atom normalized versions of the dictionaries needed for the alignment scoring (Steps 4-5) and initialize empty sets of atom indices for both dictionaries (Step 6). Dictionary sub-selection takes place in Steps 8-15. In Steps 10-14 we add the top aligned atoms $i$ and $j$ to the set of select atoms $I_s$ and $J_s$ so long as they are not already selected. We repeat until a total of $k$ new atoms are selected. Finally, we sub-select the relevant atoms from their dictionaries in Step 15 to create sub-dictionaries $\Psi_s$ and $\Phi_s$.

\begin{table*}[th]
\setlength\tabcolsep{1 pt}
\footnotesize
\centering
 \begin{tabular}{|c| c| c| c| c |c| c | c| c | c| c | c| c | c| c | c|} 
 \hline
 {\multirow{2}{*}{\bf{Dataset}}} & {\multirow{2}{*}{\bf{\#Nodes}}}  & \bf{\#Time}  & {\multirow{2}{*}{\bf{Res.}}} & {\bf Associated} &  \multicolumn{2}{|c|}{\bf{TGSD}} & \multicolumn{2}{|c|}{\bf{2D-OMP}} & \multicolumn{2}{|c|}{\bf{SC-TGSD}} & \multicolumn{2}{|c|}{\bf{\ourmeth}} & \multicolumn{2}{|c|}{\bf{\ourmeth-f}}\\
 \cline{6-15}
  &   &  \bf{Steps} &  & {\bf Graph} & \bf{RMSE} & \bf{Time} & \bf{RMSE} & \bf{Time} & \bf{RMSE} & \bf{Time} & \bf{RMSE} & \bf{Time} & \bf{RMSE} & \bf{Time}\\
  \hline
  Synthetic  & 1k-4k & 1k-8k & - & SBM & 0.06 & 146 & 0.02 & 3196 & 0.03 & 61.7 & \bf{0.009} & 31.3 & \bf{0.009} & \bf{30.1}\\ 
 \hline
 Road~\cite{LA_traffic}  & 1923   & 920 & 1h & Road network & 17.8 & 285 & 10.1 & 228 & 12.1 & 32 & \bf{5.4} & 37  & 5.8 & \bf{15} \\ 
 \hline
   Twitch ~\cite{twitch}  & 78,389   & 512 & 1h & Shared audience & 1.38  & 8,413  & 1.36  & 341,294  & 1.35 & 5,353  & \bf{1.23}  & 9655  & 1.26  & \bf{4,280} \\ 
 \hline
  Wiki~\cite{Wiki} & 999   & 792 & 1h  & Co-clicks & 15.7 & 422 & 9.7 & 1390 & 11.7 & 41 & \bf{2.8} & 52 & 3.5 & \bf{37}\\
  \hline
  Covid~\cite{covid} & 3047 & 678 & 1d  & Spatial k-NN & 31969 & 551 & 23908 & 2668 & 21320 & 267 & \bf{204} & 145 & 228 & \bf{88}\\
  \hline
\end{tabular}
 \caption{\footnotesize Statistics of the datasets used for evaluation (left sub-table) and quality and running times for competing techniques (right sub-table). All datasets have a temporal and graph mode with corresponding dictionaries. The temporal resolution of each dataset is specified in column Res while the following column lists the kind of associated graph.
RMSE and timing results of all competing methods using the same number of atoms are listed in the remaining columns. The target number of atoms are as follows: Synthetic with ground truth (GT) atom count (200,  GW+RS test in Fig.~\ref{fig:dict_size_trends});  Road, Twitch, and Covid: 40\% of total atoms; Twitch: 20\% of total atoms.}  \vspace{-0.3in}
\label{table:datasets}
\end{table*}

Steps 16-30 perform the low-rank coding by estimating $Y$ and $W$ based on the sub-dictionaries $\Psi_s$ and $\Phi_s$. We list the iterative updates of both versions \ourmeth (Steps 18-23) and \ourmeth-f (Steps 24-29) of our method. For the former, we pre-compute the pseudo inverses of the subselected dictionaries $\Psi^\dagger, \Phi^\dagger$ and iterate between close form updates of $Y$ and $W$ while in the latter we precompute the projection of the data on the pseudo inverses of the subselected dictionaries $\Psi^\dagger X \Phi^\dagger$ and perform simpler updates for the coefficients in $Y$ and $W$. Further discussion of the difference between the two versions and derivation details for the updates are available in the supplement. Finally, in Step 31 we re-calculate the residual matrix $R$. We repeat all steps until a fixed number of iterations is reached or if the norm of the residual ($||R||_F$) approaches zero. 

The overall complexity of \ourmeth is $O(t(q(N+M)r^2 + MJ_s^2 + NI_s^2))$ for \ourmeth or  $O(t(q(I_s+J_s)r^2 + MJ_s^2 + NI_s^2))$ when using \ourmeth-f where $t,q$ are the total number of iterations of the main loop and $Y,W$ updating. Our framework is designed to flexibly accommodate any dictionaries $\Phi$ and $\Psi$. Discussion of dictionaries employed for modes of different types (e.g., time series, graphs, etc.) can be found in the supplement.



\subsection{Dictionary subselection theoretical analysis}
\label{sec:theoretical-results}


In this section, we give an accuracy guarantee for the quality of our method's selection of top $k$ atoms in each step. when used to recover a low-rank signal matrix $R$ from a data matrix $\hat{R}=R+Q$ that includes Gaussian noise. Throughout, we assume without loss of generality that $M = o(\sqrt{N})$ as $N\to\infty$ (the argument applies equally well when the roles of $M$ and $N$ are switched). We describe this denoising problem setting and assumptions for our theoretical guarantee below.

\noindent{\bf Noise model: }
We will consider the recovery of a low-rank, sparse signal matrix $R$ from a noise-perturbed version 
$\hat{R} := R + Q$, where $Q$ is a matrix in $\R^{N\times M}$ with independent and identically distributed standard Gaussian entries, appropriately normalized.  Specifically, the assumption that $N \gg M$ implies that with high probability, each column of a standard Gaussian matrix has $L_2$ norm approximately $\sqrt{N}$.  Thus, we define the noise component as:
\begin{align}
    Q = \frac{\sigma}{\sqrt{NM}} \cdot \Normal(0, I_{N\times M}),
\end{align}
for some positive standard deviation $\sigma$.
With this normalization, $\|Q\|_F = \Theta(1)$, and thus has the same order of growth as $\|R\|_F$, with high probability.

\noindent{\bf Low rank and sparsity assumptions on $R$: }
We will assume that $R$ has rank $r$, which implies that $R$ has an expansion of the form
\begin{align}
    \label{expr:low-rank-expansion}
    R = \Psi  Y  W \Phi^T = \sum_{i=1}^I \sum_{j=1}^J p_{i,j} \psi_{i} \phi_{j}^T,
\end{align}
where $Y \in \R^{I\times r}$ and $W \in \R^{r \times J}$, and where the double-sum atom-based representation is under the assumption that $\Psi$ and $\Phi^T$ are not under-complete. 
Eq.~\ref{expr:low-rank-expansion} implies the following explicit $p_{i,j}$ form:
\begin{align}
    p_{i,j}
    = (Y_{i,\cdot}) (W_{\cdot, j})^T.
\end{align}
Furthermore, we will make the following sparsity assumption: there exist only $s = \Theta(1)$ dictionary coefficients $p_{i,j}$ in the expansion of $R$ that are nonzero, and these are $\Theta(1)$ uniformly in $N$ and $M$.

\noindent{\bf Assumption on approximate orthogonality of dictionary atoms: }
We next formulate an approximate orthogonality condition for the dictionary atoms.  To do so, we first recall
the definition of the $L_{\infty}$ operator norm of a matrix $M$:
\begin{align}
    \| M\|_{op,\infty} := \sup_{\|x\|_{\infty} = 1} \|Mx\|_{\infty}.
\end{align}
We will assume that the dictionaries $\Psi \in \R^{N\times I}, \Phi^{T} \in \R^{J\times M}$ are such that
$I \in [N, const \cdot N], J \in [M, const\cdot M]$ and that a subset of the atoms for each dictionary constitutes a basis for $\R^N$ and $\R^M$, respectively.
We fix an $\alpha \geq 0$, which may depend on $N$ and $M$.  We also define a matrix $\Sigma \in \R^{I\times I}$
collecting the pairwise inner products between dictionary elements in $\Psi$: namely,
$ 
    \Sigma_{i,j} := (\psi_{i})^T \cdot \psi_{j}.
$ 
We will assume that $\| \Sigma^{1/2} \|_{op,\infty} \leq \alpha = o(1)$.  We similarly define $\Gamma \in \R^{J\times J}$ for $\Phi^T$, with the same bound on $\| \Gamma^{1/2}\|_{op,\infty}$.  Intuitively, this operator norm upper bound translates to an upper bound on the sum of absolute values of inner products between dictionary atoms.  Thus, this constitutes an approximate orthogonality assumption.  Such assumptions are common in analyses of orthogonal matching pursuit, under the name of \emph{mutual incoherence} between dictionary atoms.  
See, e.g.,~\cite{Cai2010OrthogonalMP}.  We will also assume that the columns of $\Psi$ and the rows of $\Phi^T$ are normalized in $L_2$.

\noindent{\bf Statement of the accuracy guarantee: }
We finally state our main theoretical result.  We denote by $R_{reconst}$ the output of the top-$
\hat{k}$ atom selection algorithm with input data matrix $\hat{R}$ and dictionaries $\Psi, \Phi^T$.  Specifically, by this we mean that $R_{reconst}$ is the result of first choosing the $\hat{k}$ atoms (outer products of left and right dictionary elements) with the highest alignment scores with $\hat{R}$, then approximating $\hat{R}$ via a linear combination of the chosen atoms obtained by solving (\ref{eq:denseTGSD}).

\begin{theorem}[Accuracy guarantee for top-$k$ atom selection denoising]
    \label{thm:convergence-top-k}
    Let $N, M, \Psi, \Phi^T, R, Q, \hat{R}, R_{reconst}$ be as outlined above.  
    We then have that if $\hat{k} \geq s$ and $\hat{k} = \Theta(1)$, where $s$ is the sparsity parameter of the the signal matrix $R$, then 
    $ 
        \|R - R_{reconst}\|_{F}
        = o(\|R\|_{F}).
    $ 
\end{theorem}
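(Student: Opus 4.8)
The plan is to split the argument into two essentially independent parts: (i) \emph{selection correctness}, showing that with high probability the top-$\hat k$ atom pairs contain all $s$ support atoms of $R$, and (ii) \emph{reconstruction accuracy}, showing that the least-squares coding (\ref{eq:denseTGSD}) over \emph{any} selected atom set that contains the true support already yields error $o(\|R\|_F)$. Throughout I would write $T\subseteq[I]\times[J]$, $|T|=s$, for the support of the expansion (\ref{expr:low-rank-expansion}), note that $\|R\|_F=\Theta(1)$ (from $s=\Theta(1)$ coefficients of size $\Theta(1)$ and normalized atoms), and decompose each alignment score from (\ref{eq:align}) as $P_{i,j}=\psi_i^T\hat R\phi_j=\psi_i^T R\phi_j+\psi_i^T Q\phi_j$, i.e. a signal term plus a noise term.

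For the signal term, expanding $R$ via (\ref{expr:low-rank-expansion}) gives $\psi_i^T R\phi_j=\sum_{(k,l)\in T}p_{k,l}\,\Sigma_{i,k}\,\Gamma_{l,j}$. I would first isolate the diagonal contribution: for a true pair $(i,j)\in T$ the term $(k,l)=(i,j)$ equals $p_{i,j}\Sigma_{i,i}\Gamma_{j,j}=p_{i,j}=\Theta(1)$, while every other summand carries at least one off-diagonal factor $\Sigma_{i,k}$ or $\Gamma_{l,j}$; the approximate-orthogonality bounds $\|\Sigma^{1/2}\|_{op,\infty},\|\Gamma^{1/2}\|_{op,\infty}\le\alpha=o(1)$ control the aggregate of these, so with $s=\Theta(1)$ summands the cross-terms total $O(\alpha)=o(1)$ and the true-pair signal is $\Theta(1)$, whereas every non-support pair has signal $o(1)$. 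For the noise term, since atoms are unit-norm and $Q=\tfrac{\sigma}{\sqrt{NM}}\,\Normal(0,I_{N\times M})$, each $\psi_i^T Q\phi_j$ is mean-zero Gaussian of variance $\sigma^2/(NM)$; a union bound over the $IJ=\Theta(NM)$ pairs gives $\max_{i,j}|\psi_i^T Q\phi_j|=O\!\big(\sigma\sqrt{\log(NM)/(NM)}\big)=o(1)$ w.h.p. Combining, the $\Theta(1)$ gap between true ($\Theta(1)$) and false ($o(1)$) scores guarantees the $s$ largest scores are exactly the support atoms, and since $\hat k\ge s$ the selected set $\hat T$ satisfies $T\subseteq\hat T$.

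For the reconstruction step I would work in the subspace $\mathcal V:=\{\Psi_s Z\Phi_s^T:Z\in\R^{I_s\times J_s}\}$, of dimension $I_sJ_s\le\hat k^2=\Theta(1)$. Incoherence makes the Gram matrices of the selected atoms $I+o(1)$, hence $\Psi_s,\Phi_s$ have full column rank and the parametrization $\Psi_s Y W\Phi_s^T$ ranges over exactly the rank-$\le r$ elements of $\mathcal V$, one of which is $R$ itself. Letting $\Pi$ denote orthogonal projection onto $\mathcal V$, Pythagoras shows $R_{reconst}$ is the best rank-$r$ element of $\mathcal V$ approximating $\Pi\hat R=R+\Pi Q$, so by optimality $\|\Pi\hat R-R_{reconst}\|_F\le\|\Pi\hat R-R\|_F=\|\Pi Q\|_F$ and the triangle inequality yields $\|R-R_{reconst}\|_F\le 2\|\Pi Q\|_F$. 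Finally $\|\Pi Q\|_F^2$ is a $\chi^2_{d}$ variable scaled by $\sigma^2/(NM)$ with $d=\Theta(1)$, so $\|\Pi Q\|_F=O(\sigma/\sqrt{NM})=o(1)$; because $\mathcal V$ is chosen data-dependently I would make this uniform by a union bound over the $O((NM)^{\hat k})$ candidate subspaces, which leaves it $o(1)$. Since $\|R\|_F=\Theta(1)$, this gives $\|R-R_{reconst}\|_F=o(\|R\|_F)$.

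The hard part will be the uniform score separation: one must simultaneously keep the signal cross-terms subdominant through the operator-norm incoherence bound \emph{and} dominate the maximum of $\Theta(NM)$ correlated Gaussian noise scores, so that no spurious pair ever outranks a genuine one. A related subtlety is that the reconstruction subspace $\mathcal V$ depends on the noise through the selection, so the clean ``projected noise is low-dimensional'' estimate must be made uniform over all admissible selections via a union bound. The working regime $M=o(\sqrt N)$ is what keeps the noise normalization and these concentration estimates subdominant relative to the $\Theta(1)$ signal.
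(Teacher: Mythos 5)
Your proposal is correct, and it reaches the theorem by a genuinely different route than the paper. The paper never decomposes the alignment scores into signal and noise parts; instead it reduces the whole selection question to showing that the coefficients of the noise matrix $Q$ in \emph{any} dictionary expansion are uniformly $o(1)$: Lemma~\ref{lemma:inner-products-suffice} uses incoherence to show that alignment scores track expansion coefficients up to $O(M\alpha^2)$, and the noise inner products are then bounded via $|\langle Q, \psi_i\phi_j^T\rangle| \leq \sqrt{M}\max_k |\langle Q_{\cdot,k},\psi_i\rangle|$ together with a correlated-Gaussian maximum lemma (Lemma~\ref{lemma:aux-max-of-gaussians}, $\E[\|X\|_\infty]=O(\|\Sigma^{1/2}\|_{op,\infty}\sqrt{\log n})$) and Markov's inequality, yielding $O(\sqrt{M\log(MI)/N})$ --- this is precisely where the standing assumption $M=o(\sqrt{N})$ is consumed. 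Your direct union bound over the $\Theta(NM)$ pairwise scores is both simpler and sharper ($O(\sqrt{\log(NM)/(NM)})$, with no $\sqrt{M}$ loss and no use of $M=o(\sqrt{N})$ at that step), and correlations among scores cost nothing in a union bound, so the paper's correlated-maximum machinery becomes unnecessary. The more substantive difference is the reconstruction half: the paper asserts in a single sentence that once the support atoms are selected, $R_{reconst}$ differs from $R$ by $o(\|R\|_F)$, whereas your projection/Pythagoras argument with the $\chi^2$ bound on $\|\Pi Q\|_F$, made uniform over the polynomially many candidate selections, actually proves this step and correctly handles both the feasibility of $R$ under the rank-$r$ constraint and the noise-dependence of the selected subspace. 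What the paper's coefficient-based framing buys in exchange is the extension treated at the end of its appendix: since each residual of the iterated procedure is again a sparse expansion plus noise with uniformly $o(1)$ noise coefficients, the $\hat{k}<s$ multi-round case follows with a minor tweak, while your score-split argument would need to be re-verified round by round. One caveat you inherit from the paper rather than introduce: the assumption $\|\Sigma^{1/2}\|_{op,\infty}\le\alpha=o(1)$ is, read literally, inconsistent with unit-norm atoms (the unit Gram diagonal forces $\|\Sigma\|_{op,\infty}\ge 1$); both your proof and the paper's implicitly use the intended reading that the aggregated \emph{off-diagonal} inner products are $o(1)$.
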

In other words, the relative error in approximating $R$ by $R_{reconst}$ is $o(1)$ as $N\to\infty$.  That is, when the data consists of a Gaussian-noise perturbation of a low-rank signal matrix that is a sparse linear combination of dictionary atoms, and when the greedy atom selection algorithm chooses sufficiently many atoms,  
the signal matrix is recovered to within a vanishingly small relative error.
We prove Theorem~\ref{thm:convergence-top-k} in Appendix~\ref{sec:convergence-top-k-proof}.  We also extend the result to cover the case where $\hat{k} < s$ but the algorithm is run for sufficiently many iterations.

Our analysis provides a theoretical justification for the top k atom selection strategy as a recovery guarantee for a noise-perturbed low-rank and sparse signal matrix, which forms a subroutine of our method. We also demonstrate empirically that this recovery guarantee holds in Sec. \ref{sec:theor_exp}. The scalability of our technique comes from the fact that we are working with a few atoms as opposed to the complete dictionary and the theoretical analysis shows that this running time reduction affects minimally the model quality since the``true'' atoms necessary for encoding the noise-free version of the data are retained.

\section{Experimental evaluation}


Our experimental design focuses on the running time and the representation quality of competing methods with both variants of \ourmeth on synthetic and real-world datasets listed in Tbl.~\ref{table:datasets}. We also empirically confirm our theoretical results. We compare our approaches to state-of-the-art baselines for 2D sparse coding. We measure running time in seconds for execution on a dedicated Intel(R) Xeon(R) Gold 6138 CPU @ 2.00GHz and 251 GB memory server using MATLAB's R2019a 64-bit version. The representation quality is quantified as the root mean squared error (RMSE) between the data and the learned representation. It is important to note, that beyond representation quality, the low-rank sparse 2D coding model offers advantages in a number of downstream tasks as reported by baselines employing this or similar models~\cite{TGSD,fang20122d}. We focus our evaluation of scalability and representation quality as speed-up via dictionary sub-selection is the main contribution of our work. An implementation of \ourmeth is available at \url{www.cs.albany.edu/~petko/lab/code.html} and in the PySpady library \url{https://github.com/petkobogdanov/pyspady}.

\subsection{Datasets}\hfill

\noindent{\bf Synthetic data generation.} Our synthetic data is generated based on the low-rank encoding model $\Psi_s Y W \Phi_s^T + \epsilon$, where $\Psi_s$ and $\Phi_s$ are small (ground truth) randomly selected subsets of the overall dictionaries and the corresponding coding coefficients for those atoms in $Y$ and $W$ are also randomly sampled with $\epsilon$-mean random noise added to the input.
\noindent{\bf Real-world datasets.} We employ real-world datasets with temporal and spatial dimensions to evaluate competing techniques employing both time and graph dictionaries. The datasets span a variety of domains: data from content exchange within \emph{Twitch}~\cite{twitch}, web traffic data \emph{Wiki}~\cite{Wiki}, spatio-temporal disease spread over time in the \emph{Covid}~\cite{covid} dataset, and sensor network data from road traffic \emph{Road}~\cite{LA_traffic}. We provide further details on data generation and real-world evaluation datasets in the supplement.



\begin{figure*}
    \vsa\vsa
   \footnotesize
    \centering
    \subfigure [RMSE]
    {
        \includegraphics[width=0.22\linewidth]{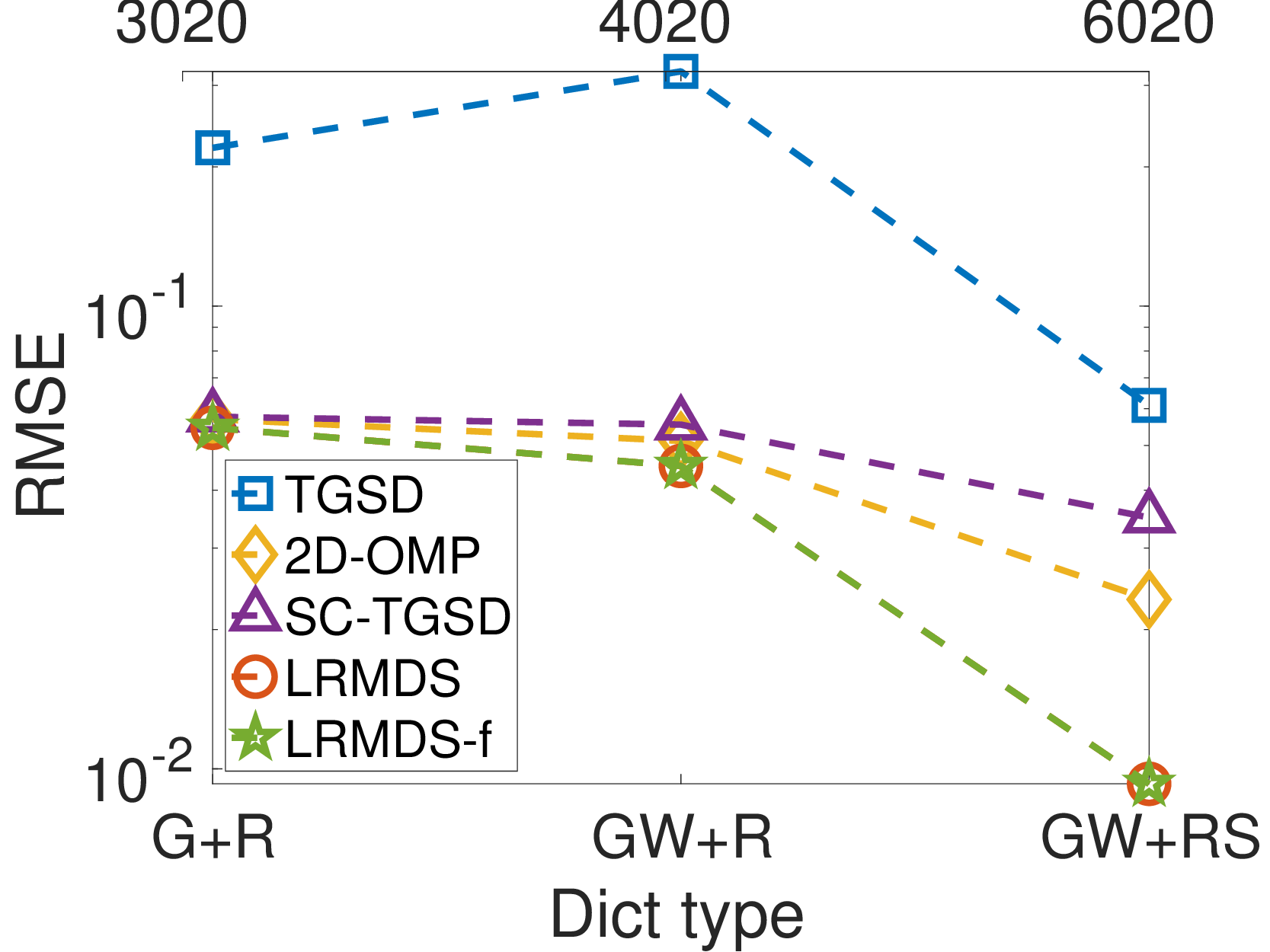}
        \label{fig:syn_size_rmse}
    }
    \subfigure [Time]
    {
        \includegraphics[width=0.22\linewidth]{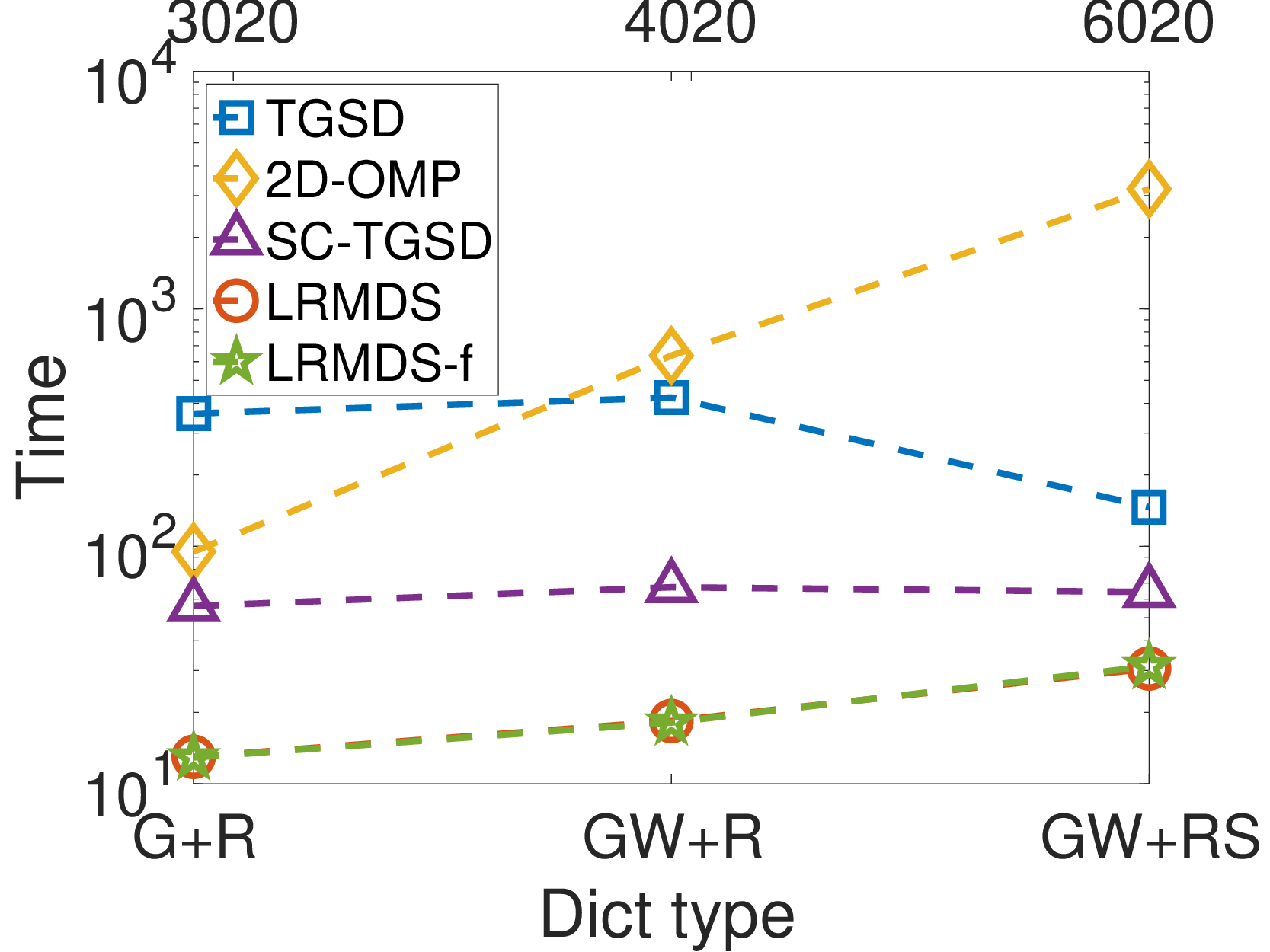}
        \label{fig:syn_size_time}
    }
    \subfigure [\scriptsize GW+RS: RMSE vs Atom\%]
    {
        \includegraphics[width=0.22\linewidth]{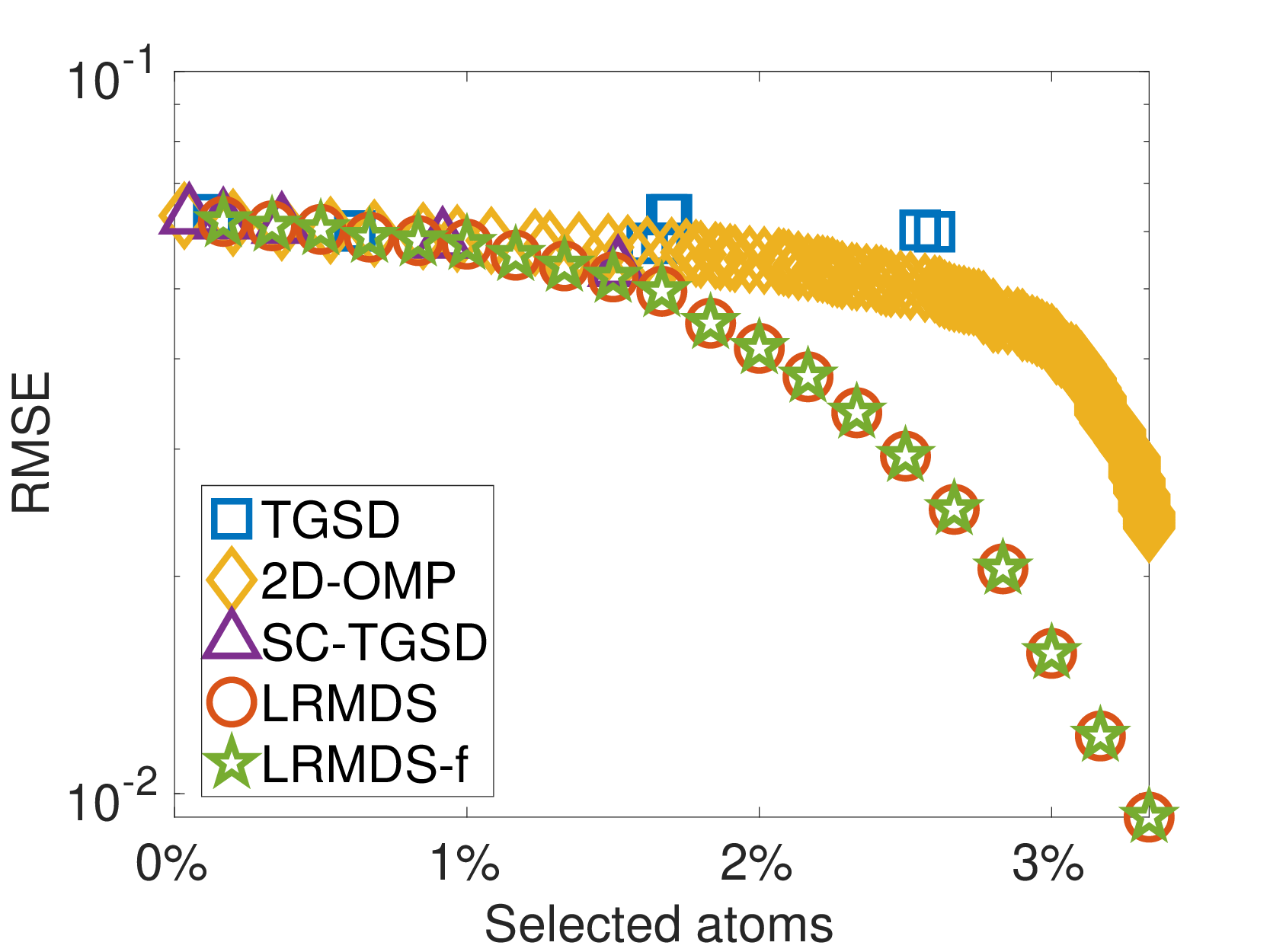}
        \label{fig:GWRS_rmse_vs_atom}
    }
    \subfigure [\scriptsize GW+RS: Time vs Atom\%]
    {
        \includegraphics[width=0.22\linewidth]{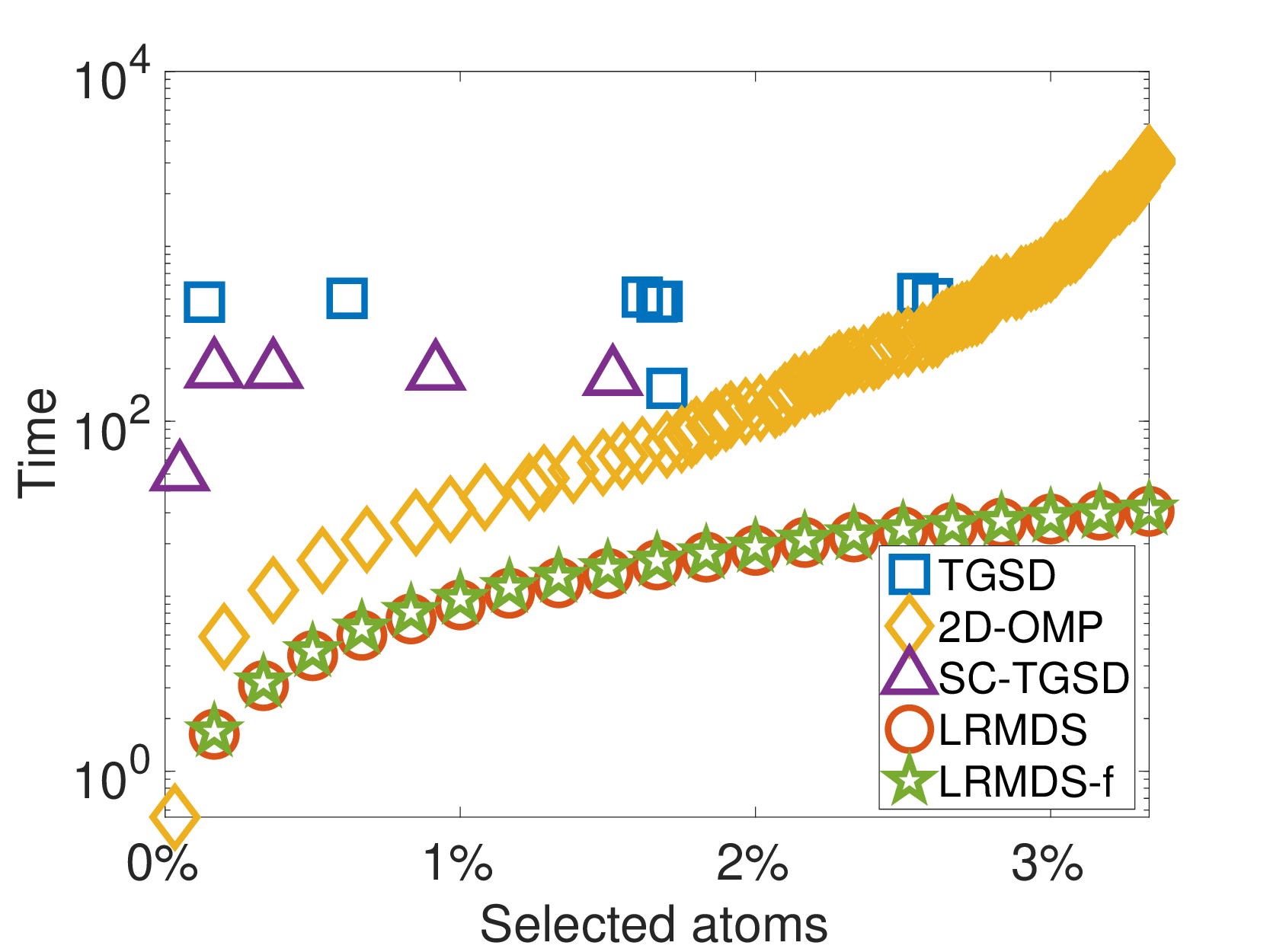}
        \label{fig:GWRS_time_vs_atom}
    }
    \vspace{-0.2in}
    \caption{\footnotesize
     Comparison of competing techniques on synthetic data. ~\subref{fig:syn_size_rmse}, ~\subref{fig:syn_size_time}: RMSE and running time for varying dictionaries available to each method (listed under the x axis). 
     The total number of (left and right dictionary) atoms is specified at the top of each figure. We stack increasing sets of dictionaries on the left and right, while the ground truth atoms are selected from the full set GW+RS.  \subref{fig:GWRS_rmse_vs_atom}: RMSE as a function of the number of selected atoms when multiple dictionaries are provided. \subref{fig:GWRS_time_vs_atom}: Run time as a function of the number of selected atoms. GW+RS stands for GFT and Graph Haar wavelets stacked together for the graph dimension and RS stands for Ramanujan and Spline dictionaries stacked for the temporal dimension (details of the dictionary definitions are available in the supplement).\vspace{-0.1in}
    }
    \label{fig:dict_size_trends}
\end{figure*}

\subsection{Experimental setup.}
\noindent{\bf Baselines.} We compare the versions of \ourmeth to the two available methods for multi-dictionary coding: (\textbf{TGSD}~\cite{TGSD} and \textbf{2D-OMP}~\cite{fang20122d}). These methods have already been discussed in detail in Sec.~\ref{sec:prelim}. As a brief summary, \textbf{TGSD}
solves the problem of low-rank encoding within an $L_1$ sparsity regularized optimization framework. {\bf 2D-OMP} selects dictionary atom pairs in a greedy manner and estimates the corresponding coding coefficients one at a time. It produces a solution which is not guaranteed to be low-rank. 

Since a key advantage of our method is its sub-selection of large dictionaries, we also seek to understand if extending 1D dictionary screening to the 2D setting results in a scalable 2D approach. To this end, we generalize a 1D dictionary screening approach~\cite{xiang2016screening} to work with multiple dictionaries and combine it with TGSD to facilitate a more thorough comparison. The resulting method \textbf{SC-TGSD} screens (removes) the worst dictionary atoms from a dictionary by calculating alignment scores between atoms and associated data. To perform the subsequent coding, we employ TGSD with only the sub-selected dictionaries. Intuitively this baseline can be thought as a 2-step combination of screening and TGSD coding. Additional details about this screening process is available in the supplement.




\noindent{\bf Metrics.}
We measure the reconstruction error of the learned representation using root mean squared error (RMSE$ = \sqrt{\frac{\sum_{i,j} (X_{i,j} - X'_{i,j})^2}{|X|}}$, where $X$ is the original signal, $X'$ is the reconstruction and $|X|$ denotes the number of elements in $X$) of the learned representation's departure from the input data. We measure the running time in seconds for all competing methods.


 \noindent{\bf Experimental design.} The goal of our experiments is to demonstrate the utility of \ourmeth in both synthetic and real world datasets. 
 In synthetic data tests, we varying different properties of the data generation process, such as SNR and the dictionary size and type. We seek to quantify the speed up and quality improvement that \ourmeth enables compared to baselines. Parameter settings and details on used dictionaries can be found in the supplement.





\vsa
\subsection{Evaluation on synthetic data.}

In our synthetic data evaluation we compare the effects of dictionary size and type on the performance of \ourmeth and its competitors. Using the ground truth number of atoms as a target, we compare the reconstruction error (RMSE) and running time (secs) for all techniques. 
We also characterize the effect of varying noise and show these results in the supplement.

\noindent{\bf Varying dictionary composition and size.} As all competitors perform a type of dictionary sub-selection (implicitly in the case of TGSD due to its regularization), a natural question to ask 
 is: \emph{How does the composition and size of the input dictionaries affect the ability of a method to quickly and accurately represent a data matrix?}
To answer this question we first utilize a set of composite dictionaries to generate the data input. In this case the left composite dictionary is a stack of a GFT (G) and a graph Wavelet (W) dictionaries and the right composite dictionary is a stack of Ramanujan (R) and Spline (S) temporal dictionaries. We use 50 randomly chosen atoms from each of the four dictionaries (200 in total) to generate the synthetic input data. We then prepare 3 test settings by varying the dictionaries compositions available to the competitors. The model input dictionaries in these 3 settings are as follows (1) $\Psi = [G], \Phi = [R]$ denoted G+R; (2) $\Psi = [G, W], \Phi = [R]$ denoted GW+R; and (3) $\Psi = [G, W], \Phi = [R, S]$ denoted GW+RS. 

We first compare the RMSE and running time for all techniques when using a fixed number of dictionary atoms and report results of this experiment in Fig.~\ref{fig:syn_size_rmse},~\ref{fig:syn_size_time}. The x-axis lists the consecutive dictionary compositions and the total number of atoms is listed on top of the figure. We report RMSE and run time of each method when employing $200$ atoms, which is also the number of ground truth (GT) atoms. Note that for SC-TGSD and TGSD, we pick the point for which the employed number of atoms is closest to the GT since they have no parameters to directly control the exact number of atoms in their methods. As larger dictionaries are being used, the RMSE of all methods improves, however \ourmeth achieves the best reconstruction quality at all points. This is due to the higher quality atom selection compared to baselines. \ourmeth variants are also the fastest to select these atoms as seen in Fig.~\ref{fig:syn_size_time}.

In Figs.~\ref{fig:GWRS_rmse_vs_atom}, ~\ref{fig:GWRS_time_vs_atom}, we further break down the performance of each method when utilizing ground truth composite dictionaries GW+RS. Explicitly, we track the RMSE and run time as a function of the percentage of selected atoms.
We additionally show that the representation quality improves as a function of the total run time (results in supplement). Similar results for other choices of dictionary combinations settings (G+R, GW+R) are also available in the supplement.

Both variants of \ourmeth obtain the most accurate representations among competitors while simultaneously taking the least amount of time. The next best approach, 2D-OMP initially selects and updates its coefficients quickly, closely trailing \ourmeth when the representation quality for both is poor. However, with further iterations the representation quality gains by 2D-OMP slow down. This is due to 2D-OMP's restriction to only select one coefficient corresponding to an atom pair per iteration and the need to re-estimate the coefficients for all previously selected pairs. This is highly inefficient when many pairs of a small subset of atoms in the optimal sparse coding are non zero. In such cases, 2D-OMP still adds pairs one at a time, while our approach allows for coding with all combinations of already selected left and right atoms.

\begin{figure*}[th]
   \footnotesize
    \centering   
    \subfigure [Twitch: RMSE vs Atom\%]
    {
        \includegraphics[width=0.25\linewidth]{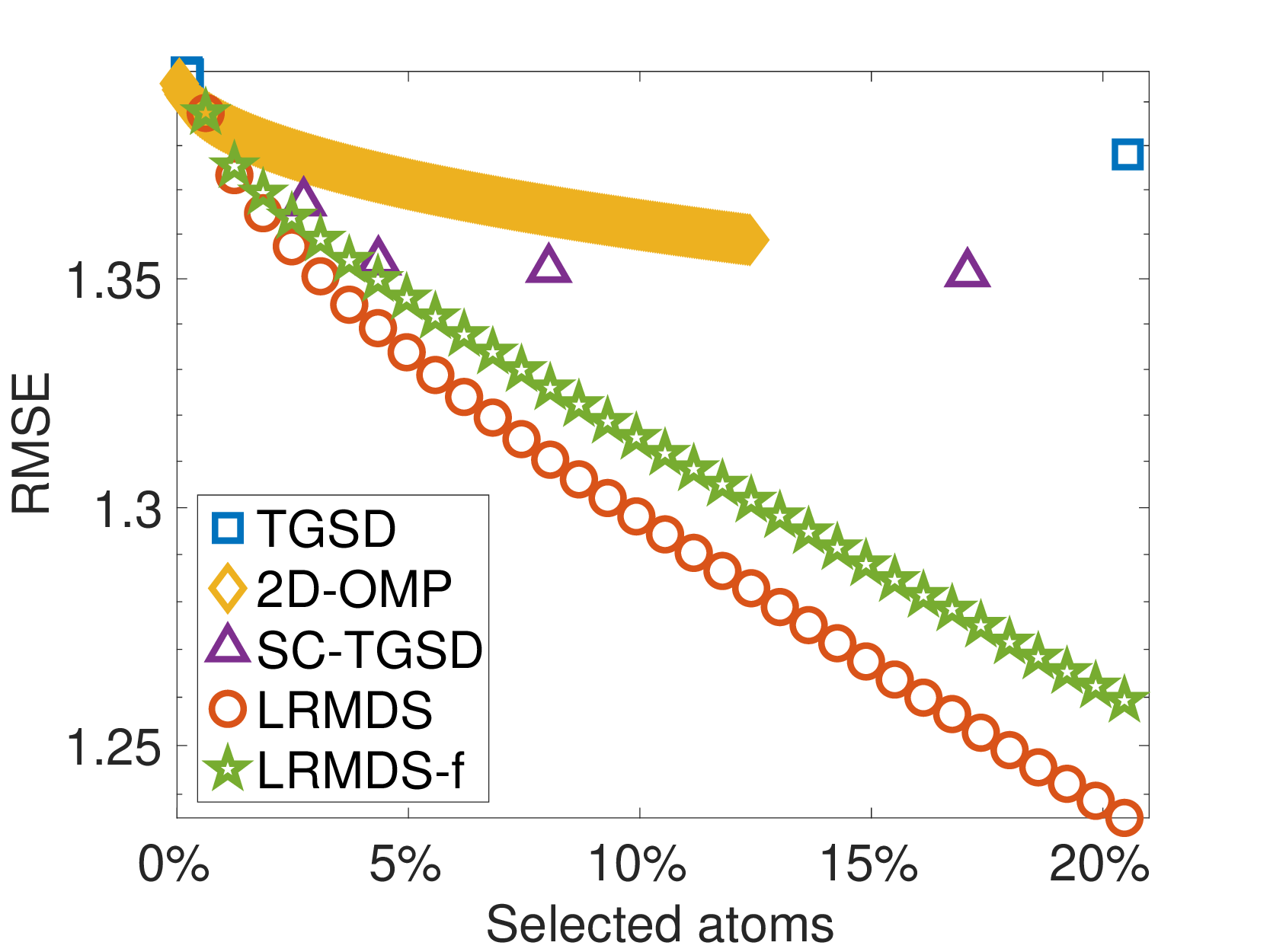}
        \label{fig:twitch_rmse_vs_atom}
    }
    \hspace{-0.2in}
    \subfigure [Road: RMSE vs Atom\%]
    {
        \includegraphics[width=0.25\linewidth]{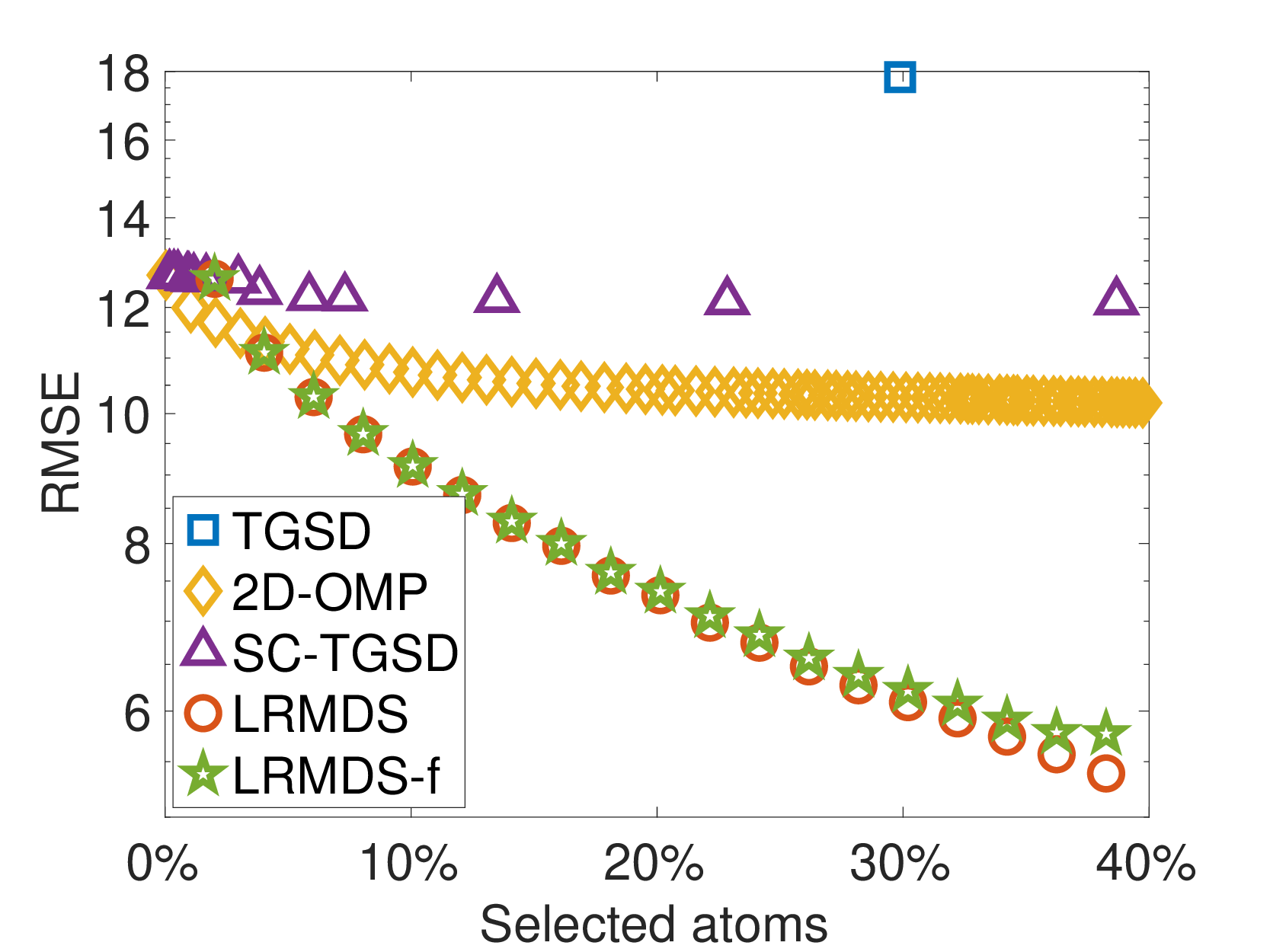}
        \label{fig:road_rmse_vs_atom}
    }
    \hspace{-0.2in}
    \subfigure [Wiki: RMSE vs Atom\%]
    {
        \includegraphics[width=0.25\linewidth]{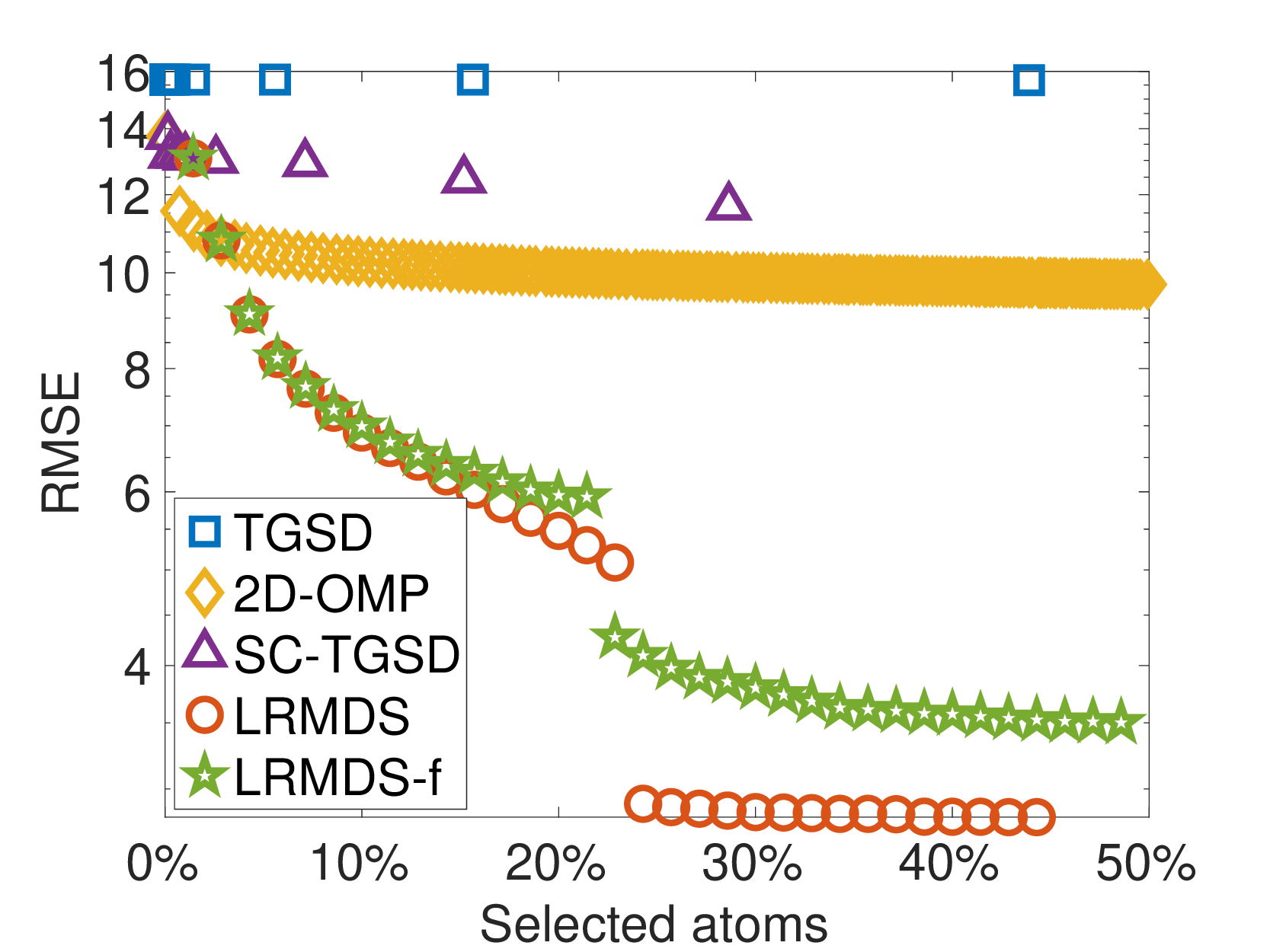}
        \label{fig:wiki_rmse_vs_atom}
    }
    \hspace{-0.2in}
    \subfigure [Covid: RMSE vs Atom\%]
    {
        \includegraphics[width=0.25\linewidth]{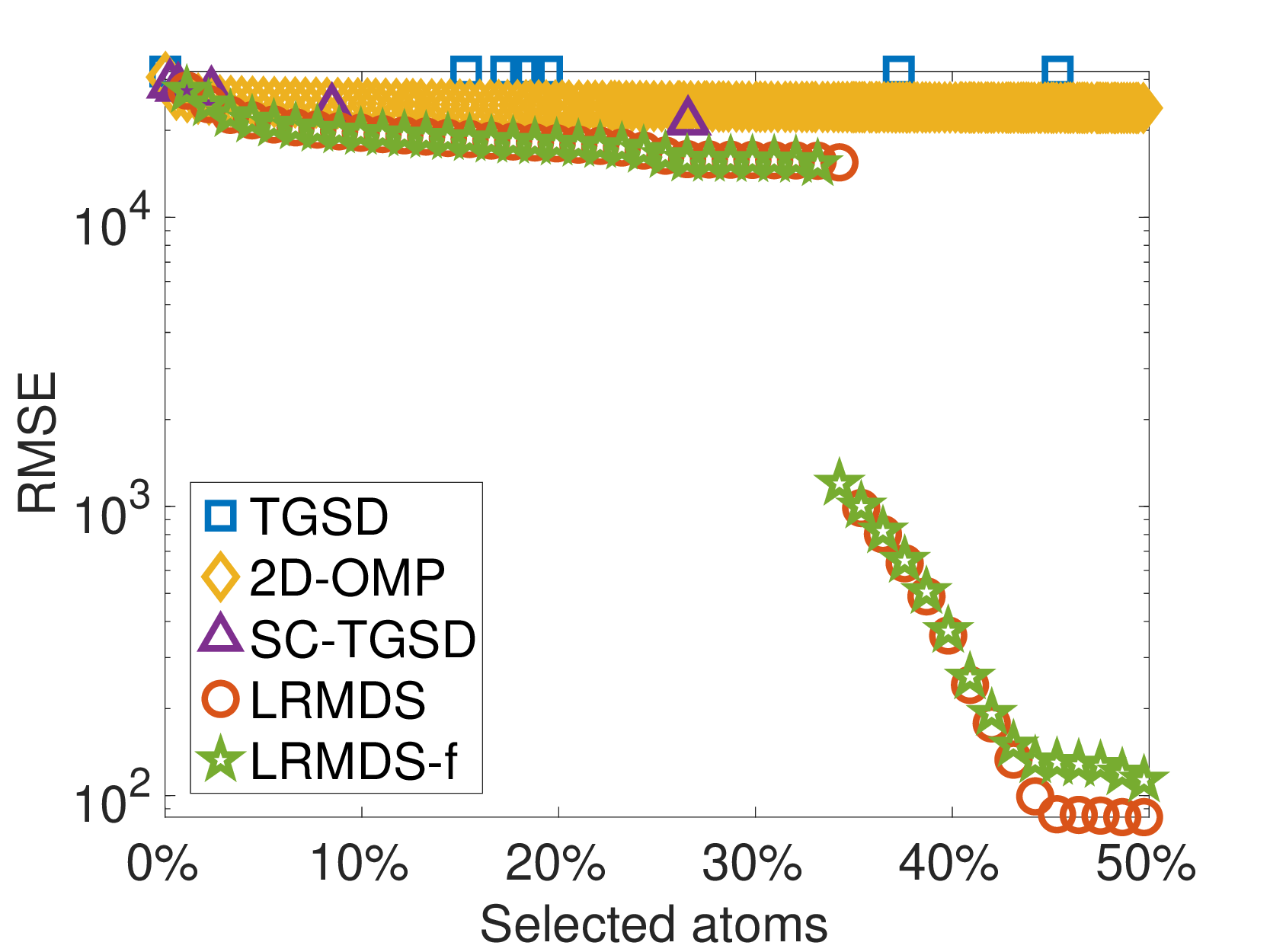}
        \label{fig:covid_rmse_vs_atom}
    }
    \hspace{-0.2in}
    \subfigure [Twitch: Time vs Atom\%]
    {
        \includegraphics[width=0.25\linewidth]{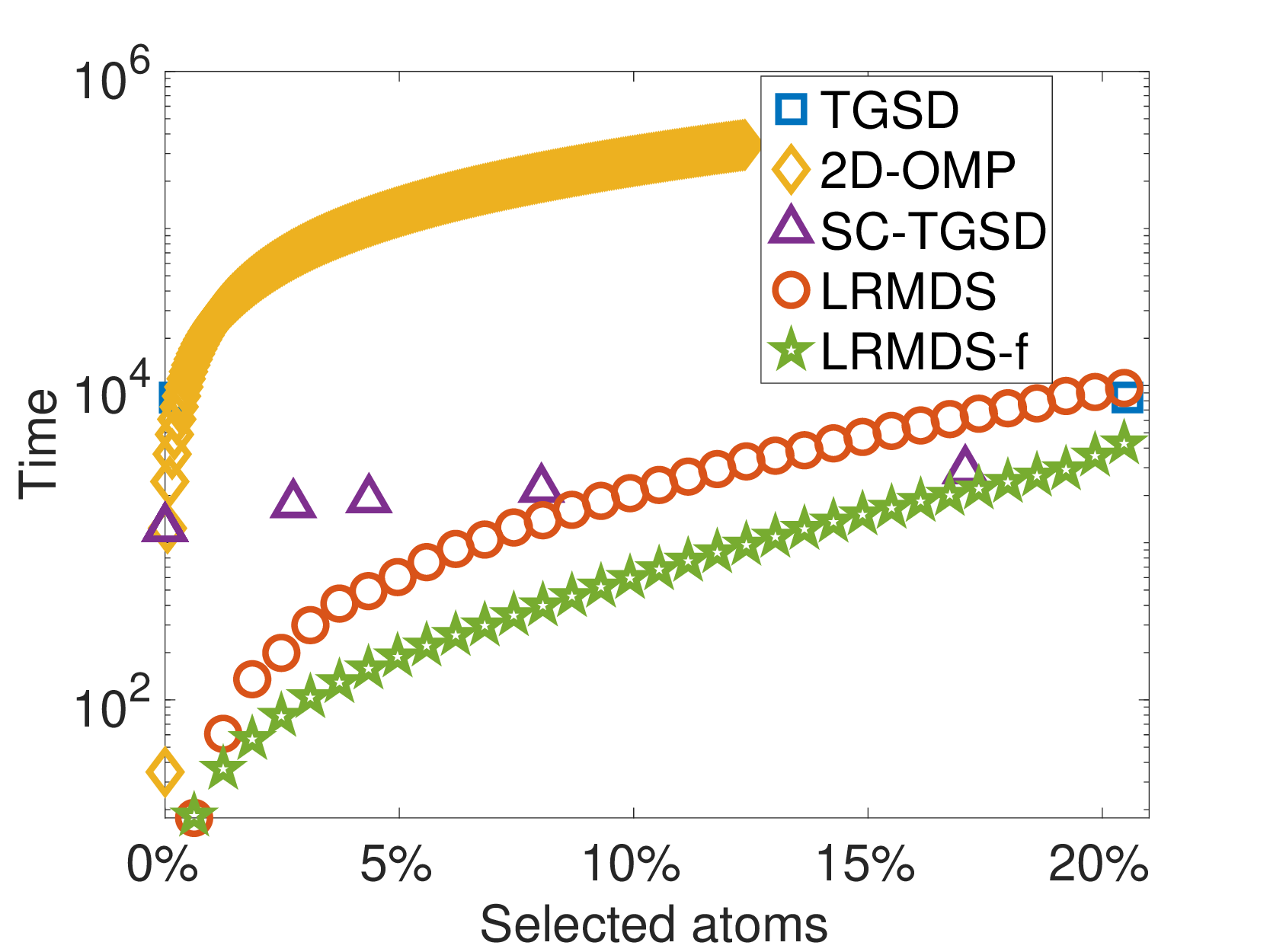}
        \label{fig:twitch_time_vs_atom}
    }
    \hspace{-0.2in}
    \subfigure [Road: Time vs Atom\%]
    {
        \includegraphics[width=0.25\linewidth]{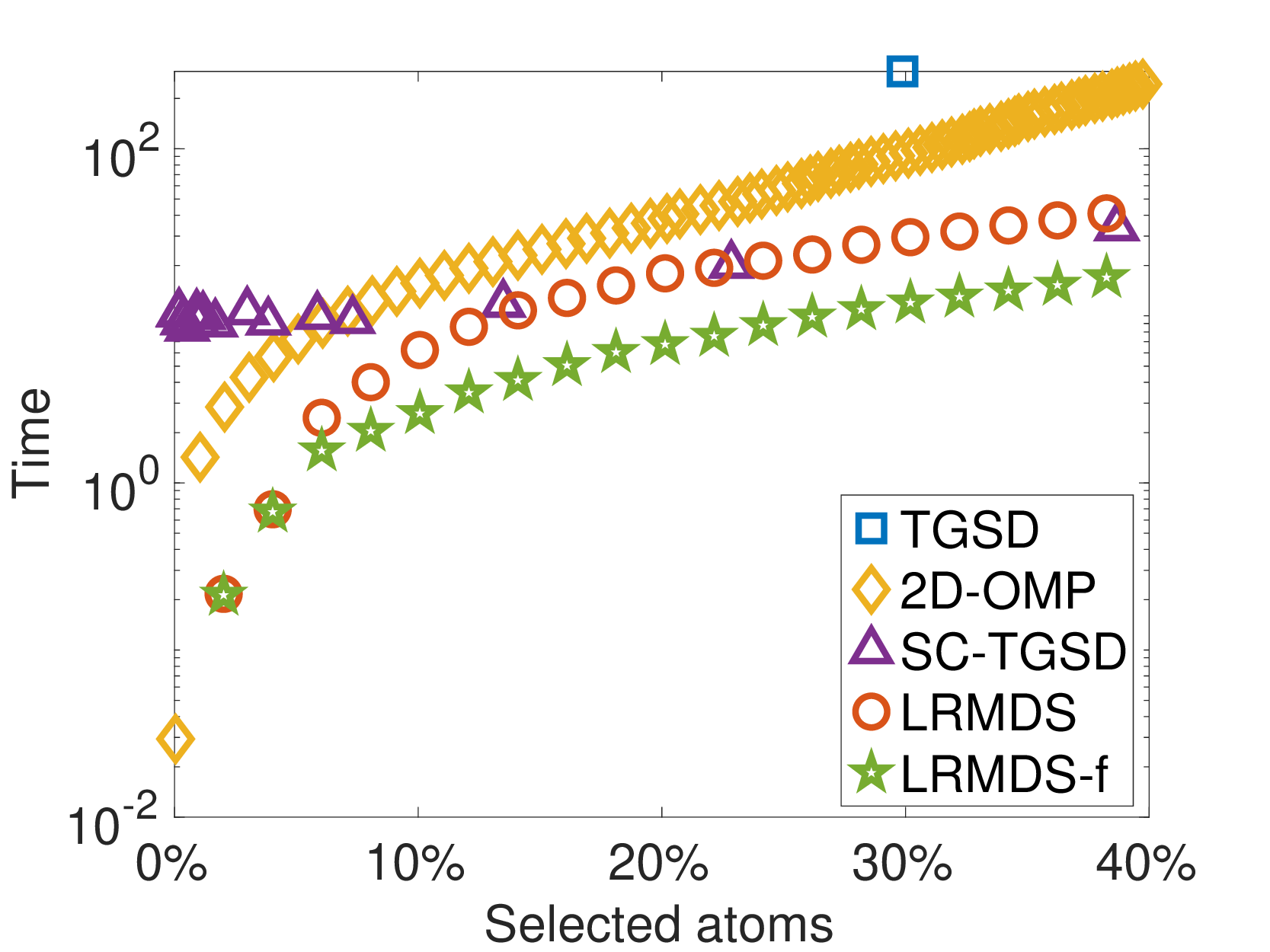}
        \label{fig:road_time_vs_atom}
    }
    \hspace{-0.2in}
    \subfigure [Wiki: Time vs Atom\%]
    {
        \includegraphics[width=0.25\linewidth]{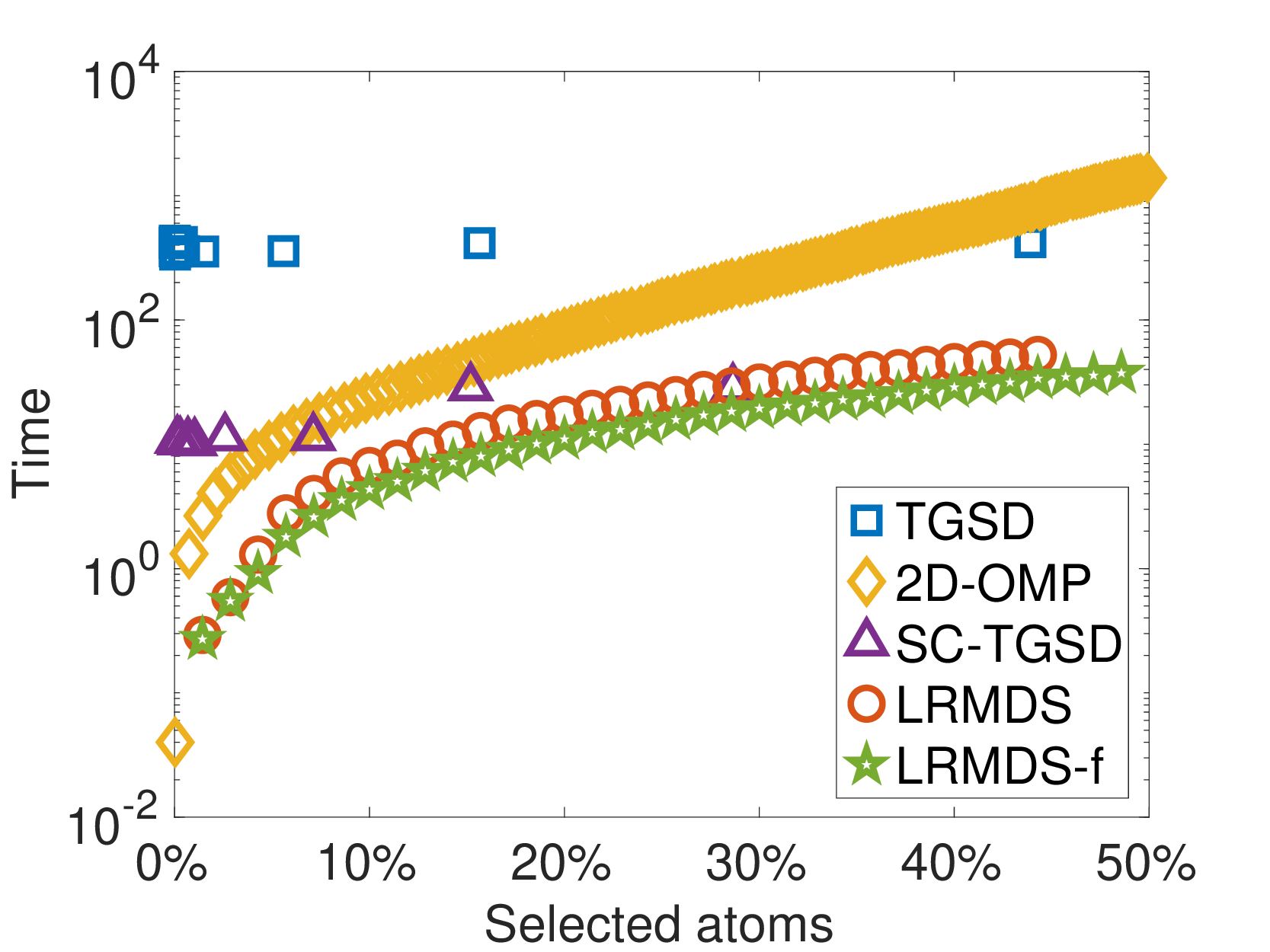}
        \label{fig:wiki_time_vs_atom}
    }
    \hspace{-0.2in}
    \subfigure [Covid: Time vs Atom\%]
    {
        \includegraphics[width=0.25\linewidth]{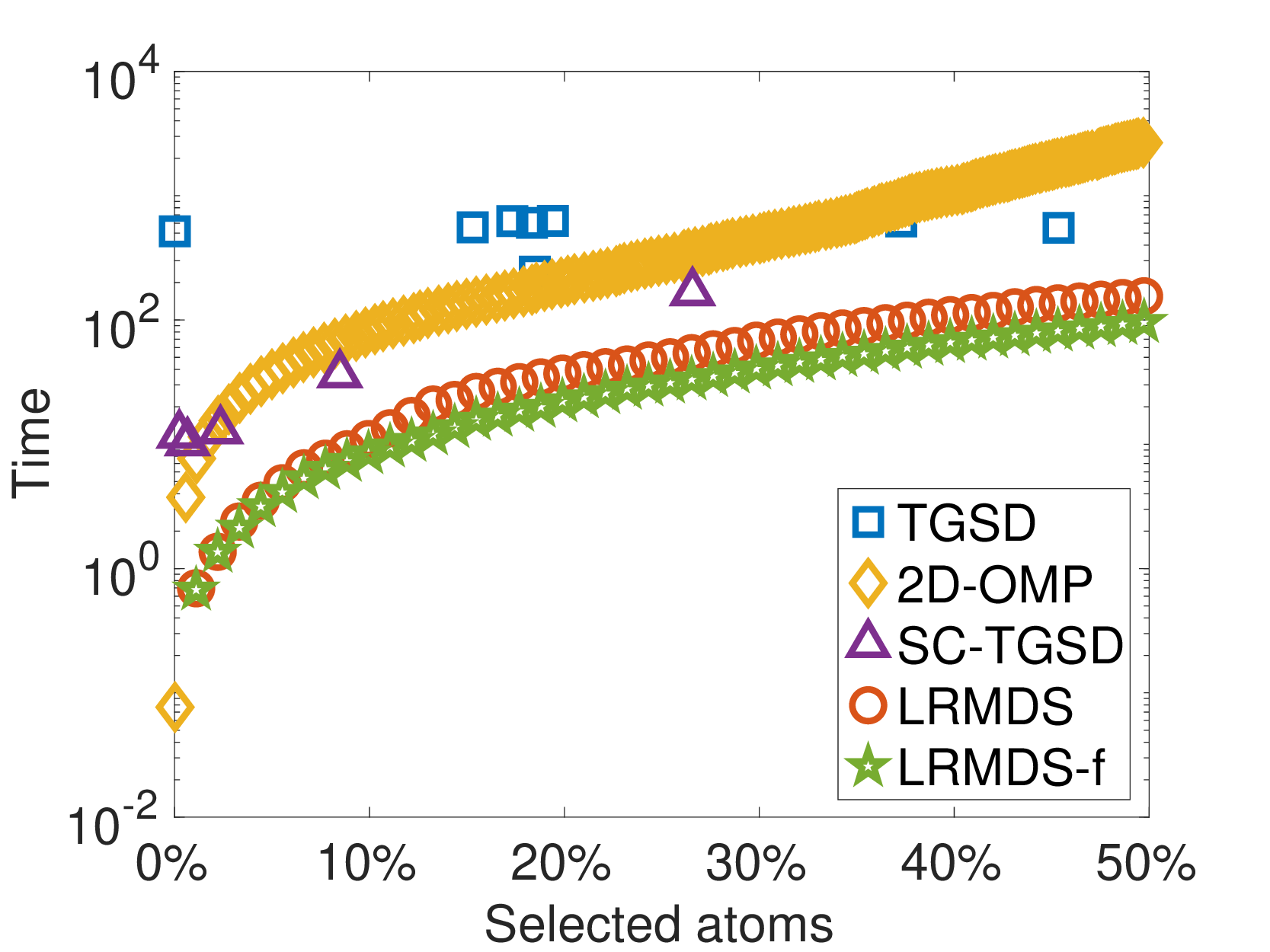}
        \label{fig:covid_time_vs_atom}
    }\vspace{-0.2in}
    \caption{\footnotesize Comparison between competitors of representation quality as a function of the percentage of selected atoms Figs.\subref{fig:twitch_rmse_vs_atom}-\subref{fig:covid_rmse_vs_atom}, and runtime as a function of the percentage of selected atoms Figs.\subref{fig:twitch_time_vs_atom}-\subref{fig:covid_time_vs_atom}. All methods use a GFT for $\Psi$ and a Ramanujan periodic dictionary for $\Phi$. The dimensions of the utilized dictionaries are as follows: Twitch: $\Psi\in \mathcal{R}^{78389 \times 78389}$, $\Phi\in \mathcal{R}^{512 \times 2230}$; Wiki: $\Psi \in \mathcal{R}^{999 \times 999}$,$\Phi \in \mathcal{R}^{792 \times 6000}$; Road: $\Psi \in \mathcal{R}^{1923 \times 1923}$,$\Phi \in \mathcal{R}^{720 \times 3044}$; Covid: $\Psi \in \mathcal{R}^{3047 \times 3047}$,$\Phi \in \mathcal{R}^{678 \times 6000}$. Note: 2D-OMP's trace on the Twitch dataset is truncated early as it does not scale (fails to complete in $72$ hours) when selecting more than $13\%$ of the atoms.}
    \label{fig:real-test}
\end{figure*}

\subsection{Evaluation on real-word datasets.}
We next evaluate all techniques on the real-world datasets and report the RMSE and running time at set 
percentages of total available atoms selected. Results from all datasets for a fixed percentages of atoms are listed in Tbl.~\ref{table:datasets}. This high-level comparison demonstrates that given a fixed number of target atoms, \ourmeth produces the most accurate representations, while \ourmeth-f is the most scalable at the cost of slight deteriorating in RMSE compared to \ourmeth. Note that \ourmeth-f is still the most accurate among baselines from the literature. 

More detailed results on real-world datasets are presented in Fig.~\ref{fig:real-test}. We employ a graph Fourier dictionary (GFT) for $\Psi$ and Ramanujan periodic dictionary for $\Phi$ for all datasets. The sizes of these dictionaries are listed in the caption of Fig.~\ref{fig:real-test}. The detailed analysis also demonstrates that variants of \ourmeth dominate based on both accuracy and running time across a wide variety of settings and datasets. We show the representation error as a function of the percentage of selected atoms in Figs~\ref{fig:twitch_rmse_vs_atom}-\ref{fig:covid_rmse_vs_atom} and the run time necessary to obtain said percentages in Figs.~\ref{fig:twitch_time_vs_atom}-\ref{fig:covid_time_vs_atom}. Together these plots show both the quality of representation and the time necessary to obtain it for a varying percentage of selected atoms (Fig.~\ref{fig:rmse_vs_time} in the supplement explicitly shows this relationship). For each dataset, 2D-OMP selects highly representative atoms at first due to its greedy strategy, however, its trend is quickly overtaken by those of our methods as more atoms are allowed for selection. \ourmeth matches or outperforms \ourmeth-f in terms of representation quality given the same number (percentage) of atoms, however, \ourmeth-f selects new atoms faster demonstrating the trade-off between running time and quality between the two. Both of our methods are as fast or faster than all baselines at selecting atoms with the exception of the 2D-OMP in its first several iterations. The only method matching the speed of \ourmeth is TGSD-SC  (\ourmeth-f is always faster). Although fast, TGSD-CS exhibits poor representation quality rendering it not useful in settings in which the representation quality is critical. For the Twitch datasets when a large percentage of atoms are selected, TGSD is able to obtain similar running time to \ourmeth but at a far worse representation quality. TGSD is not well suited for dictionary sub-selection as sparsity is only implicitly encouraged through $L_1$ regularization over all possible coefficients and it has no direct control on which atoms are used. Thus, even a single nonzero coefficient corresponding to an otherwise poorly selected atom may cause the atom to be ``selected'' by TGSD. 

An interesting finding is that there is an abrupt drop in RMSE in Wiki and Covid data for both variants of \ourmeth. This indicates that the learned representation is initially missing some crucial atoms that \ourmeth is able to eventually detect and incorporate into the selected dictionaries. Competitors omit these crucial atoms in their representations leading to poorer RMSE.  This drop also corresponds to a setting where the difference in quality between \ourmeth and competitors is most striking. For example, in Fig.~\ref{fig:covid_rmse_vs_atom} \ourmeth obtains a roughly two orders of magnitude reduction in RMSE when $50\%$ of the available atoms are selected.




\begin{figure*}[h]
\vsa
   \footnotesize
    \centering 
    \subfigure [\scriptsize clean vs. noise]
    {
        \includegraphics[width=0.23\linewidth]{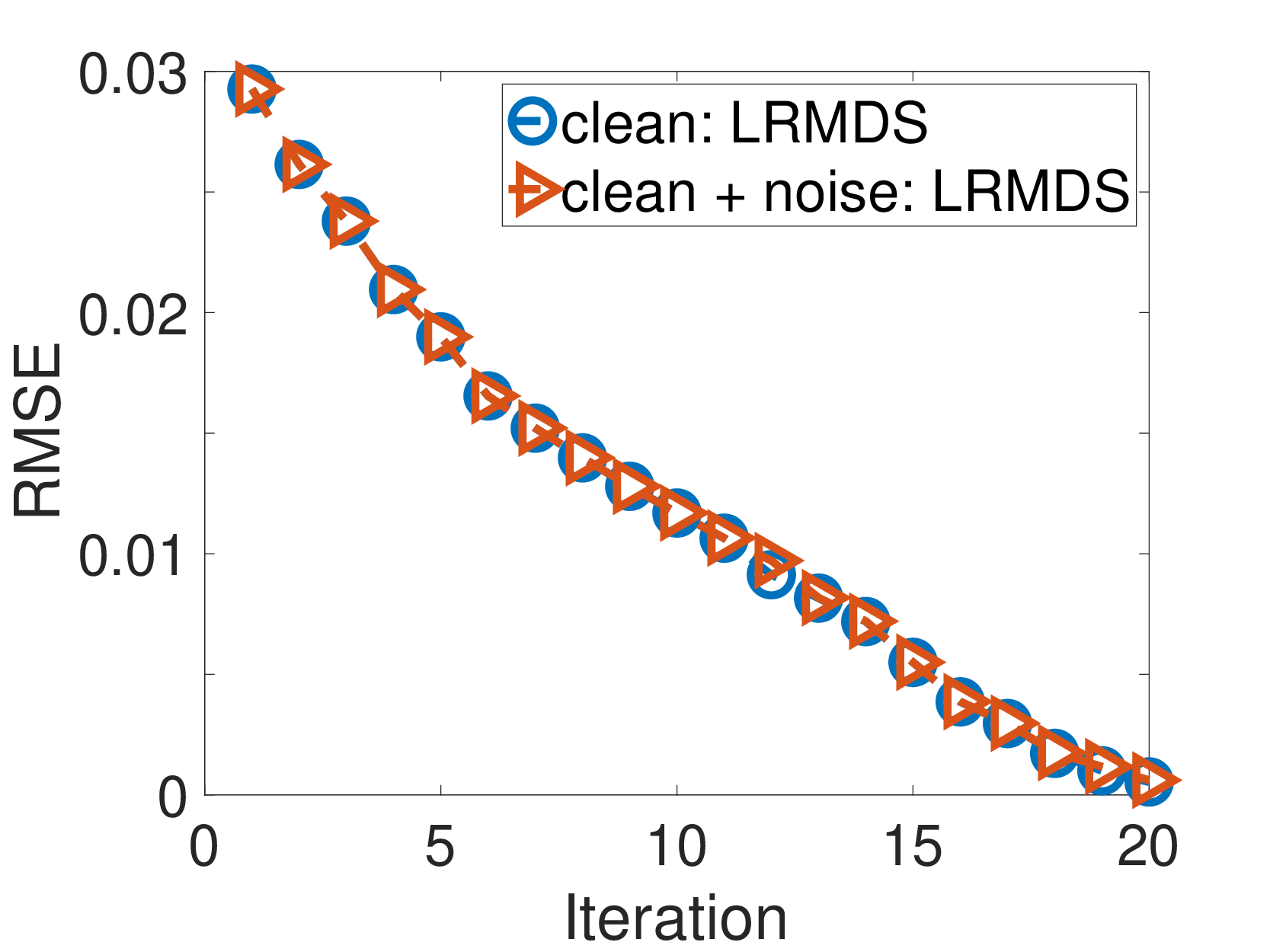}
        \label{fig:convergence_rmse}
    }
    \subfigure [\scriptsize Coef. diff distribution]
    {
        \includegraphics[width=0.23\linewidth]{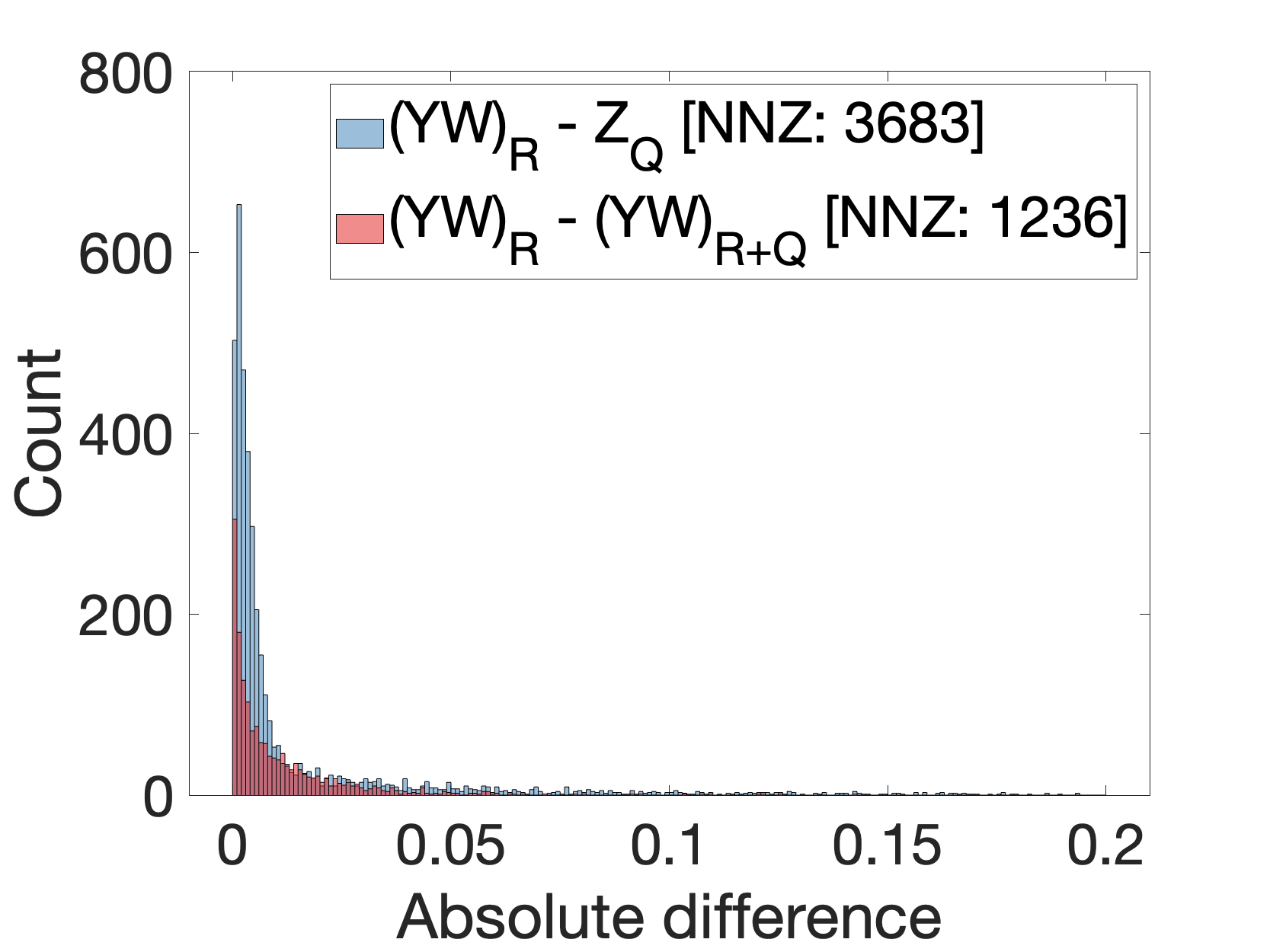}
        \label{fig:hist_YW}
    }
    \subfigure [\scriptsize Ablation: RMSE vs Atom\%]
    {
        \includegraphics[width=0.23\linewidth]{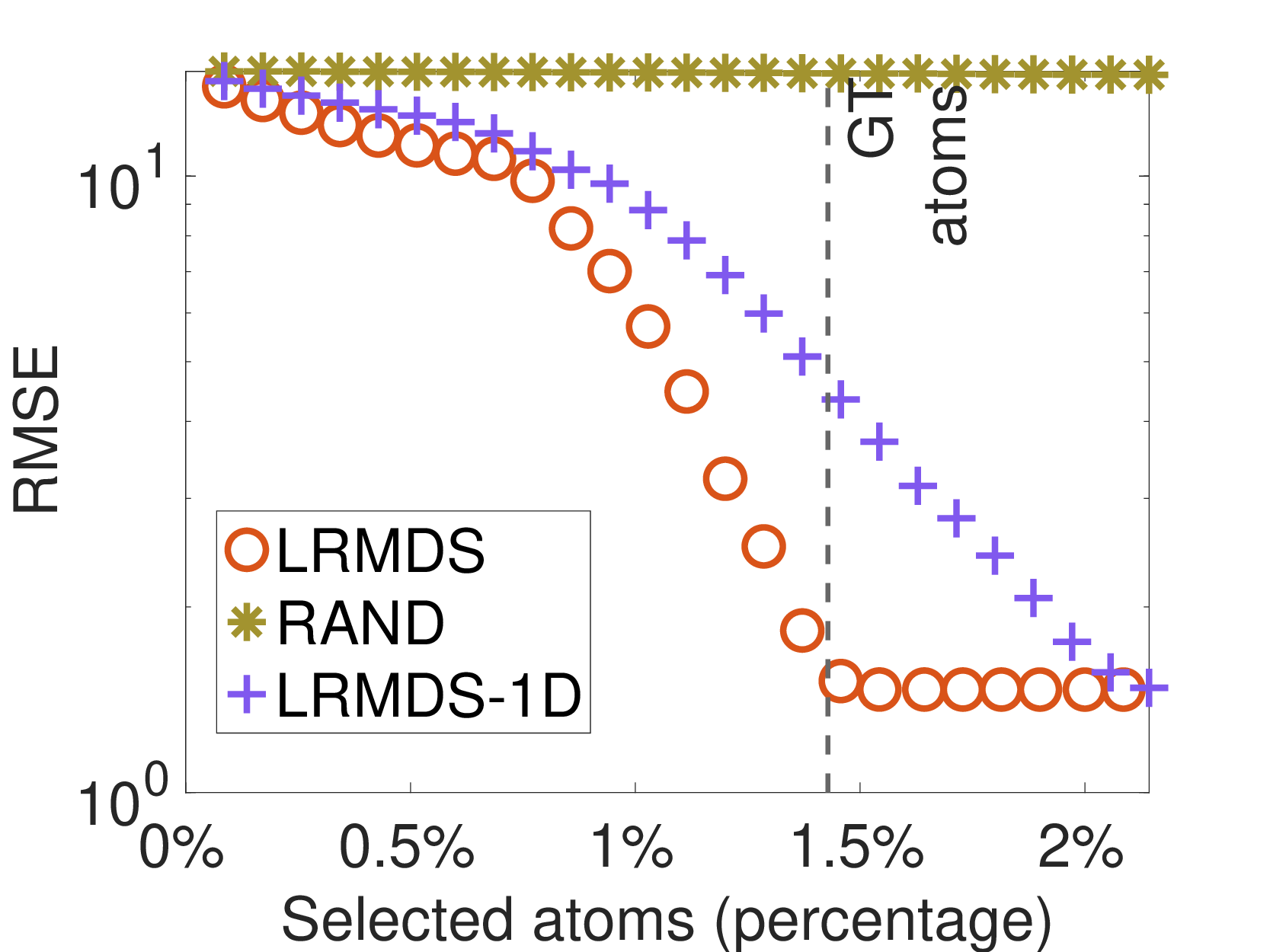}
        \label{fig:ab_rmse_vs_atom}
    }
    \subfigure [\scriptsize Ablation: Time vs Atoms\%]
    {
        \includegraphics[width=0.23\linewidth]{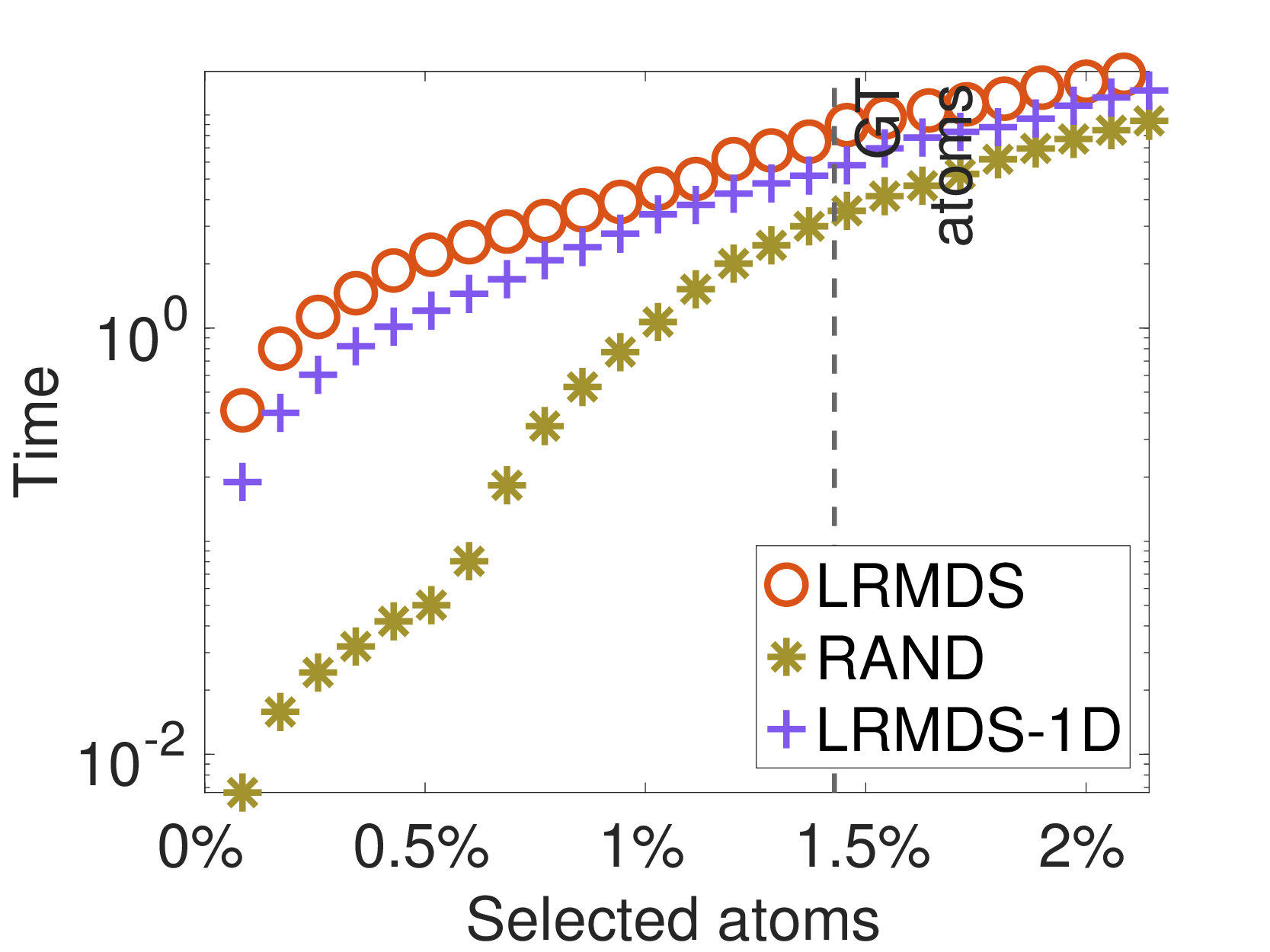}
        \label{fig:ab_time_vs_atom}
    }\vspace{-0.2in}
    \caption{\footnotesize \subref{fig:convergence_rmse}-\subref{fig:hist_YW}: Empirical demonstration of the theoretical guarantee on \ourmeth's ability to denoise a signal. \subref{fig:convergence_rmse}: ``clean: LRMDS'' operates on the clean matrix $R$ whereas ``clean + noise'' operates on the noisy signal $\hat{R}=R+Q$. The RMSE for both methods is measured with respect to the clean data $R$. \subref{fig:hist_YW} The absolute difference between the learned coefficient matrices for the clean data $(YW)_R$, noisy data $(YW)_{R+Q}$, and pure noise $Z_Q$.
    \subref{fig:ab_rmse_vs_atom}\subref{fig:ab_time_vs_atom}: Ablation study demonstrating the importance of joint selection of atoms from both dictionaries. We compare \ourmeth to variants in which atoms are selected from the left and right dictionaries independently (\ourmeth-1D) or randomly (RAND). We measure RMSE~\subref{fig:ab_rmse_vs_atom} and runtime~\subref{fig:ab_time_vs_atom} as a function of the percentage of selected atoms.}
    \label{fig:convergence_test}
\end{figure*}

\subsection{Theoretical guarantees validation (Thm~\ref{thm:convergence-top-k})} \label{sec:theor_exp}

Here, we study the performance of the \ourmeth for denoising which serves as empirical validation of Thm.~\ref{thm:convergence-top-k}. Specifically, we demonstrate that \ourmeth is able to recover the underlying ``clean'' signal $R$ from a noisy signal $\hat{R}=R+Q$ (Fig.\ref{fig:convergence_rmse}). The experimental setup is as follows: $N, M, I, J$ are set to $500, 10, 1000, 20$, respectively. The rank $r$ of the signal is set to $3$. For each dictionary, the first half of its atoms are almost orthogonal (generated as an orthogonal basis with Gaussian noise added to the atoms at SNR=20), and the atoms in the second half of the dictionary are generated as Gaussian $\mathcal{N}(0,1)$ random. The sparsity parameter $s$ is set to $10\%$ of the total number of the almost-orthogonal atoms. The GT atoms for the signal matrix $R$ are chosen uniformly at random from the first half of the atoms (almost-orthogonal), and the atom coefficients are selected randomly ($\mathcal{N}(0,1)$). This constitutes the clean signal matrix $R$. We also create a pure independent Gaussian noise matrix $Q$ by first calculating the standard deviation $\sigma$ of $R$, and set $Q = \mathcal{N}(0, \frac{\sigma}{20})$. Finally, we set $\hat{R} = R + Q$.

To demonstrate \ourmeth's ability to denoise input data $\hat{R}$, we run \ourmeth on both $R$ and $\hat{R}$ producing two sets of coefficients $(YW)_R$ and $(YW)_{R+Q}$. We then track the RMSE of the reconstruction for both variants against the clean data $R$ (i.e., RMSE($R-\Psi (YW)_R \Phi^T $) and RMSE($R-\Psi (YW)_{R+Q} \Phi^T $)). 
Results from this analysis are presented in Fig.~\ref{fig:convergence_rmse}. The curves are nearly identical regardless of the input, demonstrating that \ourmeth successfully extracts the underlying signal while ignoring the noise. To further investigate this property we compare the three different sets of dictionary coefficients corresponding to $R$, $\hat{R}$, and $Q$: $(YW)_R$, $(YW)_{R+Q}$ (as above) and $Z_{Q}$. $Z_{Q}$ contains coefficients computed via 2D-OMP of the noise matrix. We utilize this instead of \ourmeth as due to its low rank constraint it is not capable of well representing an arbitrary noise matrix (as demonstrated above). All methods are run until they converge for their respective inputs. We then calculate the absolute difference in the learned coefficients. Explicitly, $|(YW)_R -(YW)_{R+Q}|$ and $|(YW)_R -(Z)_{Q}|$  and plot the histograms of the nonzero difference values in Fig.~\ref{fig:hist_YW}. While the noise $Z_Q$ and clean data $(YW)_R$ differ significantly (3683 non-zero differences between the two), the fits of the noisy $(YW)_{R+Q}$ and clean $(YW)_R$ data align much better (1236 non-zero differences). The histograms of these differences also indicate that the addition of noise does not significantly impact the coefficients learned by \ourmeth. \vsa

\subsection{Ablation study: Is joint selection critical?}
\ourmeth uses the projection of the residual onto left-right atom pairs (i.e., $P=\hat{\Psi}^TR\hat{\Phi}$) to select atoms. This opens a natural question on the necessity of this technique: 
\emph{Can we select atoms from each of the dictionaries independently employing 1D approaches directly on the left and right dictionary? In other words, is joint selection based on the projection we employ critical?} To answer these questions, we implement two variants of \ourmeth: i) \ourmeth-1D selects atoms from one dictionary at a time via 1D projection, while ii) RAND chooses 2D atoms randomly. We then evaluate their performance on a version of our synthetic dataset with an equal number of ground truth atoms in $\Psi$ and $\Phi$. More details on the implementation and setting for this experiment are available in the supplement. 


In Fig.~\ref{fig:ab_rmse_vs_atom} we plot RMSE of the three variants of our method as a function of the number of selected atoms. \ourmeth approaches its optimal fit (smallest RMSE) when using the ground truth number of atoms. \ourmeth-1D requires more atoms to achieve the same level of RMSE, demonstrating that the joint atom selection is essential for identifying good representative atoms from both dictionaries. The RAND method (random 2D atom selection) is unlikely to select atoms aligned with the data leading to its poor performance. The running time of \ourmeth and \ourmeth-1D are similar with \ourmeth-1D having a slight advantage due to its cheaper selection mechanism and residual re-calculation (Fig.~\ref{fig:ab_time_vs_atom}). For \ourmeth, the projection requires multiplication of complexity $O(min(INM+IMJ$, $INJ+NMJ$), whereas the projection in \ourmeth-1D 
has a complexity of $O(INM+NMJ$). Note that $M<J$ in this experiment, explaining the runtime advantage of \ourmeth-1D. RAND runs much faster at the beginning as there is no projection to select atoms, however, when more atoms are added this advantage shrinks dramatically. This is because the computational complexity quickly becomes dominated by the coefficient updates which take similar time regardless of which atoms are selected.

Another significant weakness of \ourmeth-1D and RAND not highlighted by this experiment is their inability to adaptively select different number of atoms from the right and left dictionaries. The user must specify how many atoms should be selected from each dictionary manually. In contrast, \ourmeth can dynamically select the best atoms from either dictionary in a data-driven manner. Thus, this experiment represents an ideal scenario where a user has correctly identified the proportion of atoms need from $\Psi$ and $\Phi$.

\section{Acknowledgement}
This research was funded by the NSF SC\&C grant CMMI-1831547. AM was funded by NSF CCF grants CIF-2212327 and CIF-2338855.
\vsa
\section{Conclusion}
In this paper we introduced \ourmeth, a scalable and accurate method for sparse multi-dictionary coding of 2D datasets. Our approach sub-selects dictionary atoms and employs convex optimization to encode the data using the selected atoms. We provided a theoretical guarantee for the quality of the atom sub-selection for the task of denoising the data. We also demonstrated the quality and scalability of \ourmeth on several real-world datasets and by employing multiple analytical dictionaries. It outperformed state-of-the-art 2D sparse coding baselines by up to $1$ order of magnitude in terms of running time and up to $2$ orders of magnitude in representation quality on some of the real-world datasets. As a future direction, we plan to extend our dictionary selection approach to multi-way data (i.e., tensors) making the core idea applicable to a wider range of problem settings.

\bibliographystyle{ACM-Reference-Format}
\bibliography{references}

\appendix

\newpage

\section*{Supplemental Material}

In this supplement we include material that could not be included in the main text due to space constraints including proofs of the theoretical results, additional experiments and information supporting reproducibility. The supplement is divided into three main sections: A) proof of Theorem 4.1, B) derivations of the algorithmic steps and experimental details to facilitate reproducibility, and C) additional experimental results. Specifically, we first prove Theorem 4.1 in Sec.~\ref{sec:top-k-proof}. In Sec.~\ref{sec:reproducibility} we add additional implementation details; solutions for \ourmeth, SC-TGSD, \ourmeth-1D, and RAND; detailed description of the datasets and synthetic data protocols; and last but not least, we describe the hyper-parameter tuning procedures and grid search ranges for all methods. Finally, we conclude with more experimental results and figures in Sec~\ref{sec:sup_add_exp}.  


\section{Proofs and supporting numerical experiments.}
\label{sec:top-k-proof}

We first details the proof of our main theoretical result in Sec.~\ref{sec:convergence-top-k-proof} followed by additional numerical experiments supporting the lemmas the overall proof relies on in Sec.~\ref{sec:proof-exp} 

\subsection{Proof of Theorem~\ref{thm:convergence-top-k}}\label{sec:convergence-top-k-proof}

Here we give details for the proof of Theorem~\ref{thm:convergence-top-k}.  Intuitively, the task boils down to showing that the coefficients in any dictionary expansion of the noise matrix $Q$ are uniformly $o(1)$, which allows us to recover the dictionary atoms that contribute to the signal matrix $R$.
As a reminder $Q \in \R^{N\times M}$ with independent and identically distributed standard Gaussian entries.


We start with several lemmas.  
In essence, the first lemma allows us to focus on upper bounding the inner product of the columns of $Q$ with those of $\Psi$ in order to upper bound the coefficients in any dictionary expansion of $Q$.  Before stating it, we note that $\R^{N\times M}$ is an inner product space with inner product $\inner{A}{B} := \sum_{i=1}^N \sum_{j=1}^M A_{i,j} B_{i,j}$.  It is a matter of simple algebra to show the following formula for $\inner{Q}{\psi_{i}\phi^T_{j}}$:
\begin{align}
    \inner{Q}{\psi_{i}\phi^T_{j}}
    = \sum_{k=1}^{M} \phi_{j,k} \cdot \inner{ Q_{\cdot,k} }{ \psi_{i}}.
\end{align}
This implies the following upper bound, since the rows of $\Phi^{T}$ are normalized in $L_2$:
\begin{align}
    \label{expr:inner-product-matrix-bound}
    |\inner{Q}{\psi_{i}\phi^T_{j}}|
    \leq 
    \sqrt{M} \cdot \max_{k \in [M]} | \inner{Q_{\cdot,k}}{\psi_{i}} |,
\end{align}
using the fact that for any vector $x \in \R^d$, $\|x\|_1 \leq \sqrt{d}\|x\|_2$.  In other words, to upper bound the inner product of $Q$ with any dictionary element, it suffices to upper bound the inner product of any column of $Q$ with any element of the left-hand dictionary.


\begin{lemma}[Comparison of inner products with dictionary coefficients]
    \label{lemma:inner-products-suffice}
    Under the boundedness assumptions on dictionary atoms, if an $O(1)$-norm vector $Q \in \R^{N\times M}$ has an expansion
    $Q = \sum_{i=1,j=1}^{I,J} c_{i,j} \cdot \psi_{i} \phi^T_{j}$ for $c_{i,j} \in \R$, then we may upper bound $\max_{i \in [I], j \in [J]} |c_{i,j}|$
    by upper bounding the inner product \\ 
    $\max_{i,j} \inner{Q}{\psi_{i} \phi^T_{j}}$.  Specifically, for all
    $(i, j) \in [I]\times [J]$,
    \begin{align}
        |\inner{Q}{\psi_{i} \phi^T_{j}} - c_{i,j}| = O(M\alpha^2) = o(1).
    \end{align}
\end{lemma}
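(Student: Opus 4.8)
The plan is to reduce the lemma to a computation involving the Gram matrices of the two dictionaries and then to control the resulting cross terms via the approximate-orthogonality assumption. First I would exploit bilinearity of the inner product on $\R^{N\times M}$ together with the rank-one structure of the atoms $\psi_i\phi_j^T$. Writing $C = (c_{k,l})$ for the coefficient matrix of the given expansion, a direct calculation gives $\inner{\psi_k\phi_l^T}{\psi_i\phi_j^T} = (\psi_i^T\psi_k)(\phi_j^T\phi_l) = \Sigma_{i,k}\Gamma_{j,l}$, so that the whole matrix of alignment scores factors as
\begin{align}
    \left[\inner{Q}{\psi_i\phi_j^T}\right]_{i,j} = \Sigma\, C\, \Gamma = \Psi^T Q \Phi.
\end{align}
Because the columns of $\Psi$ and the rows of $\Phi^T$ are $L_2$-normalized, the diagonals of $\Sigma$ and $\Gamma$ equal $1$, so the $(i,j)$ diagonal contribution of $\Sigma C\Gamma$ reproduces $c_{i,j}$ exactly and the lemma reduces to bounding the remainder $(\Sigma C\Gamma - C)_{i,j}$, which is built entirely from off-diagonal entries of the two Gram matrices.

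Next I would split this remainder into the three cross terms obtained by separating the diagonal and off-diagonal parts of $\Sigma$ and $\Gamma$, namely $\sum_{k\neq i}\Sigma_{i,k}c_{k,j}$, $\sum_{l\neq j}c_{i,l}\Gamma_{l,j}$, and the doubly off-diagonal term $\sum_{k\neq i,\,l\neq j}\Sigma_{i,k}c_{k,l}\Gamma_{l,j}$. Each is controlled by the absolute off-diagonal row/column mass of $\Sigma$ or $\Gamma$ times $\max_{k,l}|c_{k,l}|$. This is exactly where the approximate-orthogonality hypothesis enters: submultiplicativity of the $L_\infty$ operator norm turns the assumed bound $\|\Sigma^{1/2}\|_{op,\infty}\le\alpha$ (resp.\ $\|\Gamma^{1/2}\|_{op,\infty}\le\alpha$) into a bound of order $\alpha^2$ on the relevant off-diagonal mass, so the two one-sided terms each contribute $O(\alpha^2)$ and the doubly off-diagonal term contributes a negligible $O(\alpha^4)$. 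The remaining ingredient is a uniform bound on $\max_{k,l}|c_{k,l}|$, which I would extract from the hypothesis $\|Q\|_F = O(1)$ together with the fact that a subset of the atoms forms a well-conditioned basis. Equivalently, the normalization in Eq.~\ref{expr:inner-product-matrix-bound} lets me pass from the two-dimensional alignment to the column-wise inner products $\inner{Q_{\cdot,k}}{\psi_i}$ at the cost of a $\sqrt{M}$ factor, which together with the per-column incoherence accounts for the $M$ appearing in the final $O(M\alpha^2)$ estimate.

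Combining these bounds yields $|\inner{Q}{\psi_i\phi_j^T}-c_{i,j}| = O(M\alpha^2)$ uniformly in $(i,j)$, and under the stated scaling of $\alpha$ and $M$ this is $o(1)$; the triangle inequality then gives the advertised consequence that $\max_{i,j}|c_{i,j}|$ is controlled by $\max_{i,j}|\inner{Q}{\psi_i\phi_j^T}|$. I expect the main obstacle to be the uniform control of the coefficient magnitudes $\max_{k,l}|c_{k,l}|$: because the dictionaries are overcomplete the expansion of $Q$ need not be unique, so the bound cannot come from simply inverting $\Psi$ and $\Phi$ and must instead be read off from the well-conditioning of a basis subset, while simultaneously tracking how the two one-sided incoherence bounds and the $\sqrt{M}$ dimensional factor interact to produce precisely the $M\alpha^2$ rate. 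A secondary subtlety is reconciling the diagonal normalization $\Sigma_{i,i}=1$ with the operator-norm hypothesis, which I would handle by applying the incoherence bound only to the off-diagonal blocks that actually appear in the cross terms.
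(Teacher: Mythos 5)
Your proposal is correct and follows the same core argument as the paper's proof: expand $\inner{Q}{\psi_{i}\phi^T_{j}}$ by bilinearity, factor the rank-one inner products through the Gram matrices via $\inner{\psi_{k}\phi^T_{\ell}}{\psi_{i}\phi^T_{j}} = \Sigma_{i,k}\Gamma_{j,\ell}$, isolate the diagonal contribution $c_{i,j}$, and control the cross terms using the incoherence hypothesis together with a uniform $O(1)$ bound on the coefficients. Two points of comparison are worth recording. First, your three-way splitting of the remainder $(\Sigma C \Gamma - C)_{i,j}$ is actually \emph{sharper} than the paper's estimate: the paper bounds $|\inner{\phi^T_{\ell}}{\phi^T_{j}}| \leq 1$ outright and then pays a factor of $J = O(M)$ when summing over $\ell$, arriving at $O(M\alpha^2)$, whereas your bookkeeping --- $O(\alpha^2)$ from each one-sided cross term and $O(\alpha^4)$ from the doubly off-diagonal term --- yields $O(\alpha^2)$ uniformly, which subsumes the stated bound and would let the conclusion hold under the weaker scaling $\alpha = o(1)$ rather than the $M\alpha^2 = o(1)$ that the paper's statement implicitly requires. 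Second, your attribution of the factor $M$ to the $\sqrt{M}$ in (\ref{expr:inner-product-matrix-bound}) is misplaced: that inequality plays no role in this lemma and is used only later in the theorem's proof, when the Gaussian matrix inner products are reduced to the column-wise quantities $K = Q^T\Psi$; in the paper's proof of the present lemma the $M$ arises purely from the crude summation over the $J = O(M)$ right-atom indices. This confusion is harmless here, since your tighter bound implies the advertised one. Finally, the uniform bound $\max_{k,\ell}|c_{k,\ell}| = O(1)$, which you rightly flag as the delicate point (expansions over an overcomplete dictionary are non-unique, so one cannot simply invert), is treated no more carefully in the paper --- it is asserted in one line from $Q$ having $O(1)$ norm --- so your plan to extract it from the well-conditioning of a basis subset of atoms is, if anything, more explicit than the published argument.
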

\begin{proof}
    We note that by linearity of the inner product,
    \begin{align}
        \inner{Q}{\psi_{i} \phi^T_{j}}
        = c_{i,j} + \sum_{(k,\ell) \neq (i, j)} c_{k,\ell} \inner{ \psi_{k}\phi^T_{\ell} }{ \psi_{i}\phi^T_{j} }.
    \end{align}
    The coefficients $c_{k,\ell}$ are uniformly $O(1)$ by virtue of $Q$ having norm $O(1)$, so this simplifies
    to
    \begin{align}
        \inner{Q}{\psi_{i} \phi^T_{j}} - c_{i,j} 
        &= O(1) \cdot \sum_{(k,\ell) \neq (i, j)} \inner{ \psi_{k}\phi^T_{\ell} }{ \psi_{i}\phi^T_{j} } \\
        &= O(1) \sum_{(k,\ell) \neq (i, j)} \inner{ \psi_{k}}{\psi_{i}} \inner{\phi^T_{\ell} }{\phi^T_{j}} \\
        &\leq O(1)\sum_{(k,\ell) \neq (i, j)} \inner{ \psi_{k}}{\psi_{i}} \\
        &\leq O(M\alpha^2).
    \end{align}

\end{proof}
Lemma~\ref{lemma:inner-products-suffice} implies that we may upper bound the coefficients of an expansion of a matrix $Q$ using the inner products of $Q$ with dictionary elements.  The upper bound (\ref{expr:inner-product-matrix-bound}) allows us to further upper bound $\inner{Q}{\psi_{i} \phi^T_{j}}$, reducing our problem
to upper bounding the entries of a multivariate Gaussian random variable.  Specifically, if we denote by $K \in \R^{M\times I}$ the matrix $K = Q^T \Psi$, then $K_{i,j} = \inner{Q_{\cdot, i}}{\psi_{j}}$.  We then have that
\begin{align}
    \max_{i,j} |c_{i,j}| 
    \leq  \sqrt{M} \max_{i,j} |K_{i,j}| + o(1).
\end{align}

The next lemma gives us a tool to upper bound $\max_{i,j} |K_{i,j}|$ by using the fact that the covariance of
$K_{i,j}$ and $K_{i,\ell}$, for $\ell \neq j$, is equal to $\inner{\psi_{j}}{\psi_{\ell}}/\sqrt{NM} = \Sigma_{j,\ell}/\sqrt{NM}$.  In other words, the vector $\hat{K}$ obtained by appending the columns of $K^T$ into a column vector of dimension $M\cdot I$ has distribution $\Normal(0, \hat{\Sigma})$, where $\hat{\Sigma} \in \R^{MI\times MI}$ and satisfies
$\|\hat{\Sigma}^{1/2} \|_{op,\infty} = \frac{1}{\sqrt{N}} \cdot \| \Sigma^{1/2}\|_{op,\infty}$.

\begin{lemma}[Upper bound on the maximum of correlated Gaussians]
    \label{lemma:aux-max-of-gaussians}
    Let $X \sim \Normal(0, \Sigma)$ be a Gaussian vector in $\R^n$ with covariance matrix $\Sigma \in \R^{n\times n}$. Then we have that
    \begin{align}
        \E[\|X\|_{\infty}] = O(\|\Sigma^{1/2}\|_{op,\infty} \cdot \sqrt{\log n}).
    \end{align}
\end{lemma}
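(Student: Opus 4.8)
The plan is to reduce the bound on $\E[\|X\|_{\infty}]$ to a classical maximal inequality for Gaussians that depends only on the marginal variances of the coordinates $X_i$, and then to control those marginal variances using the operator-norm hypothesis. The central conceptual point is that, for an \emph{upper} bound on the maximum, the joint correlation structure of $X$ is irrelevant: it suffices to bound $\max_i \mathrm{Var}(X_i)$, and the quantity $\|\Sigma^{1/2}\|_{op,\infty}$ is precisely what delivers this control.

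First I would write $X = \Sigma^{1/2} Z$ where $Z \sim \Normal(0, I_n)$, so that each coordinate $X_i = \sum_{j} (\Sigma^{1/2})_{i,j}\, Z_j$ is a mean-zero Gaussian with variance $\sigma_i^2 = \|(\Sigma^{1/2})_{i,\cdot}\|_2^2$. The key identity is that the $L_\infty$ operator norm equals the maximum absolute row sum, namely $\|\Sigma^{1/2}\|_{op,\infty} = \max_i \sum_j |(\Sigma^{1/2})_{i,j}|$ (maximize $|(\Sigma^{1/2}x)_i|$ over $\|x\|_\infty=1$ by taking $x_j = \operatorname{sign}((\Sigma^{1/2})_{i,j})$). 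Applying $\|v\|_2 \leq \|v\|_1$ to each row then yields, for every $i$,
\[
    \sigma_i = \|(\Sigma^{1/2})_{i,\cdot}\|_2 \leq \|(\Sigma^{1/2})_{i,\cdot}\|_1 \leq \|\Sigma^{1/2}\|_{op,\infty} =: \sigma_{\max}.
\]

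Next I would invoke the standard sub-Gaussian maximal inequality. Each $X_i$, and likewise each $-X_i$, is sub-Gaussian with parameter $\sigma_{\max}$, and $\|X\|_\infty = \max_i |X_i|$ is a maximum over the $2n$ variables $\{\pm X_i\}$. A Chernoff/Jensen argument then gives, for any $\lambda > 0$, the chain $\exp(\lambda\,\E[\|X\|_\infty]) \leq \E\exp(\lambda \max_i |X_i|) \leq 2n\,\exp(\lambda^2 \sigma_{\max}^2/2)$; taking logarithms, dividing by $\lambda$, and optimizing over $\lambda$ yields $\E[\|X\|_\infty] \leq \sigma_{\max}\sqrt{2\log(2n)} = O(\sigma_{\max}\sqrt{\log n})$, which is the claimed bound.

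I do not anticipate a substantive technical obstacle: the two ingredients — the Gaussian maximal inequality and the reduction of marginal variances to the operator norm — are both routine once assembled in the right order. The only point requiring care is the verification that $\|\cdot\|_{op,\infty}$ is the maximum absolute row sum, which makes the $\ell_2 \leq \ell_1$ bound on each row immediate and thereby bounds every $\sigma_i$ uniformly by $\sigma_{\max}$; the slight subtlety worth flagging is simply the observation that correlations never enlarge the expected maximum, so that no further structural assumption on $\Sigma$ beyond the operator-norm bound is needed.
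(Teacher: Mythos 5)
Your proof is correct, and it takes a slightly different route from the paper's. The paper's (very terse) argument factors $X = \Sigma^{1/2} Z$ with $Z \sim \Normal(0, I_n)$ and then uses the deterministic operator-norm inequality $\|X\|_{\infty} = \|\Sigma^{1/2} Z\|_{\infty} \leq \|\Sigma^{1/2}\|_{op,\infty} \cdot \|Z\|_{\infty}$, reducing everything to the classical bound $\E[\|Z\|_{\infty}] = O(\sqrt{\log n})$ for the maximum of \emph{i.i.d.}\ standard normals. You share the decomposition $X = \Sigma^{1/2}Z$ but inject the operator norm differently: via the maximum-absolute-row-sum characterization of $\|\cdot\|_{op,\infty}$ and the inequality $\ell_2 \leq \ell_1$ applied to each row of $\Sigma^{1/2}$, you bound every marginal standard deviation $\sigma_i$ by $\|\Sigma^{1/2}\|_{op,\infty}$, and then apply the MGF/union maximal inequality directly to the correlated family $\{\pm X_i\}$, which, as you correctly note, needs no independence. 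Both routes are routine and give the stated bound, but the trade-off is worth recording: the paper's version is a one-line deterministic reduction to the i.i.d.\ case, whereas yours is marginally sharper --- since $\mathrm{Var}(X_i) = \Sigma_{ii} = \|(\Sigma^{1/2})_{i,\cdot}\|_2^2$, your argument in fact proves $\E[\|X\|_{\infty}] = O\bigl(\max_i \sqrt{\Sigma_{ii}} \cdot \sqrt{\log n}\bigr)$ with $\max_i \sqrt{\Sigma_{ii}} \leq \|\Sigma^{1/2}\|_{op,\infty}$, a refinement (dependence only on the diagonal of $\Sigma$, not on off-diagonal row mass) that the deterministic pull-out does not directly yield, and one that is harmless here since the lemma is only ever invoked with the weaker operator-norm quantity.
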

\begin{proof}
    This is a consequence of a well-known upper bound on the maximum of independent and identically distributed standard normal random variables, along with the fact that, for an isotropic, mean $0$ Gaussian vector $Z$, $\Sigma^{1/2}Z$ has covariance matrix $\Sigma$.
\end{proof}

\begin{lemma}[Upper bound on the maximum inner product between a noise vector and a dictionary atom]
    \label{lemma:gaussian-dictionary-inner-product-bound}
    Consider the matrix $K = Q^T \cdot \Psi \in \R^{M\times I}$ whose $(i,j)$th entry is the inner product
    of the $i$th column of $Q$ with the $j$th dictionary element of $\Psi$.  We have that with high probability,
    \begin{align}
        \max_{i,j} |K_{i,j}| = O(\sqrt{\log(MI)}/\sqrt{N}).
    \end{align}
\end{lemma}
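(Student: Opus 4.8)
The plan is to exploit the Gaussianity of $K = Q^T\Psi$ and reduce the claim to the bound on the maximum of correlated Gaussians already established in Lemma~\ref{lemma:aux-max-of-gaussians}. First I would observe that $K$ is a fixed linear image of the Gaussian matrix $Q$, so stacking its entries into a single column vector yields a centered Gaussian $\hat{K} \in \R^{MI}$ with $\hat{K} \sim \Normal(0, \hat{\Sigma})$. The two structural facts I need about $\hat{\Sigma}$ are exactly those computed in the paragraph preceding the lemma: the columns of $Q$ indexed by distinct $i$ are independent, so $\hat{\Sigma}$ is block diagonal with $M$ identical blocks proportional to $\Sigma$, and consequently $\|\hat{\Sigma}^{1/2}\|_{op,\infty} = \frac{1}{\sqrt{N}}\|\Sigma^{1/2}\|_{op,\infty}$. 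This is where the crucial $1/\sqrt{N}$ scaling enters, and it comes entirely from the per-entry variance of $Q$ together with the unit $L_2$ norm of the atoms.

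Next I would apply Lemma~\ref{lemma:aux-max-of-gaussians} to $\hat{K}$ with ambient dimension $n = MI$, which gives
\begin{align}
  \E[\max_{i,j}|K_{i,j}|] = \E[\|\hat{K}\|_\infty] = O\!\left(\|\hat{\Sigma}^{1/2}\|_{op,\infty}\sqrt{\log(MI)}\right) = O\!\left(\frac{\alpha}{\sqrt{N}}\sqrt{\log(MI)}\right),
\end{align}
and then invoke the approximate-orthogonality assumption $\|\Sigma^{1/2}\|_{op,\infty}\le\alpha = o(1) = O(1)$ to obtain the in-expectation version of the claim, namely $O(\sqrt{\log(MI)}/\sqrt{N})$.

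The remaining and most delicate step is to upgrade this expectation bound to a high-probability statement. For this I would use Gaussian concentration: the map $\hat{K}\mapsto\|\hat{K}\|_\infty$ is $1$-Lipschitz with respect to the Euclidean norm (since $\|\cdot\|_\infty\le\|\cdot\|_2$), so by the Borell--TIS inequality its fluctuations above the mean are sub-Gaussian with variance proxy equal to the spectral norm $\|\hat{\Sigma}\|_{op}$. I would control this spectral norm using the symmetry of $\Sigma^{1/2}$, which forces $\|\Sigma^{1/2}\|_{op,2}\le\|\Sigma^{1/2}\|_{op,\infty}$ by interpolation between the matching $\ell_1$ and $\ell_\infty$ operator norms, yielding $\|\hat{\Sigma}\|_{op} = \frac{1}{N}\|\Sigma\|_{op}\le \frac{\alpha^2}{N}$. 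Choosing a deviation threshold of order $\sqrt{\log(MI)}/\sqrt{N}$ then drives the failure probability to zero, which completes the high-probability bound.

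As an alternative that avoids both Lemma~\ref{lemma:aux-max-of-gaussians} and the concentration machinery, I could instead take a direct union bound: each entry $K_{i,j}$ is marginally a centered Gaussian of variance of order $1/N$ (the per-entry variance of $Q$ times $\|\psi_{j}\|_2^2 = 1$), so a standard Gaussian tail estimate gives $\Pr[|K_{i,j}| > t] \le 2\exp(-\Theta(Nt^2))$, and union-bounding over the $MI$ entries with $t = \Theta(\sqrt{\log(MI)/N})$ reproduces the claim. The main obstacle in either route is the same: correctly tracking the $1/\sqrt{N}$ covariance scaling and ensuring the logarithmic factor is exactly $\sqrt{\log(MI)}$ and not larger. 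The route through Lemma~\ref{lemma:aux-max-of-gaussians} additionally demands the concentration-of-measure argument and the spectral-norm bound on $\hat{\Sigma}$, whereas the union-bound route trades that step for a slightly looser but here-sufficient treatment of the inter-entry correlations.
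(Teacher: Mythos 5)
Your main route is essentially the paper's own proof: the paper likewise stacks the entries of $K$ into a Gaussian vector $\hat{K} \sim \Normal(0, \hat{\Sigma})$ and applies Lemma~\ref{lemma:aux-max-of-gaussians} together with the scaling $\|\hat{\Sigma}^{1/2}\|_{op,\infty} = \frac{1}{\sqrt{N}}\|\Sigma^{1/2}\|_{op,\infty}$ to obtain the expectation bound. The one divergence is the final expectation-to-probability step: the paper finishes with a one-line application of Markov's inequality, whereas you invoke Borell--TIS concentration plus a spectral-norm bound $\|\hat{\Sigma}\|_{op} \leq \alpha^2/N$. Your version is correct and strictly stronger --- it yields sub-Gaussian fluctuations, whereas Markov gives only a polynomial tail and, strictly, requires the threshold to exceed the expectation by an $\omega(1)$ factor for the failure probability to vanish, a slack the paper silently absorbs into the $O(\cdot)$ --- but it is machinery the paper does without. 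Your alternative union-bound route is also valid and is in fact the most elementary of the three: since the lemma only needs an upper bound, ignoring the inter-entry correlations and union-bounding over the $MI$ marginals with tail level $t = \Theta(\sqrt{\log(MI)/N})$ suffices, bypassing Lemma~\ref{lemma:aux-max-of-gaussians} entirely. One small caution on normalization: under the paper's noise model $Q = \frac{\sigma}{\sqrt{NM}}\cdot\Normal(0, I_{N\times M})$, each entry $K_{i,j}$ has variance $\Theta(1/(NM))$ rather than the $\Theta(1/N)$ you assert; this discrepancy only makes the claimed bound easier to meet (and the paper's own displayed chain is loose on the same point), so nothing in your argument breaks.
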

\begin{proof}
    We start with Lemma~\ref{lemma:aux-max-of-gaussians} applied to $\hat{K}$.  This yields
    \begin{align}
        \E[\|\hat{K}\|_{\infty}] 
        &= O(\|\hat{\Sigma}^{1/2}\|_{op,\infty} \cdot \frac{\sqrt{\log(MI)}}{\sqrt{NM}}) \\
        &= O(\frac{\sqrt{\log(MI)}}{\sqrt{N}} \cdot \| \Sigma^{1/2}\|_{op,\infty}).
    \end{align} 
    The proof is finished by applying Markov's inequality.
\end{proof}
A corollary of Lemma~\ref{lemma:gaussian-dictionary-inner-product-bound} is that the maximum inner product between $Q$ and the dictionary elements is $o(1)$ with high probability.  This implies, by Lemma~\ref{lemma:inner-products-suffice}, that the coefficients of $Q$ in any of its dictionary expansions are uniformly $o(1)$.  Because of the sparsity assumption on the coefficients of $R$, the data matrix $\hat{R}$ has $s$ coefficients that are $\Theta(1)$, while the rest are $o(1)$.  Thus, provided that the dictionary atom selection procedure selects $\hat{k} \geq s$ atoms with $\hat{k} = \Theta(1)$, those atoms for which $R$ has nonzero coefficients will be among those selected.  As a result, the reconstructed matrix $R_{reconst}$ differs from the signal matrix $R$ by only $o(\|R\|_{F})$.
This completes the proof of Theorem~\ref{thm:convergence-top-k}.

We note that only a minor tweak of the above proof is needed to extend to the case where dictionary selection is iteratively applied for a fixed number $t \geq s/\hat{k}$ of steps, each time to the residual of the previous step.  Specifically, in order to formulate this, we need more notation.  Suppose, as before, that $R$ is a linear combination of $s$ atoms, each with coefficient uniformly $\Theta(1)$.  Suppose that $\hat{k} < s$.
Let $R_{\leq \hat{k}}$ denote the truncation of $R$ to its top $\hat{k}$ atoms (i.e., those with the largest coefficients in absolute value), and let $R_{> \hat{k}} := R - R_{\leq \hat{k}}$.  That is, $R_{> \hat{k}}$ is the residual of the signal matrix after subtracting $R_{\leq \hat{k}}$.  Finally, we define $R_{reconst,\leq \hat{k}}$ to be the output of the algorithm after a single iteration.
The proof of our theorem so far showed that $R_{reconst,\leq \hat{k}} - R_{\leq \hat{k}} = o(R_{\leq \hat{k}})$.  This implies the following:
\begin{align}
    \hat{R} - R_{reconst,\leq \hat{k}}
    &= (Q + R_{\leq \hat{k}} + R_{> \hat{k}}) - R_{reconst,\leq \hat{k}} \\
    &= (R_{> \hat{k}} + Q) + (R_{\leq \hat{k}} - R_{reconst,\leq \hat{k}}) \\
    &= (R_{> \hat{k}} + Q) + o(R_{\leq \hat{k}}).
\end{align}
We note that $\hat{R} - R_{reconst,\leq \hat{k}}$ is the residual after applying a single iteration of the top-$\hat{k}$ atom selection algorithm.  After at least $t-1$ applications of the algorithm to the residual matrix of the previous step, the final residual matrix consists of fewer than $\hat{k}$ nonzero atoms, plus Gaussian noise.  This satisfies the hypotheses of our theorem statement.  Since the sparsity parameter $s$ is assumed to be $\Theta(1)$, the total accumulated error over all steps of the algorithm is $o(R_{\leq \hat{k}})$, which is $o(R)$ in the Frobenius norm.

\subsection{Lemma-supporting numerical experiments.}\label{sec:proof-exp}

To empirically demonstrate the bound in Lemma~\ref{lemma:gaussian-dictionary-inner-product-bound} we generate random Gaussian noise matrices $Q$ of size $N \times M$ for increasing $N$ and while keeping $M$ set to $1000$. For each generated matrix we learn an encoding matrix $Z$ via 2D-OMP as in Sec.~\ref{sec:theor_exp}. We then plot the max coefficient for each $Z$ in Fig.~\ref{fig:noise_max_coef}. We can clearly see from the figure that as $N$ grows the maximum coefficient shrinks, thus empirically confirming the bound from Lemma~\ref{lemma:gaussian-dictionary-inner-product-bound}.

\begin{figure}[h]
   \footnotesize
    \centering 
    \includegraphics[width=0.4\linewidth]{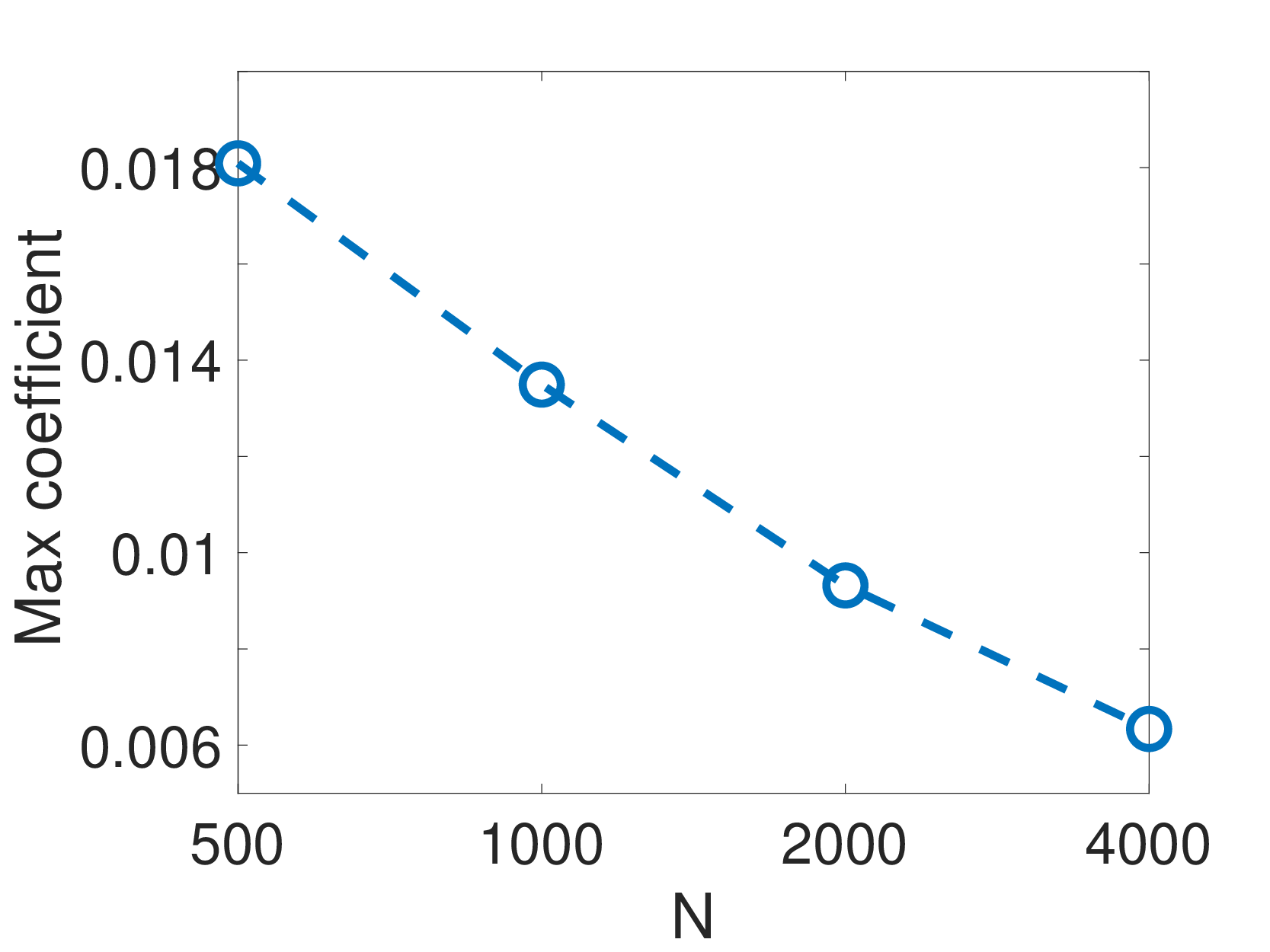}
    \caption{\footnotesize Max coefficient of the learned coefficient matrix of the noise data while N increases.}
       \label{fig:noise_max_coef}
\end{figure}

\section{Reproducibility} \label{sec:reproducibility}

To facilitate reproducibility we discuss further details of the derivation of \ourmeth in Sec.~\ref{sec:sup_LRMDS_sol}, and baseline methods in Sec.~\ref{sec:baseline_sol_det}. We also add details on our synthetic data generation and pre-processing performed to real-worlds dataset in Sec.~\ref{sec:sup_data}. Finally, we describe how parameters were tuned for all methods in Sec.~\ref{sec:sup_param_set}. 

\subsection{\ourmeth Solution Details}\label{sec:sup_LRMDS_sol}

\ourmeth has two key steps: atom selection and encoding.
To perform atom selection we quantify the alignment of each 2D atom with the current residual via projection. We can compute $R$'s projection as follows:
\begin{equation}
    \label{eq:align}
    P_{i,j} = \frac{\langle R, B_{i,j} \rangle}{|| B_{i,j} ||_F},
\end{equation}
where $\langle R, B_{i,j} \rangle \triangleq \psi_i^T R \phi_j$ is the alignment, and $|| B_{i,j}||_F$ is a normalization based on the Frobenius norm of the 2D atom product. Intuitively atoms of good alignment will be advantageous for encoding the data. Instead of utilizing Eq.~\ref{eq:align} in the algorithm for \ourmeth we perform the functionally equivalent projection via $P=\hat{\Psi}^TR\hat{\Phi}$ employing the normalized dictionaries $\hat{\Psi}$ and $\hat{\Phi}$.
Due to the normalization, the denominator from Eq.~\ref{eq:align} can be omitted since:
$$||B_{i,j}||_F =|| \hat{ \psi_i}||_2 \cdot || \hat{\phi_j}||_2=  1  \hspace{.1pt} , \forall i,j.$$ 
This in turn allows to us utilize simply matrix multiplication $\hat{\Psi}^TR\hat{\Phi}$ to obtain alignment scores for atoms.

We then  select the top $k$ total atoms from a combination of left or right dictionary atoms with respect to this alignment. To illustrate the methodology, suppose $k = 3$, and the top alignments correspond to $P_{2,3},P_{3,3}$ in descending order. We would then add the atoms $\psi_2$, $\phi_3$, $\psi_3$ and in that order to our sub-dictionaries we call $\Psi_s\in \mathcal{R}^{N \times I_s} $ and $\Phi_s\in \mathcal{R}^{M \times J_s}$, where $I_s, J_s$ are the number of selected atoms from the left and the right dictionaries.  It is important to note that we only add atoms if they don't already exists in our selected sub-dictionary. Importantly this may result in uneven selection from the dictionaries (i.e $I_s \neq J_s$). This is desirable as there may be significantly more complexity in one of $X$'s modes, necessitating more atoms from the corresponding dictionary for good representation. Intuitively, we let the data guide the selection on both sides. Ties between atoms are resolved arbitrarily.

Once we have sub-selected the dictionary via these chosen atoms we need to solve for the encoding coefficients in $Y$ and $W$ by solving the following:

\begin{equation}
    \begin{aligned}
        \underset{Y,W} {\mathrm{argmin}} \hspace{0.1cm} & || R - \Psi_s Y W \Phi_s^T ||_F^2,
    \end{aligned} 
\end{equation}

To achieve this we iteratively alternate through solving for $Y$ and $W$ while the other is fixed.
The updates in each case can be derived by taking the gradients with respect to the non-fixed variable, setting them to $0$, and solving. This results in the following update rules:

(1) Given $W$, the update rule for $Y$ is as follows: 
\begin{equation}
    \begin{aligned}
    \Psi_s ^\dagger \Psi_s  Y  W \Phi_s^T  (W \Phi_s^T )^\dagger &=
      \Psi_s ^\dagger   R (W \Phi_s^T )^\dagger \\
      Y &= \Psi_s ^\dagger   R (W \Phi_s^T )^\dagger,
    \end{aligned}
    \label{eq:solve_Y}
\end{equation}

where $\dagger$ denotes the pseudo-inverse of the corresponding matrix.

(2) Given $Y$, the update rule for 
$W$ is:
\begin{equation}
    \begin{aligned}
      (\Psi_s  Y)^\dagger \Psi_s  Y  W \Phi_s^T (\Phi_s^\dagger)^T   &= (\Psi_s  Y )^\dagger  R (\Phi_s^\dagger)^T   \\
      W &= (\Psi_s  Y )^\dagger   R (\Phi_s^\dagger)^T
    \end{aligned}
    \label{eq:solve_W}
\end{equation}

Note that at every iteration, the update rules for the two variables require four pseudo-inversions solved via singular value decomposition with per-iteration complexity of $O(min(mn^2,m^2n))$, where $m, n$ are the size of the target matrix. The dictionary inversions $\Psi_s^\dagger$ and $\Phi_s^\dagger$ can be computed only once per decomposition as they are fixed with respect to $Y$ and $W$. 
Thus, the overall complexity of these steps assuming  $N>I_s$, and $M>J_s$ is 
$O(NI_s^2)$ for $\Psi_s^\dagger$ and   $O(MJ_s^2)$ for $\Phi_s^\dagger$. We need to compute  $(\Psi_s  Y )^\dagger$ and  $(\Phi_s  W )^\dagger$ for 
every iteration, thus assuming $q$ iterations to convergence the total complexity of the coding step is $O(q(N+M)r^2 + MJ_s^2 + NI_s^2)$ assuming the selected decomposition rank is lower than the corresponding data dimensions, i.e., $N>r$ and $M>r$.

We can further optimize the run time based on the assumption that the inversions of both products $(W \Phi_s^T )^\dagger$ and $(\Psi_s  Y )^\dagger$ involve matrices of full column (left matrix in the product) and row (right matrix) rank. Then we can separate the inversions and open more opportunities for savings by using the following matrix product inversion rule due to~\cite{greville1966note}:

\begin{equation}
    \begin{aligned}
        (A B)^\dagger = B^\dagger A^\dagger
    \end{aligned}
    \label{eq:inverse}
\end{equation}

Updates from Eq.~\ref{eq:solve_Y} and Eq.~\ref{eq:solve_W} can then be rewritten as:

\begin{equation}
    \begin{aligned}
      Y = \Psi_s ^{\dagger}  X  (\Phi_s ^\dagger)^T W^\dagger
    \end{aligned}
    \label{eq:solve_Y_fast}
\end{equation}

\begin{equation}
    \begin{aligned}
        W = Y^\dagger   \Psi_s ^\dagger X (\Phi_s^\dagger)^T, 
    \end{aligned}
    \label{eq:solve_W_fast}
\end{equation}
where $\Psi_s ^{\dagger}  X  (\Phi^\dagger)^T$ is a common term that can be pre-computed outside of the iterative updates. This enables us to only need to compute the pseudo-inversion of $Y$ and $W$ within the inner-loop. $Y^\dagger$ and $W^\dagger$ have complexity $O(r^2I_s)$ and $O(r^2J_s)$ respectively. Reducing the overall complexity is to $O(q(I_s+J_s)r^2 + MJ_s^2 + NI_s^2)$ which is linear with respect to the size of input matrix. We name this faster \ourmeth variation method \ourmeth-f.
Note that when the conditions for Eq.~\ref{eq:inverse} are not met, our encoding will be not as accurate in this variant, but we demonstrate experimentally that this alternative solver offers a good runtime-quality trade-off.  




\noindent \textbf{Dictionaries for \ourmeth:} 
Our framework is designed to flexibly accommodate any dictionaries $\Phi$ and $\Psi$ for encoding.
In our experimental evaluation we utilize a variety of commonly used dictionaries for spatio-temporal datasets following the ones adopted in~\cite{TGSD}. Namely, the graph Fourier transform (GFT) \cite{shuman2013emerging}, and Graph-Haar Wavelets \cite{crovella2003graph} for modes corresponding to nodes in a network. For modes corresponding to temporal samples we utilize two alternative temporal dictionaries: the Ramanujan periodic dictionary~\cite{tennetiTSP2015} and a Spline dictionary~\cite{Goepp2018Spline}. For a concise summary of these dictionaries we refer the reader to~\cite{TGSD}. 


\subsection{Baseline Solution Details} \label{sec:baseline_sol_det}
For TGSD and 2D-OMP we employ the implementation provided by the authors. However, SC-TGSD which combines 1D dictionary screening and TGSD and the ablation study variants of our method: \ourmeth-1D, and RAND are novel formulations which require changes in the implementation. The details of the latter implementations are described next.

\noindent \textbf{SC-TGSD screening process:} SC-TGSD screens (removes) the worst dictionary atoms from a dictionary by calculating alignment scores between atoms and associated data. It then removes atoms whose alignment falls below a preset threshold. Formally the screening process produces a subselected dictionary:  $\Psi_s= \Psi \setminus{ 
 \{ (x^T \psi_i) \leq \lambda, \forall i \in I)\}} $. This is effectively the Sphere Test 2 from \cite{xiang2016screening}. The screening process in SC-TGSD is similar to OMP in how it calculates its alignment scores.  To extend the screening to 2D data we simply vectorize the input $X$ and all pairwise 2D atoms $\psi_i \phi_j^T$ and use the original screening method from~\cite{xiang2016screening} with the resulting vectors. To reduce running time, similar to our approach, instead of computing all inner products, we use the equivalent alignment computed as $\hat{\Psi}^T X \hat{\Phi}$, and select the atoms from either dictionary which don't fall below the set threshold.

\noindent \textbf{\ourmeth-1D and RAND. }
The implementations of \ourmeth-1D and RAND are essentially identical to \ourmeth with the exception of the sub-dictionary selection steps.  Specifically, in \ourmeth-1D to choose the atoms of sub-dictionary $\Psi_s$ the residual is projected on only $\Psi$ ($P_1 = \Psi^T R$). Then rows of $P_1$ are ranked by total energy, the top $k_1$ indices are determined and the associate atoms added to the set of included atoms in sub-dictionary $\Psi_s$. A similar process is used to find $\Phi_s$: Calculate $P_2 = R \Phi^T$, select top $k_2$ column indices from  $P_2$, and add the associated atoms to sub-dictionary $\Phi_s$. This approach is also similar to performing generalized OMP atom selection for each dictionary separately~\cite{tropp2005simultaneous,wang2012generalized}. 
For the RAND method in each iteration, we randomly select $k_1$ atoms from $\Psi$, and randomly select $k_2$ atoms from $\Phi$.  Once the sub-dictionaries are selected in this fashion the method proceeds with encoding in the same manner as \ourmeth.

\subsection{Datasets Generation and Pre-processing}
\label{sec:sup_data}

\noindent\textbf{Synthetic data generation:}
Unless otherwise noted in specific experiments, the variables in our model for synthetic data are set to the following: $\Psi_s$ corresponds to $20$ randomly chosen atoms from a GFT dictionary which itself is generated from a Stochastic Block Model (SBM) graph with $3$ blocks of equal size and $1000$ total nodes and internal and cross-block edge probabilities set to $0.2$ and $0.02$ respectively. $\Phi_s$ contains $20$ randomly selected atoms from a Ramanujan periodic dictionary. The entries of $Y$ and $W$ are set to uniformly random numbers between $0$ and $1$ and the rank $r$ is set to $3$. Finally, $\epsilon$ is Gaussian white noise with magnitude ensuring an overall $SNR=10$ for the signal.

\noindent\textbf{Real-world dataset:} 
The original Twitch dataset specifies active viewers over time and the streams that they are viewing. We create a graph among viewers, where an edge between a pair of viewers exists if they viewed the same stream at least 3 times over a period of $512$ hours (which is the temporal dimension of the dataset). The largest connected component of this co-viewing graph involve $78,389$ viewers. Each entry of the data matrix $X\in\mathcal{R}^{78389 \times 512}$ from Twitch represents the number of minutes in any given hour that the viewer spent viewing streams on the platform. The Wiki dataset captures hourly number of views of Wikipedia articles for a total of $792$ hours. We construct a graph among the articles by placing edges between articles with at least $10$ pairwise (clicked by the same IPs) click events within a day. Furthermore, we pick a starting node (the Wikipedia article on China) and construct a breadth-first-search (snowball) subgraph of $1000$ nodes around it. We removed an article that was not sufficiently active during the observed period resulting in $999$ total nodes. The Covid dataset tracks daily confirmed COVID cases for $3047$ counties in the US for $678$ days. We use a k-nearest neighbor ($k=5$) spatial graph connecting counties to their closest neighbors. The Road dataset consists of $1923$ highway speed sensors in the LA area, we use the hourly average speed for $30$ days as our signal matrix, and the graph is based on connected road segments.



\begin{table*}[ht]
\footnotesize
\setlength\tabcolsep{3 pt}
\centering
 \begin{tabular}{|c|c|c|c|c|c|c|c|c|c|} 
 \hline
 Method & Parameters &  Range & Synthetic & Convergence & Ablation & Twitch & Road & Wiki & Covid\\
 \hline
 TGSD  & $\lambda_1, \lambda_2$ & $[10^{-3}, 10^{-2}, \cdots, 10^{3}]$ & Vary & Vary & Vary & Vary & Vary & Vary & Vary\\
 \hline
2D-OMP  & $T_0$ & $3 - 100\% ~\# \text{atoms}$ & $3.5\%$ & $100\%$ & NA & $13\%$ & $40\%$ & $50\%$ & $50\%$\\
 \hline
 SC-TGSD  & $\frac{\lambda_0}{\lambda_{0_{max}}}, \lambda_1, \lambda_2$ & $(0.1:0.01:0.9);[10^{-3}, 10^{-2}, \cdots, 10^{3}]$ & Vary & NA & NA & Vary & Vary & Vary & Vary\\
 \hline
 \ourmeth  & $k$ & $[5, 6, 10, 100, 500]$ & 5 & 10 & 6 & 500 & 100 & 100 & 100\\
 \hline
 \ourmeth-f  & $k$ & $[5, 6, 10, 100, 500]$ & 5 & 10 & 6 & 500 & 100 & 100 & 100\\
 \hline
\end{tabular}
\caption{\footnotesize Parameters for competing methods where $\lambda_1, \lambda_2$ are sparsity parameters for TGSD; $T_0$ is the targeting number of coefficients for 2D-OMP;  $\frac{\lambda_0}{\lambda_{0_{max}}}$ is the regularizer for SC; $k$ is the number of atoms selected per iteration for \ourmeth and \ourmeth-f. Some methods are not included in the convergence and ablation experiments and the corresponding cells are marked as NA for Not Applicable. Ranges for tested values are listed in the Range column.}
\label{table:method_params}
\end{table*}

\subsection{Hyper-parameter Settings} \label{sec:sup_param_set}

The parameter settings for all competing techniques unless otherwise specified are as follows. We set the rank for low rank decomposition methods (\ourmeth, \ourmeth-f, TGSD, SC-TGSD, \ourmeth-1D, RAND ) to be $r=3$ in synthetic (equal to the ground truth) and $r=50$ in real-world datasets. For all real-world datasets, we set the number of atoms per iteration for \ourmeth and \ourmeth-f to be $k=100$ for all experiments except for Twitch in which $k=500$ since this dataset is large and the input dictionaries have in total close to $80,000$ atoms. 
In both synthetic and real world datasets, we vary SC-TGSD's screening parameter $\frac{\lambda_0}{\lambda_{0_{max}}}$ in the range of $0.1$ to $0.9$ with a step size of $0.01$ to create a set of regimes of selected sub-dictionaries and associated RMSE/running time. 

Controlling the number of selected atoms exactly for all competitors is not trivial as TGSD and SC-TGSD employ sparsity regularizers ($\lambda_1, \lambda_2$) that do not offer explicit control over the number of used atoms. In order to facilitate direct but fair comparison we use the ground truth number of selected atoms in synthetic datasets as targets for 2D-OMP and \ourmeth, and report results for TGSD and SC-TGSD using sparsity levels resulting in atom ``selection'' closest to (but exceeding) the ground truth number. 

In our ablation study we set $k_1,k_2=3$ for both \ourmeth-1D and RAND, and $k=6$ for \ourmeth. We do this so that the total number of atoms selected by each method is $6$ at each iteration to facilitate fair comparison. We re-run RAND $10$ times and report its average performance in terms of RMSE and running time. 

Complete details on how parameters were searched (i.e., ranges) and set for each dataset can are listed in Tbl.~ \ref{table:method_params}.

\begin{figure}[t]
   \footnotesize
    \centering
    \subfigure [RMSE ]
    {
        \includegraphics[width=0.45\linewidth]{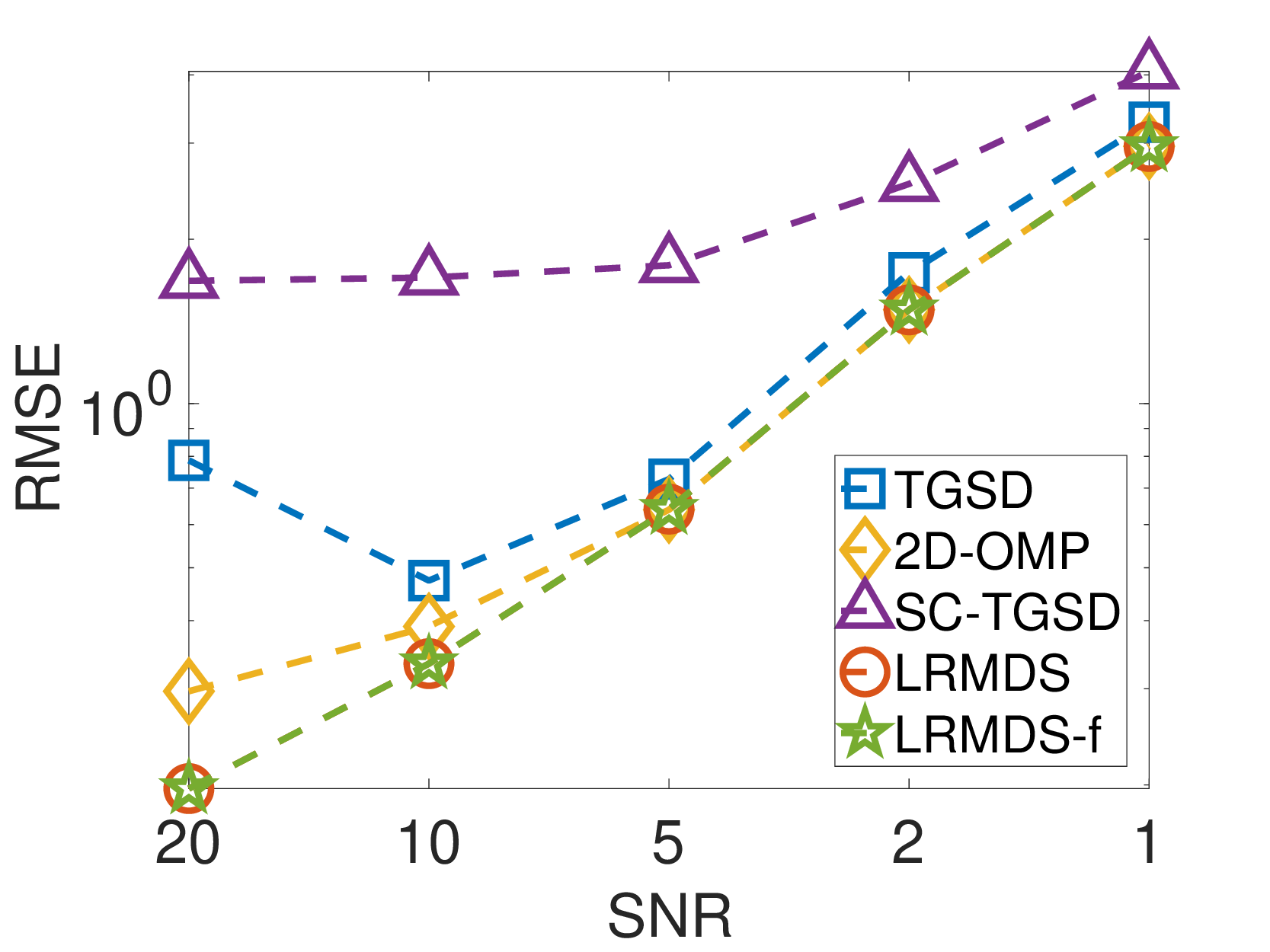}
        \label{fig:syn_snr_rmse}
    }
    \subfigure [Time]
    {
        \includegraphics[width=0.45\linewidth]{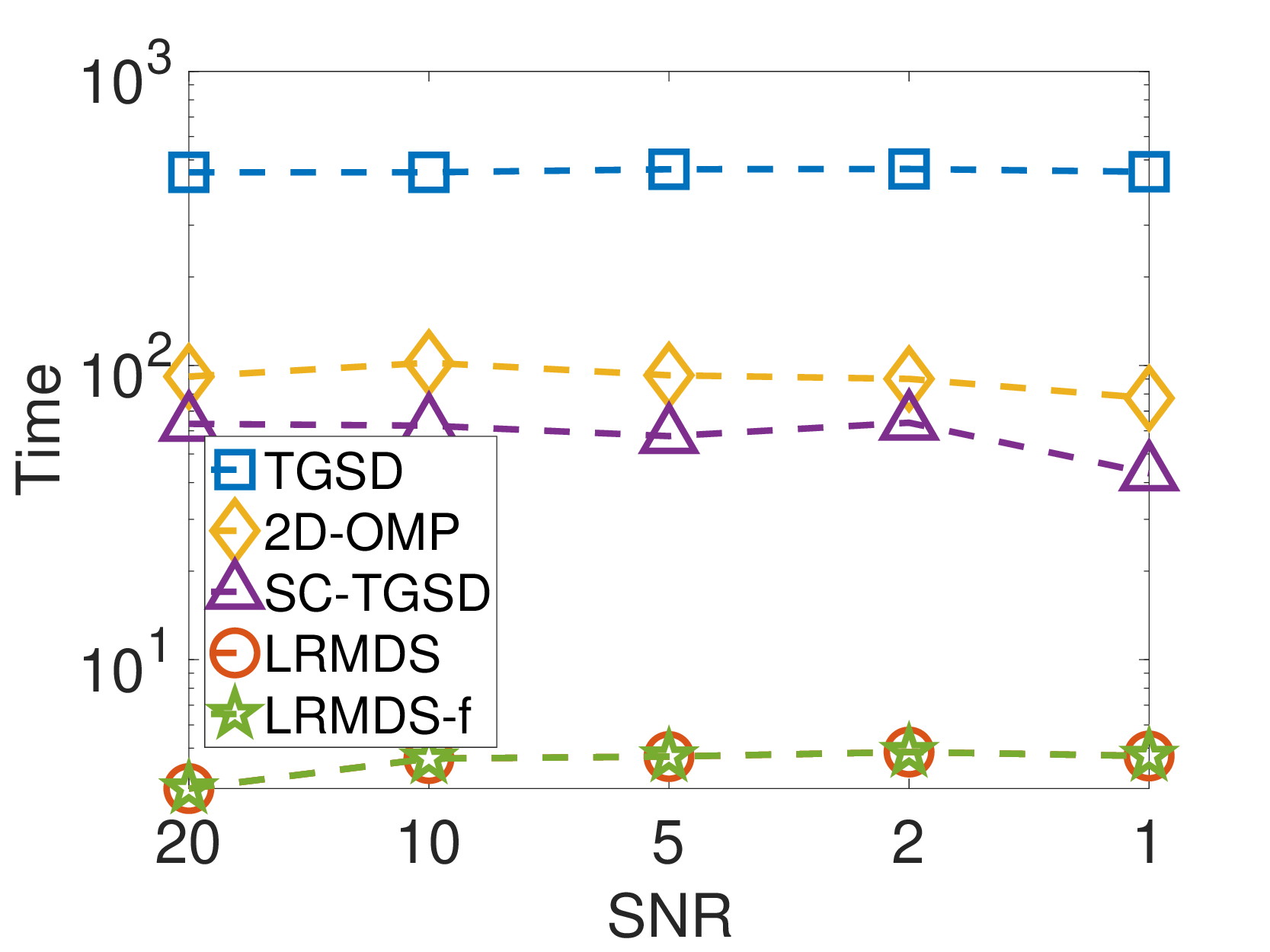}
        \label{fig:syn_snr_time}
    }
    \vsa
    \caption{\footnotesize Comparison of competing techiques' RMSE~\subref{fig:syn_snr_rmse} and running time~\subref{fig:syn_snr_time} for varying signal-to-noise-ratio (SNR) on synthetic data.}
    \label{fig:snr_trend}
\end{figure}

\begin{figure}
   \footnotesize
    \centering 
  
    \subfigure [\scriptsize G+R: RMSE vs Atom\%]
    {
        \includegraphics[width=0.45\linewidth]{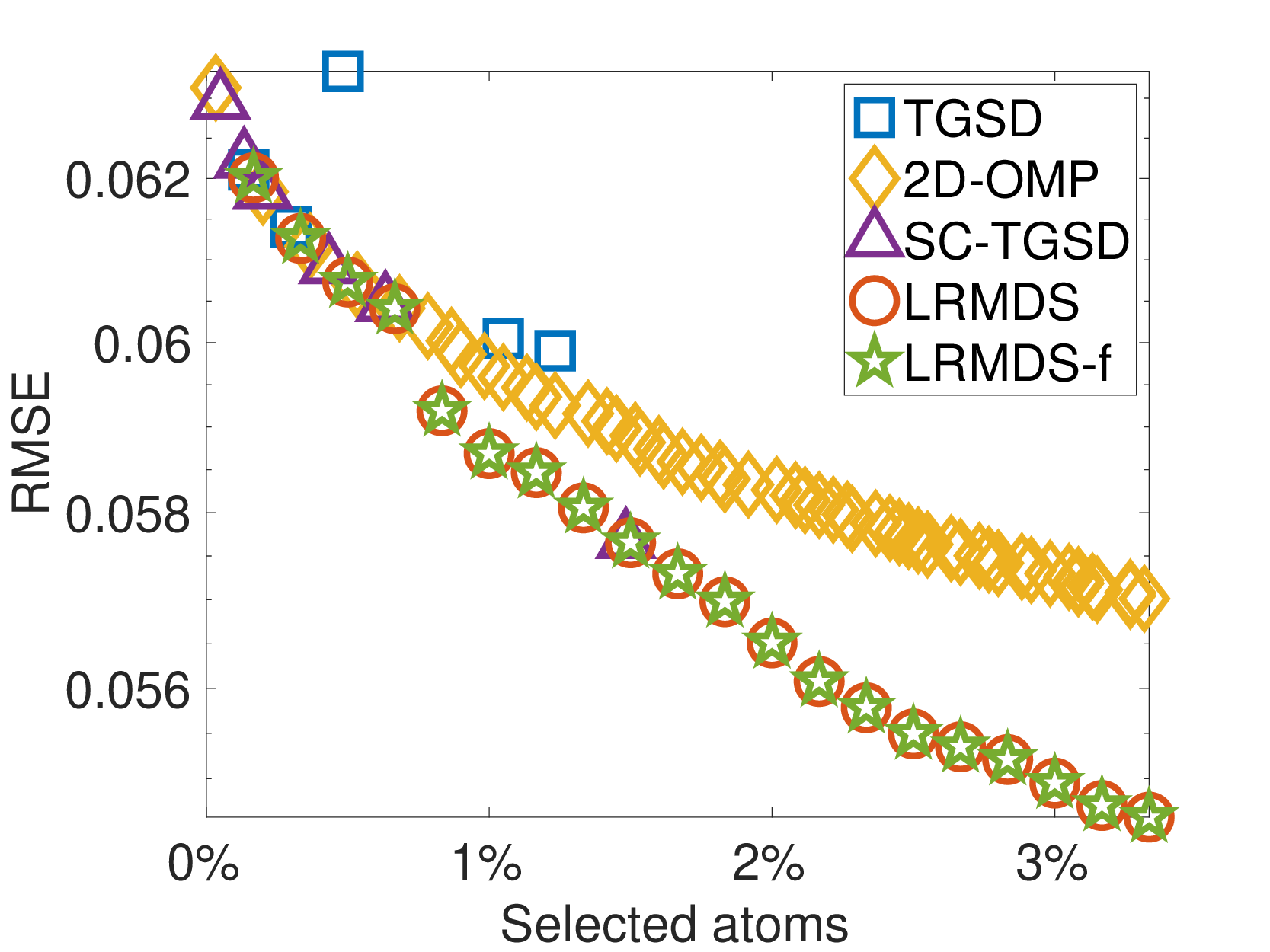}
        \label{fig:GR_rmse_vs_atom}
    }
    \subfigure [\scriptsize GW+R: RMSE vs Atom\%]
    {
        \includegraphics[width=0.45\linewidth]{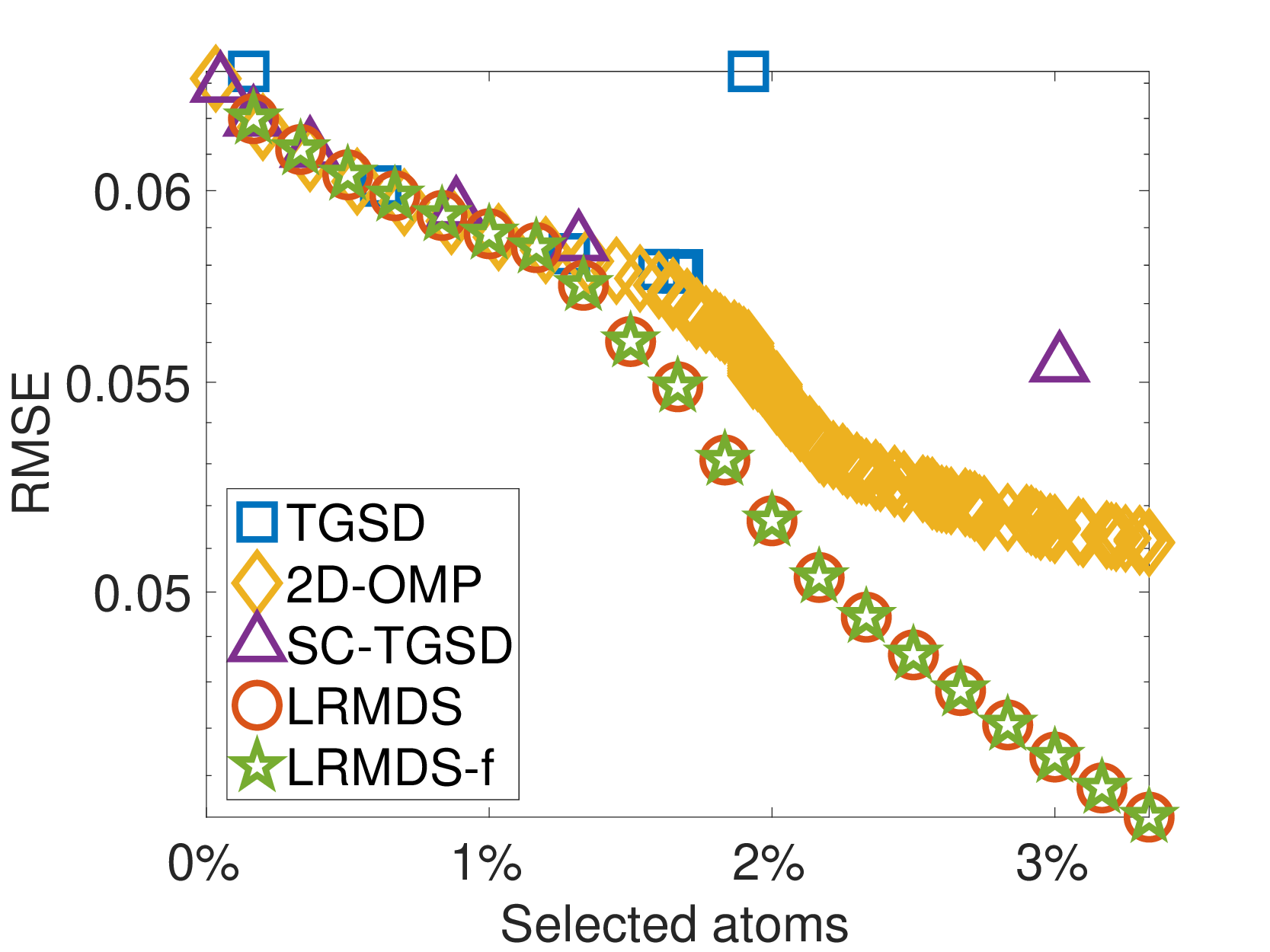}
        \label{fig:GWR_rmse_vs_atom}
    }
    \subfigure [G+R: Time vs Atom\%]
    {
        \includegraphics[width=0.45\linewidth]{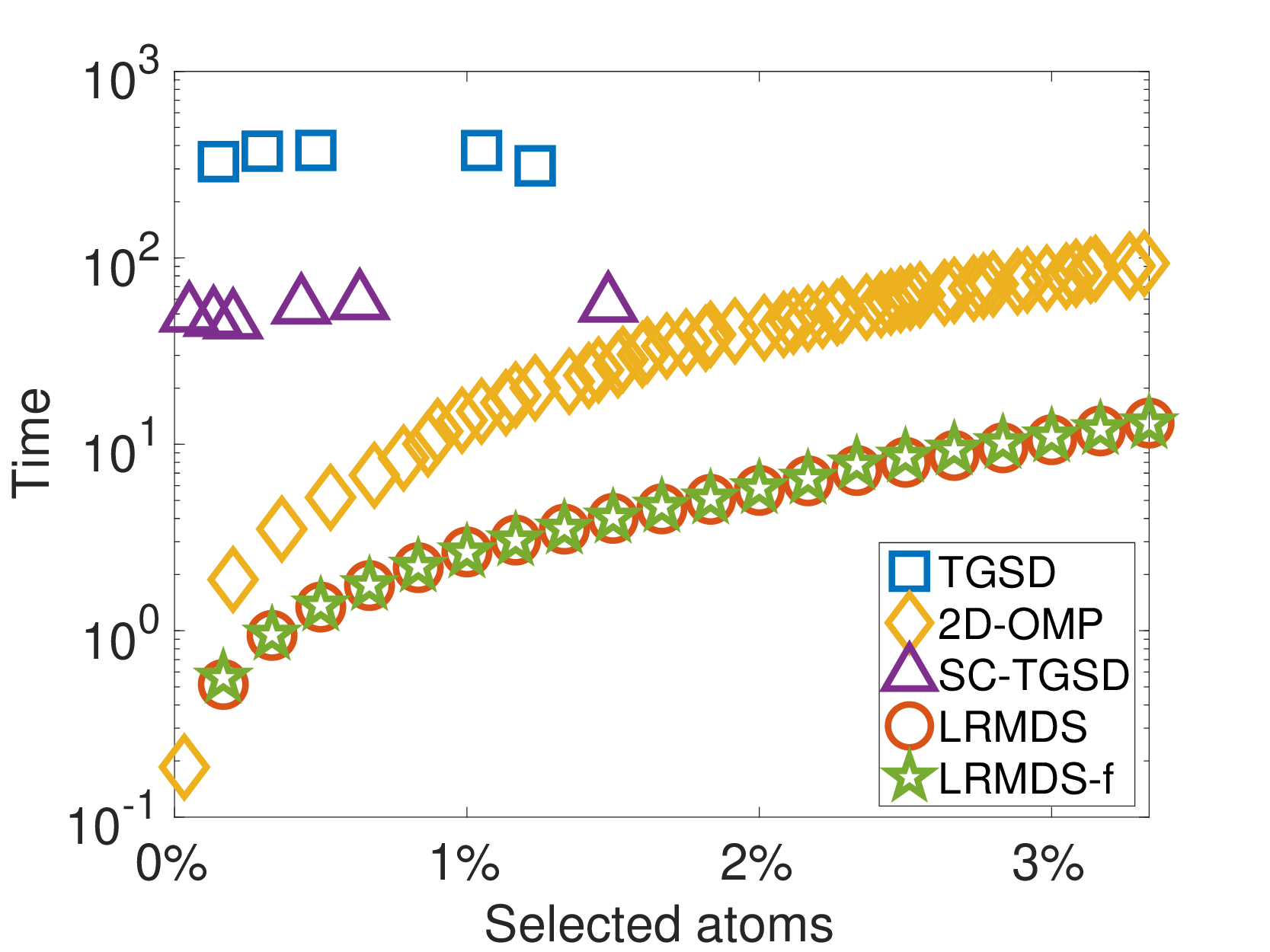}
        \label{fig:GR_time_vs_atom}
    }
    \subfigure [GW+R: Time vs Atom\%]
    {
        \includegraphics[width=0.45\linewidth]{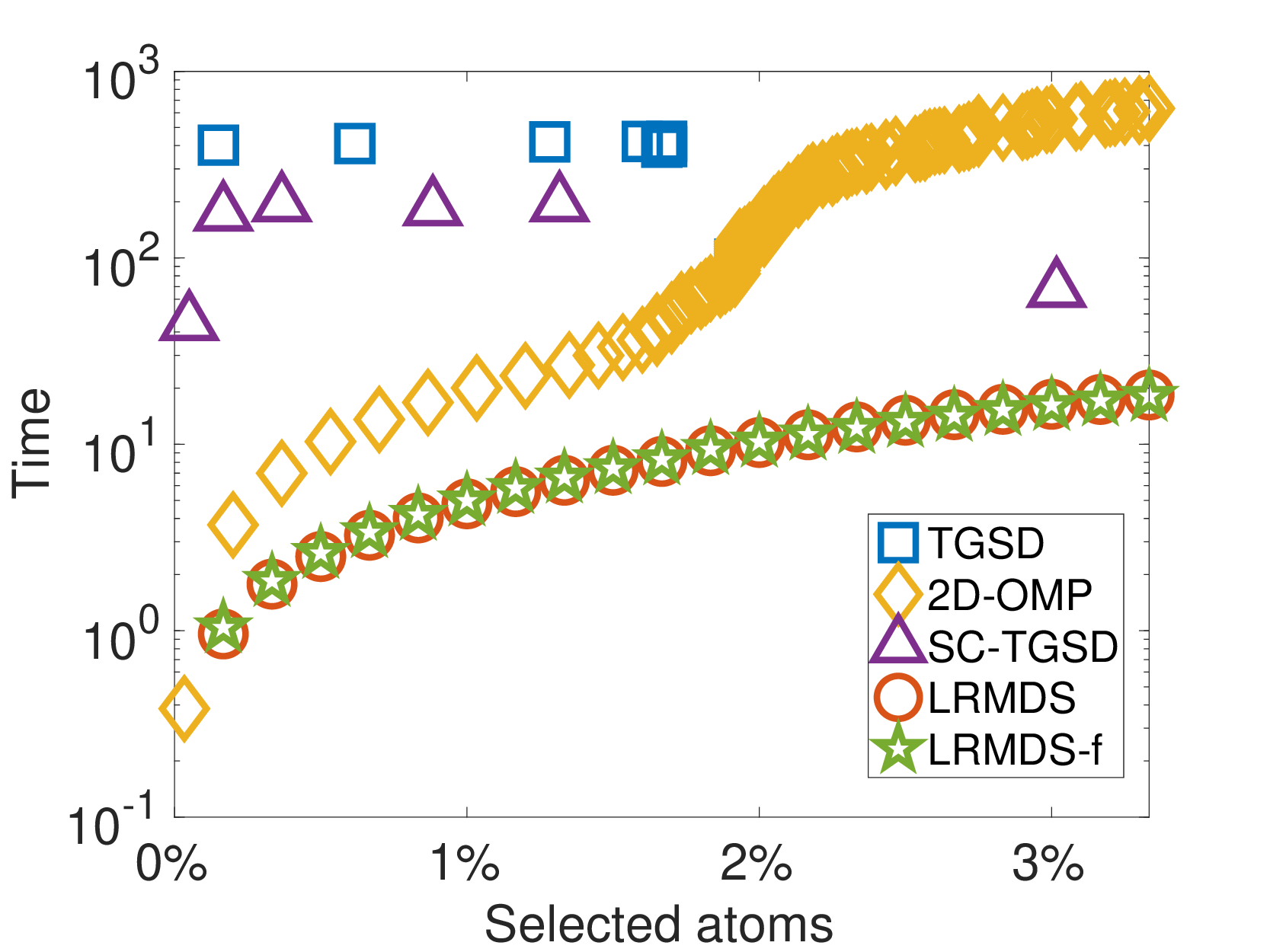}
        \label{fig:GWR_time_vs_atom}
    }

    \caption{\footnotesize Comparison between competing methods measuring, representation quality per percentage of atoms selected Fig.\subref{fig:GR_rmse_vs_atom}\subref{fig:GWR_rmse_vs_atom}, and run time per percentage of atoms selected Fig.\subref{fig:GR_time_vs_atom}\subref{fig:GWR_time_vs_atom}.}
    \label{fig:dict_types_sup}
\end{figure}

\begin{figure*}[th]
   \footnotesize
    \centering 
  \subfigure [G+R: RMSE vs Time]
    {
        \includegraphics[width=0.25\linewidth]{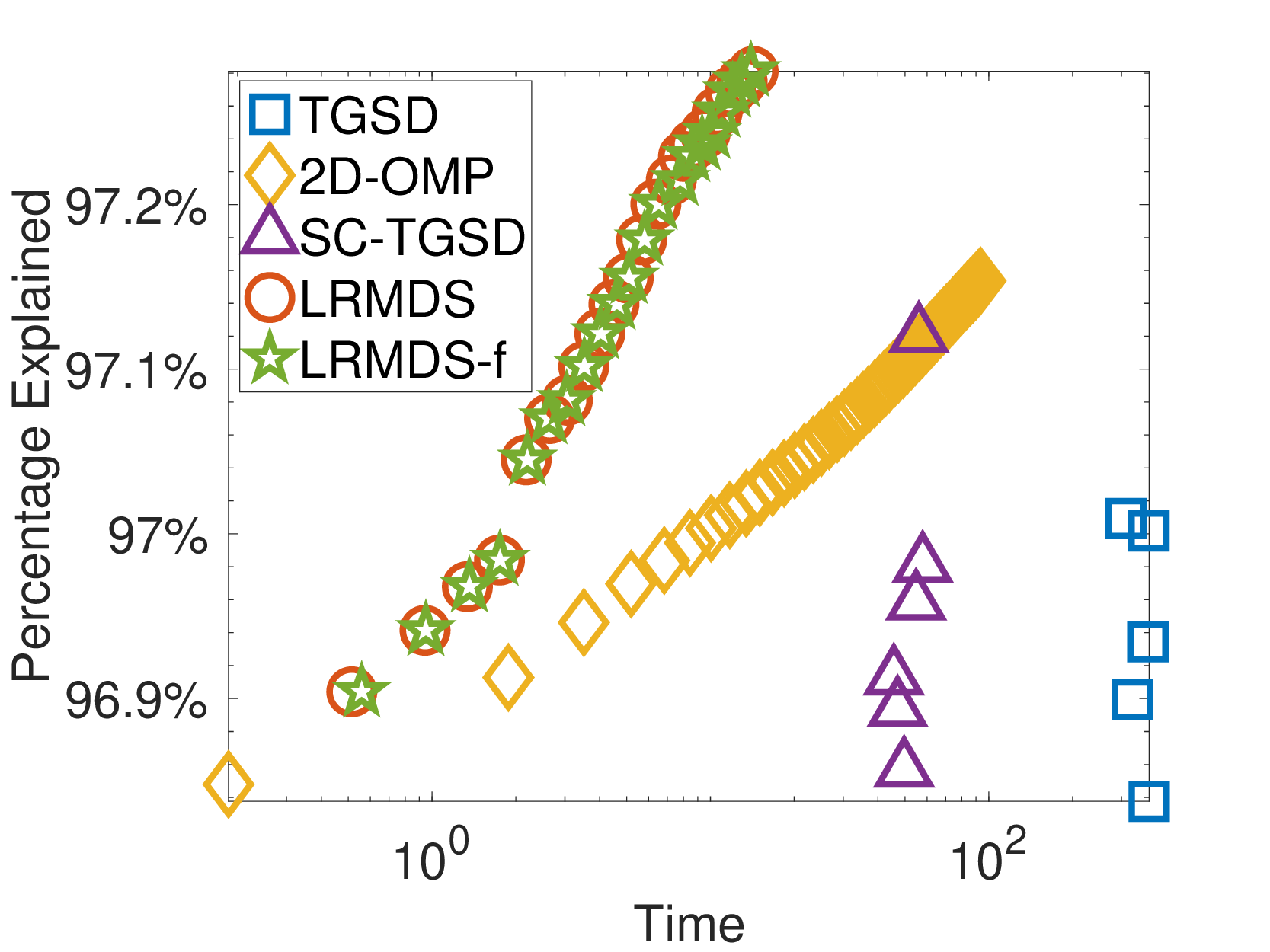}
        \label{fig:GR_rmse_vs_time}
    }
        \hspace{-0.2in}
    \subfigure [GW+R: RMSE vs Time]
    {
        \includegraphics[width=0.25\linewidth]{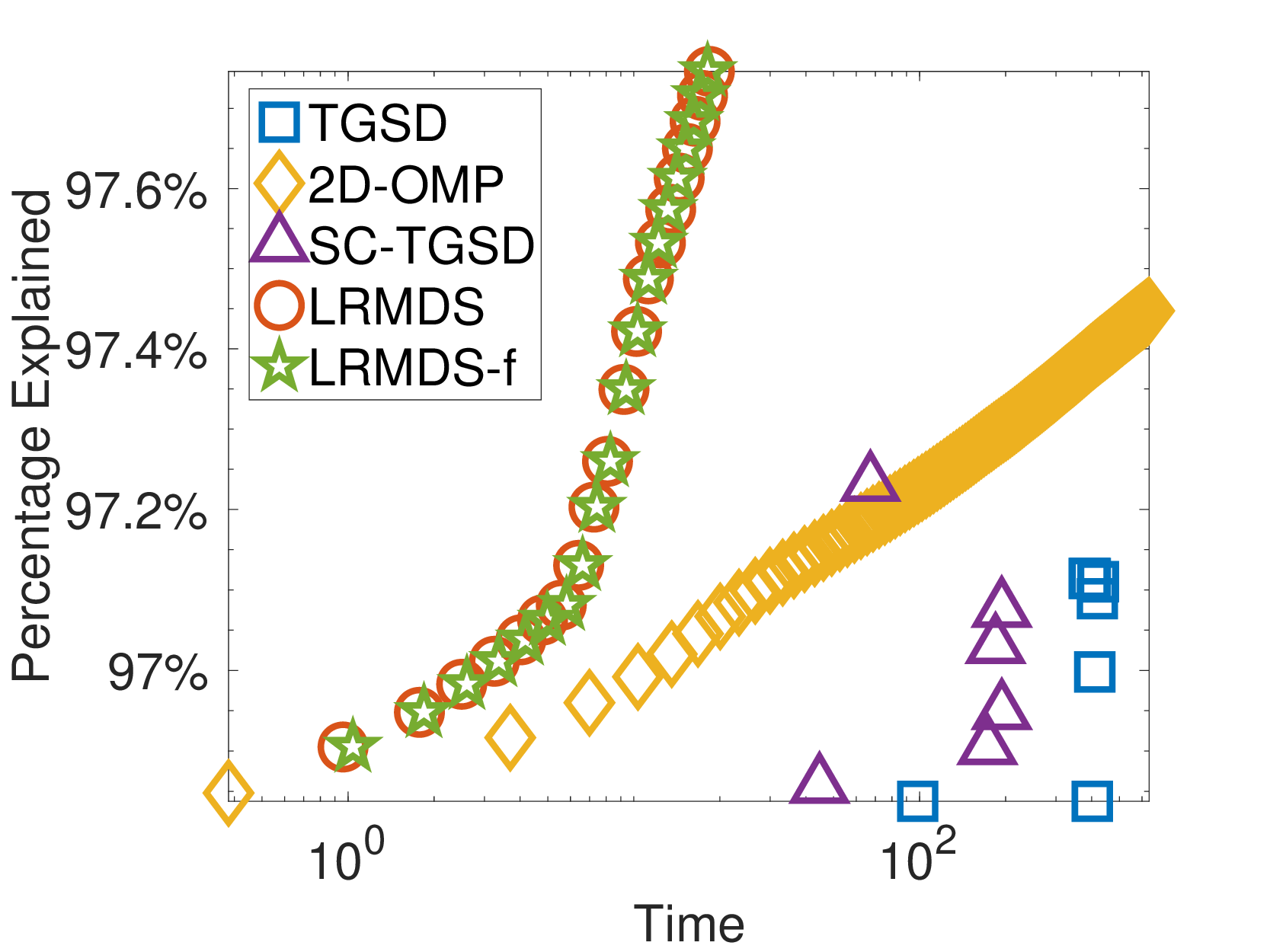}
        \label{fig:GWR_rmse_vs_time}
    }
        \hspace{-0.2in}
    \subfigure [GW+RS: RMSE vs Time]
    {
        \includegraphics[width=0.25\linewidth]{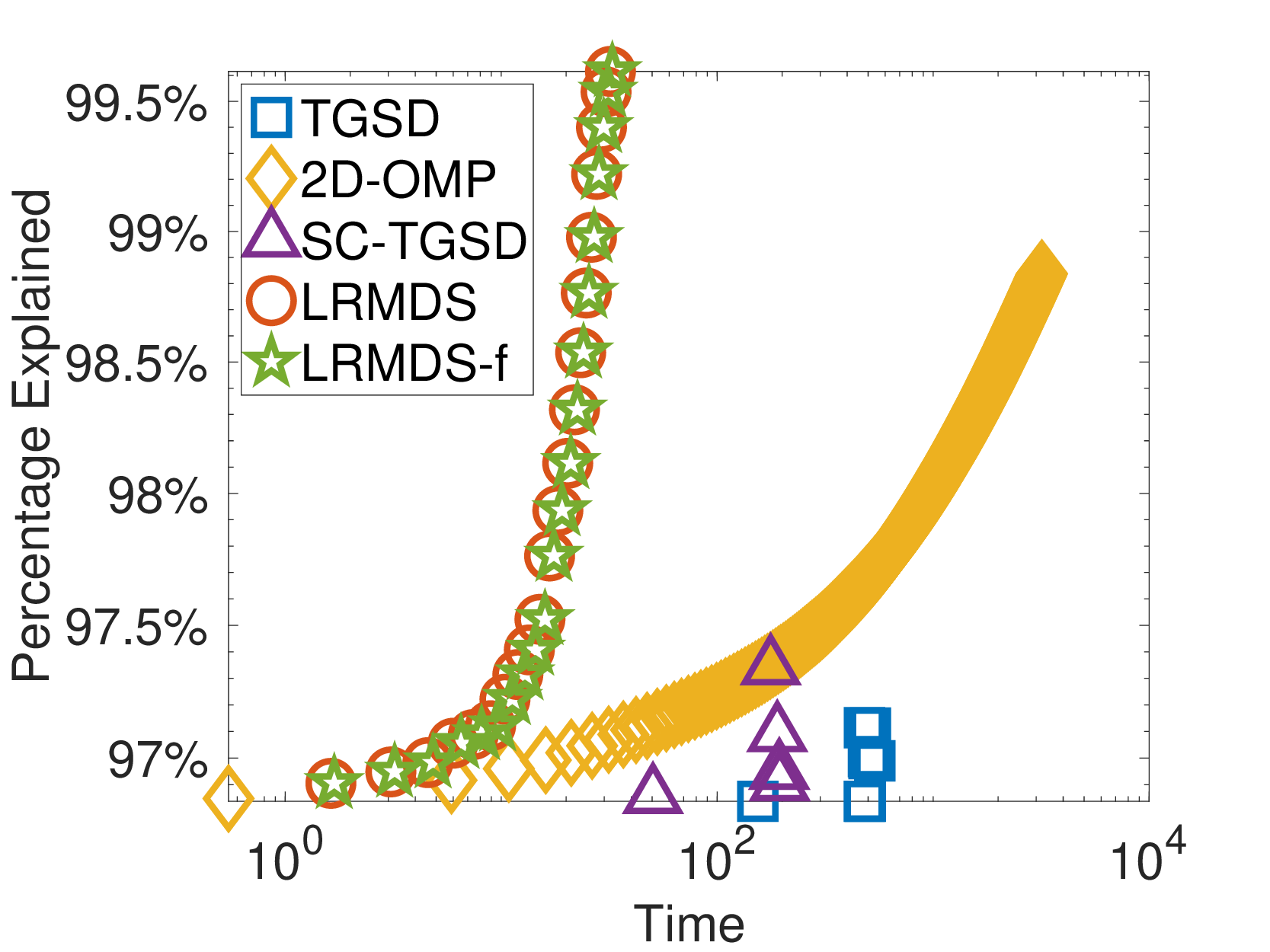}
        \label{fig:GWRS_rmse_vs_time}
    }
        \hspace{-0.2in}
            \subfigure [\scriptsize Ablation: RMSE vs Time]
    {
        \includegraphics[width=0.25\linewidth]{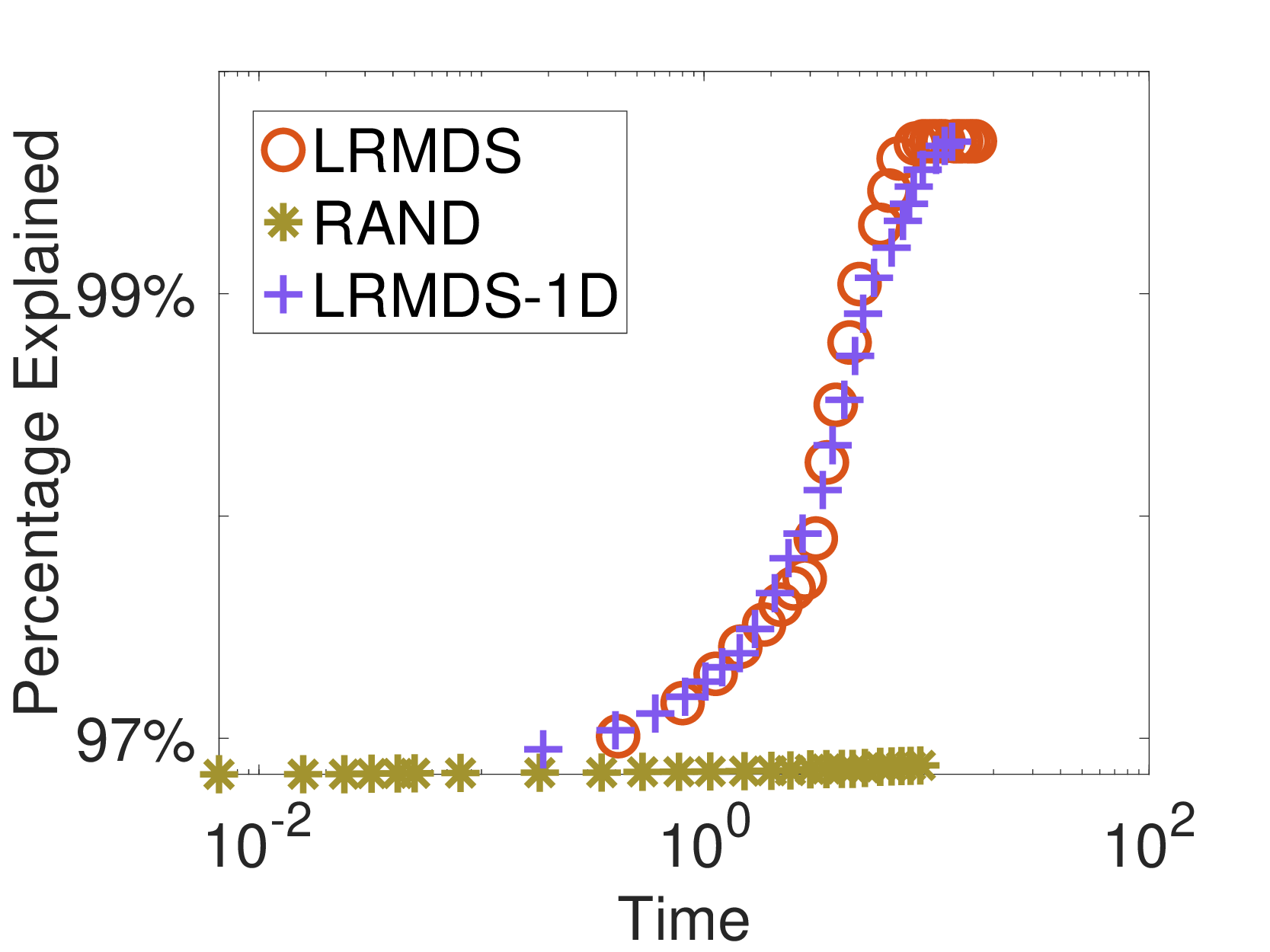}
        \label{fig:ab_rmse_vs_time}
    }
        \hspace{-0.2in}
    \subfigure [Twitch: RMSE v.s. Time]
    {
        \includegraphics[width=0.25\linewidth]{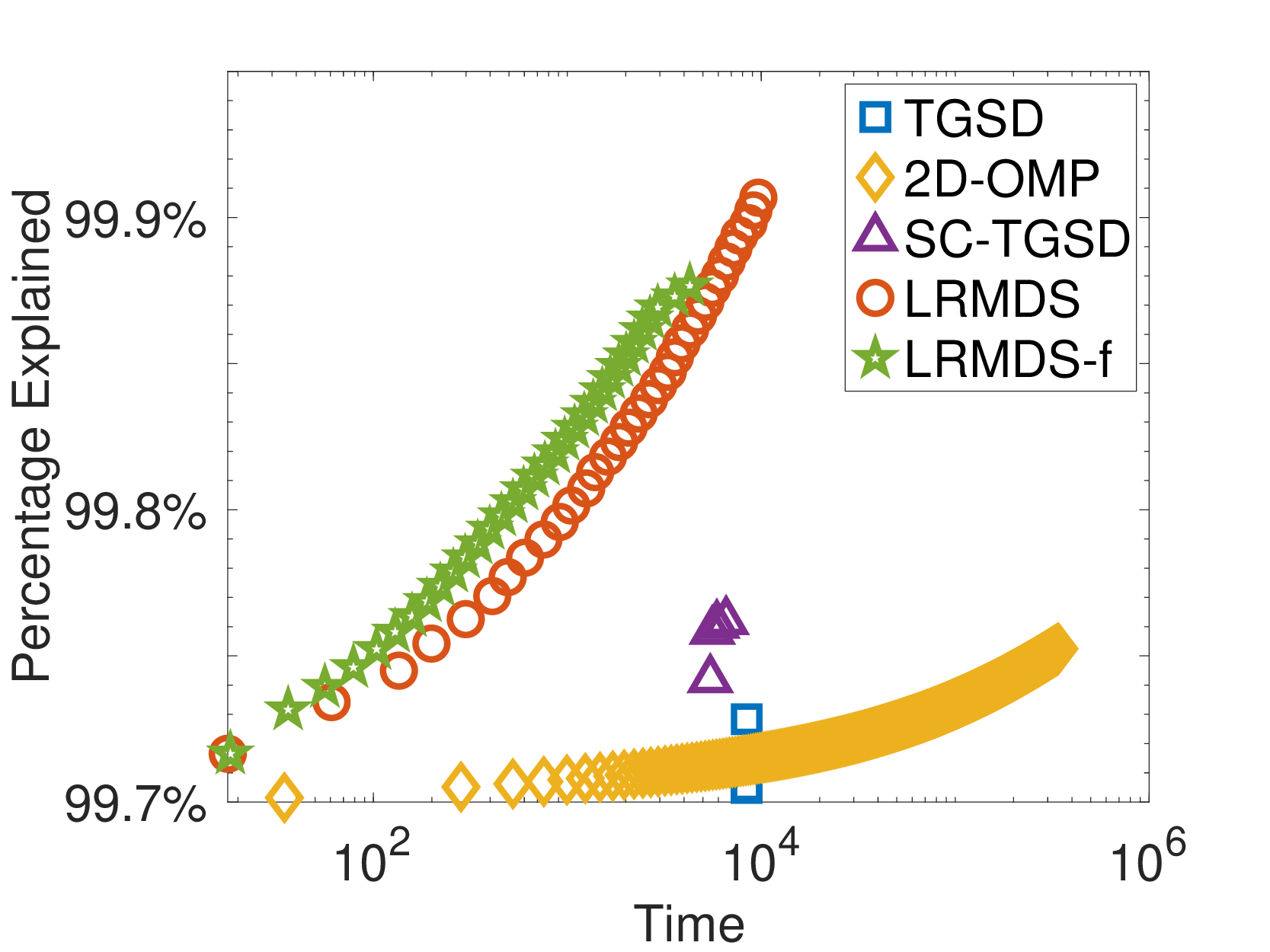}
        \label{fig:twitch_rmse_vs_time}
    }
    \hspace{-0.2in}
    \subfigure [Road: RMSE vs Time]
    {
        \includegraphics[width=0.25\linewidth]{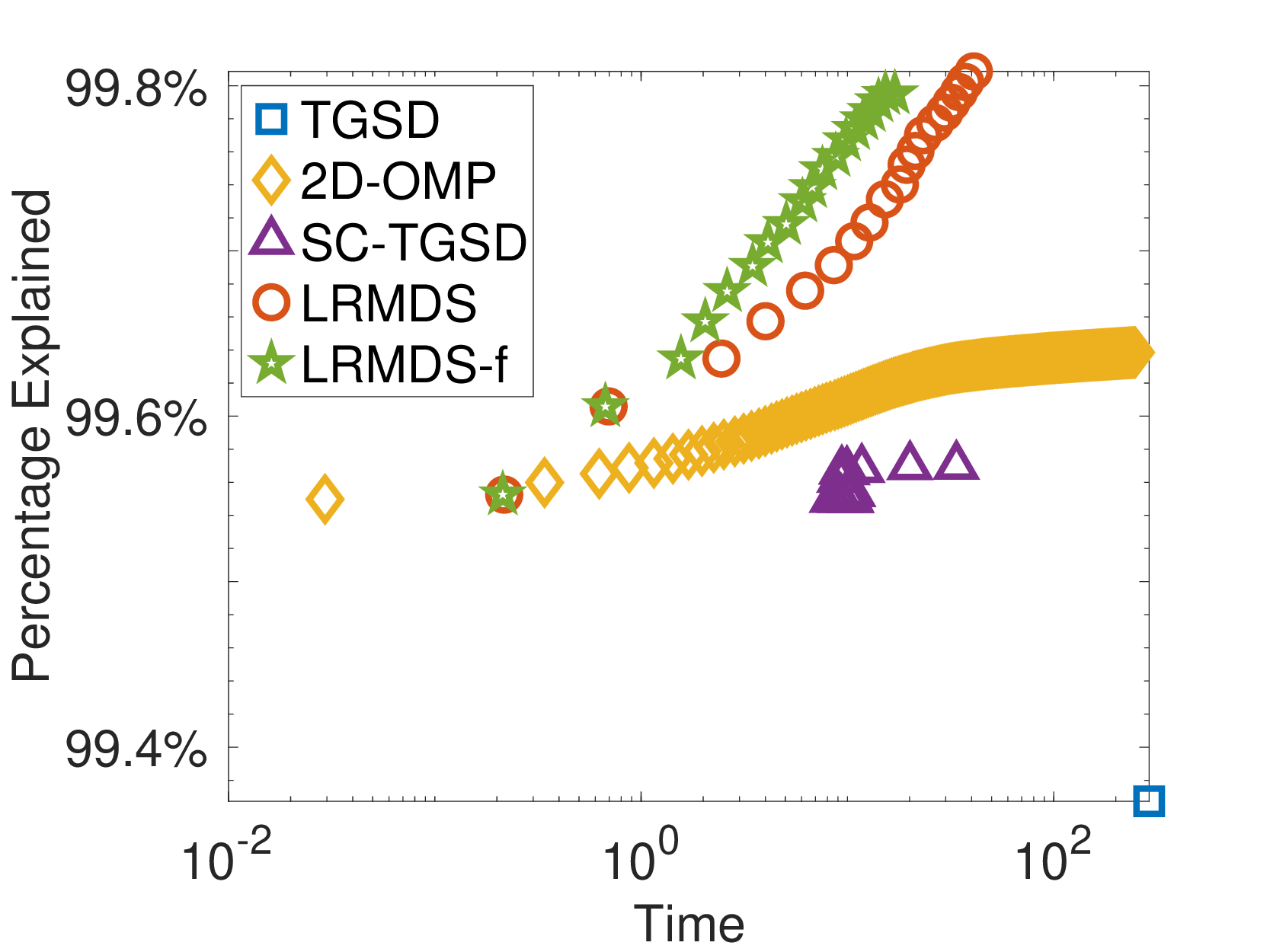}
        \label{fig:road_rmse_vs_time}
    }
    \hspace{-0.2in}
    \subfigure [Wiki: RMSE vs Time]
    {
        \includegraphics[width=0.25\linewidth]{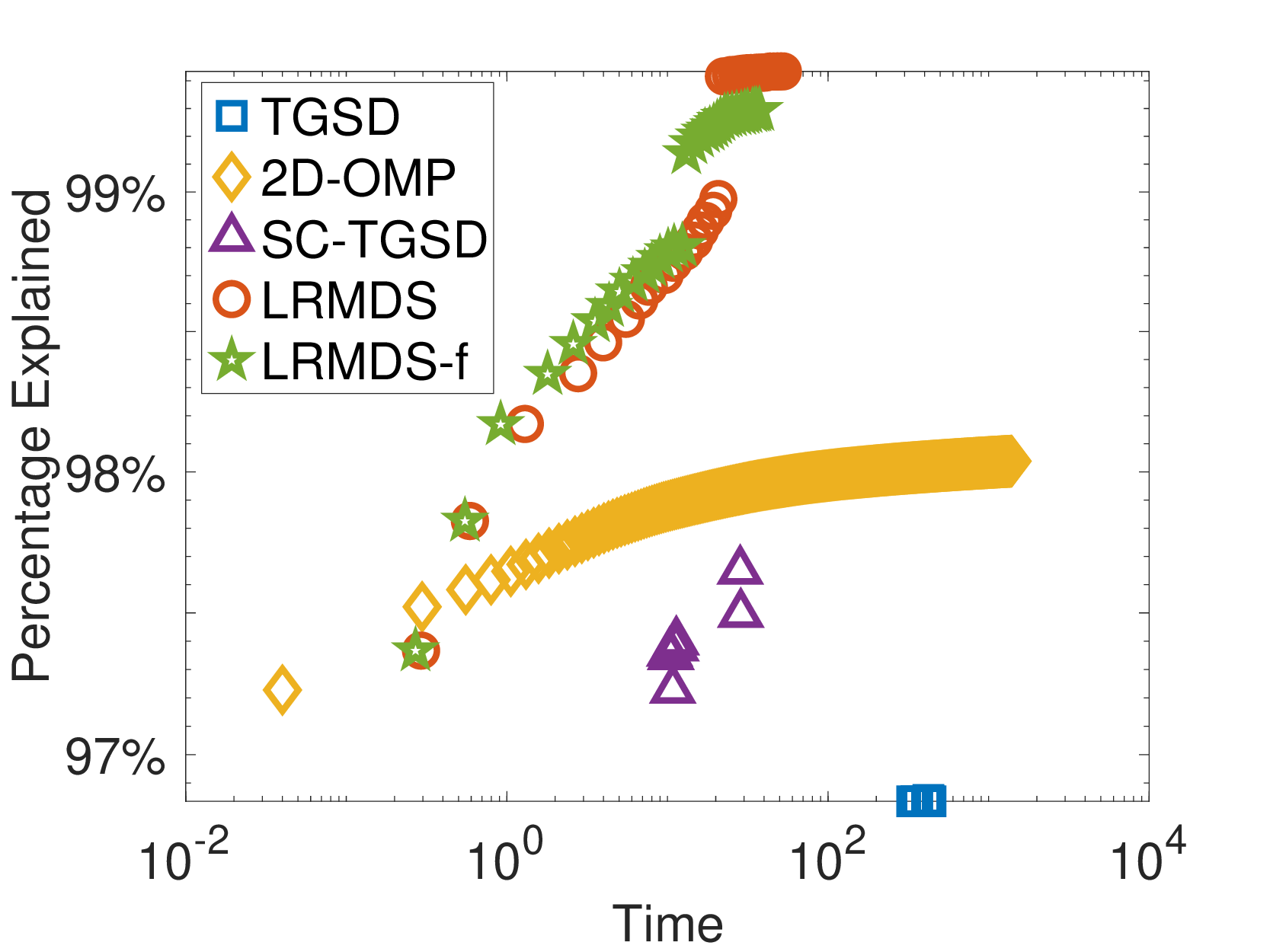}
        \label{fig:wiki_rmse_vs_time}
    }
    \hspace{-0.2in}
    \subfigure [Covid: RMSE vs Time]
    {
        \includegraphics[width=0.25\linewidth]{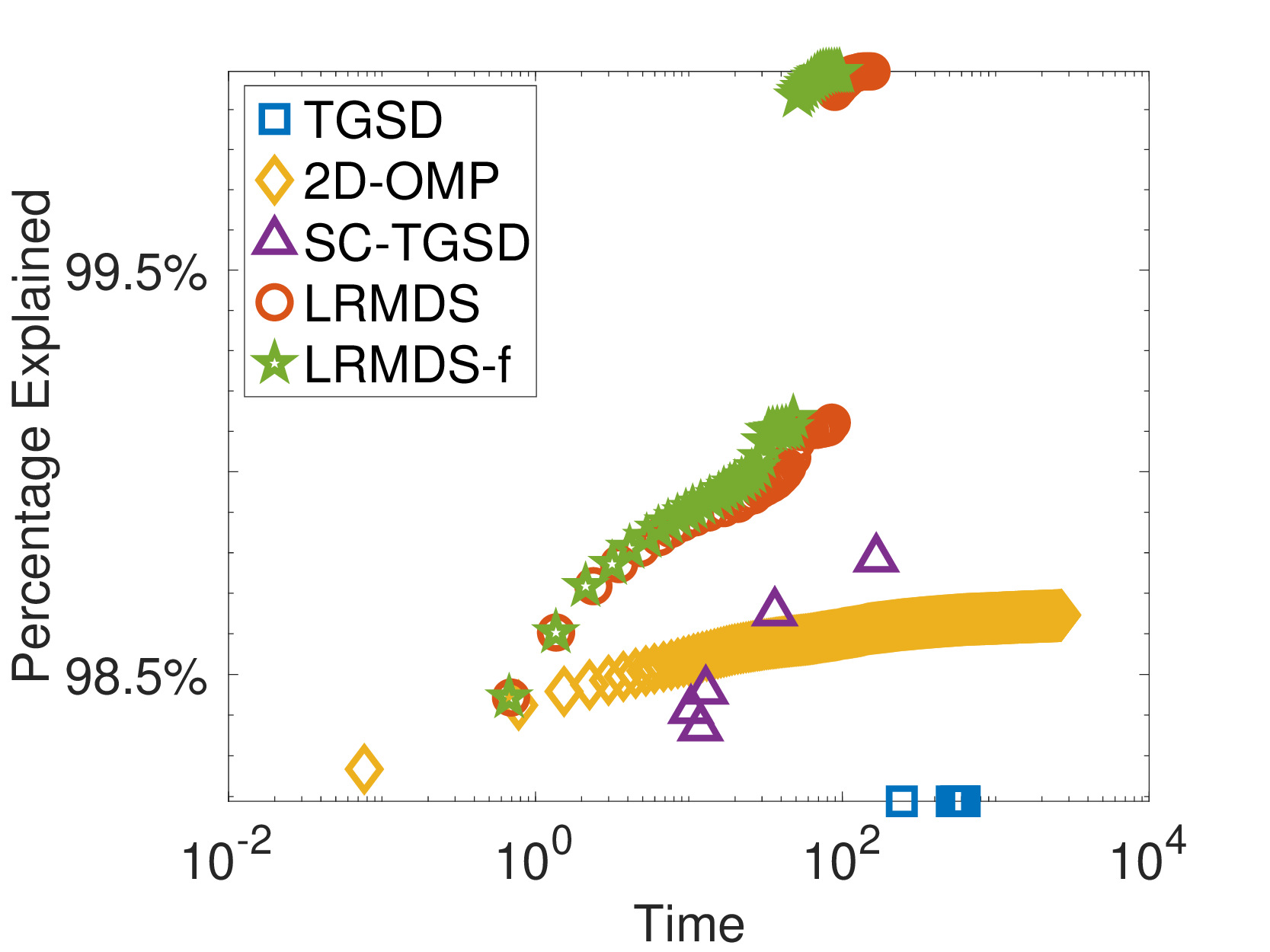}
        \label{fig:covid_rmse_vs_time}
    }

    \caption{ \footnotesize Comparison of competing techniques' percentage of input explained per second of run-time obtained on i) Synthetic data with varying dictionaries \subref{fig:GR_rmse_vs_time}-\subref{fig:GWRS_rmse_vs_time}; ii) Synthetic data as part of the ablation study \subref{fig:ab_rmse_vs_time}; and the real-world  datasets \subref{fig:twitch_rmse_vs_time}-\subref{fig:covid_rmse_vs_time} }.
    \label{fig:rmse_vs_time}
\end{figure*}

We also performed a more detailed analysis of the effect of the number of selected atoms ($k$) and determined that it controls a trade-off between quality and runtime. Given a total target (optimal but unknown) number of atoms $k^*$, when $k << k^*$ our algorithm requires more iterations to converge (involving multiple sparse coding fits with increasing dictionaries). On the other hand, a larger $k$ similar to $k^*$ will result in fewer iterations, however, the algorithm may require more than $k^*$ atoms to achieve the same RMSE. For example, if two ``good'' atoms are similar, then the projections on them will also be similar and high-valued and they will both be selected although potentially redundant. We chose a middle-ground k for our experiments.

\section{Additional Experiments and Figures} \label{sec:sup_add_exp}
We next include additional experiments with accompanying figures which shed light on the performance of \ourmeth and competitors under various settings. First we cover performance under various noise settings in Sec.~\ref{sec:var-noise}, and with various dictionary sizes in Sec.~\ref{sec:var-dic}. Finally, we conclude with plots for synthetic, ablation, and real world experiments which show the percentage of the data matrix explained as a function of running time for various methods.  This percentage is calculated as $1-\frac{||X-X'||_F}{||X||_F}$  where $X'$ is the reconstruction of the input matrix produced by any of the competing techniques at a given time point.

 \subsection{Additional synthetic data experiments: Varying noise level} \label{sec:var-noise}
To examine the sensitivity of competitors to noise in the input signal we vary the magnitude of noise term in our synthetic generation resulting in SNR values in the range $[20, 10, 5, 2, 1]$. Results are presented in Fig.~\ref{fig:snr_trend}. Naturally, as SNR decreases (more noise added), the problem becomes more challenging for all methods resulting in increasing RMSE (Fig.\ref{fig:syn_snr_rmse}). \ourmeth has the highest relative advantage for high SNR and the gap between methods generally decreases for more noisy settings. SC-TGSD exhibits the worst RMSE across regimes since depending on the $\lambda$ parameter used for screening, redundant atoms may survive and similarly some ground truth atoms may be pruned.

In terms of running time (Fig.~\ref{fig:syn_snr_time}), all methods are relatively stable for varying SNR.
TGSD's computational complexity is only dependent on the data input (both dictionaries and signal matrix) which are constant in this experiment. The running times for 2D-OMP and \ourmeth are dependent on these factors plus the number of iterations needed to perform atom selection. We set the number of selected atoms to be constant across regimes resulting in a relatively stable running time. Finally, the fastest method among baselines is SC-TGSD as it thresholds only once and then performs TGSD with the resulting small sub-dictionaries. Its accuracy in terms of atom selection, and thus its RMSE suffers due to its simplicity. Both versions of our method are more than an order of magnitude faster than competitors.

\subsection{Varying dictionary sizes} \label{sec:var-dic}
In Fig.~\ref{fig:dict_types_sup}, we show additional synthetic experiments for various dictionary combinations as well as figures comparing the run-time versus number of atoms selected not shown in the main paper due to space limitations. The trends for smaller dictionaries (non-composite) are largely the same as those found in the larger composite dictionaries with \ourmeth and  \ourmeth-f finding a more accurate representation faster than baselines. Moreover, if one compares the performance across different dictionary combinations, it is clear that \ourmeth only benefits from the inclusion of more dictionaries. As more of the generative atoms becomes available, \ourmeth's improvement curve only becomes steeper as the accuracy improves but running time stays relatively stable. This is in contrast to the next best preforming method 2D-OMP, whose running time increases substantially with each dictionary added. Although \ourmeth and 2D-OMP both exhibit similar atom selection strategy, \ourmeth selects multiple atoms at a time, and \ourmeth's coefficients updates do not require a large matrix inversion involved in 2D-OMP.



\subsection{Representation Quality vs Runtime} \label{sec:time_vs_qual}

In Fig.\ref{fig:rmse_vs_time}, we plot results comparing the representation quality of competing techniques as a function of their runtime. It is important to note that these plots do not represent a new experiment but a re-contextualization of the  results from  the main paper (i.e., Fig.\ref{fig:dict_size_trends}, Fig.\ref{fig:real-test} and Fig.\ref{fig:dict_types_sup}). Both variants of \ourmeth perform well across all datasets, and dictionary settings, but as expected, \ourmeth-f generally produces slightly faster results at the cost of reduced representation quality. 2D-OMP obtains its first representation of the data faster than others, however this representation is of the poorest quality among competitors. In its first iteration, 2D-OMP only selects and solves for a single coefficient making it relatively fast. \ourmeth in contrast, always selects multiple coefficients leading to a higher running time but good initial quality. When more time is allowed for coding/dictionary selection, the representation quality of \ourmeth increases at a much faster rate than that of all competitors. Fig.~\ref{fig:ab_rmse_vs_time} shows a similar comparison for the simplified alternatives of our method from the ablation study. Initially, RAND is the first among competitors to achieve a representation as there is no projection step. However, as time increases its representation barely improves indicating that random atoms are not representative of the data. The other two methods perform similarly, but \ourmeth is able to obtain higher quality representations more quickly as time progresses.

\subsection{Generalization to multi-way data}
In this work, we have included analysis of 2-way (matrix) data and have not conducted experiments with higher order datasets (i.e. 3-way tensors and higher). However, we believe that our framework can be extended to 3D tensor data. Specifically, instead of calculating the 2D projection of the data on two dictionaries, we can find a 3D projection on three dictionaries (a projection tensor), select atoms and update the residual in a similar manner. We plan to investigate the dictionary selection in multi-way scenarios as part of our future research.

\end{document}